  \newcommand{\mathmat}[1]{\mathbfit{#1}}
  \newcommand{\mathvec}[1]{\mathbfit{#1}}
  \newcommand{\mathvecgreek}[1]{\mathbfit{#1}}
  \newcommand{\mathmatgreek}[1]{\mathbfit{#1}}
  \newcommand{\boldupright}[1]{\symbfup{#1}}
  \newcommand{\mathrv}[1]{\mathsfit{#1}}
  \newcommand{\mathrvvec}[1]{\mathbfsfit{#1}}
  \newcommand{\mathrvgreek}[1]{\mathsfit{#1}}
  \newcommand{\mathrvvecgreek}[1]{\mathbfsfit{#1}}
  \newcommand{\mathmat}[1]{\mathbfit{#1}}
  \newcommand{\mathvec}[1]{\mathbfit{#1}}
  \newcommand{\mathvecgreek}[1]{\mathbfit{#1}}
  \newcommand{\mathmatgreek}[1]{\mathbfit{#1}}
  \newcommand{\boldupright}[1]{\mathbf{#1}}
  \newcommand{\mathrv}[1]{\mathsfit{#1}}
  \newcommand{\mathrvvec}[1]{\mathbfsfit{#1}}
  \newcommand{\mathrvgreek}[1]{\mathsfit{#1}}
  \newcommand{\mathrvvecgreek}[1]{\mathbfsfit{#1}}
\declaretheoremstyle[
    spaceabove    = \parsep,
    spacebelow    = \parsep,
    bodyfont      = \normalfont\itshape,
]{theoremsty}
\declaretheorem[name=Theorem,    style=theoremsty, mdframed={style = coloredstyle}]{theorem}
\declaretheorem[name=Corollary,  style=theoremsty, mdframed={style = coloredstyle}]{corollary}
\declaretheorem[name=Lemma,      style=theoremsty, mdframed={style = coloredstyle}]{lemma}
\declaretheorem[name=Theorem,    style=theoremsty, mdframed={style = coloredstyle}, numbered=no]{theorem*}
\declaretheorem[name=Proposition,style=theoremsty, mdframed={style = coloredstyle}, numbered=no]{proposition*}
\declaretheorem[name=Corollary,  style=theoremsty, mdframed={style = coloredstyle}, numbered=no]{corollary*}
\declaretheorem[name=Lemma,      style=theoremsty, mdframed={style = coloredstyle}, numbered=no]{lemma*}
\declaretheorem[name=Open Problem,style=theoremsty, mdframed={style = coloredstyle}, numbered=no]{openproblem*}
\declaretheorem[name=Conjecture,  style=theoremsty, mdframed={style = coloredstyle}, numbered=no]{conjecture*}
\declaretheoremstyle[
    spaceabove=\parsep,
    spacebelow=\parsep,
    bodyfont=\normalfont,
]{normalsty}
\declaretheorem[name=Remark,      style=normalsty]{remark}
\declaretheorem[name=Definition,  style=normalsty, mdframed={style = 
 coloredstyle}]{definition}
\declaretheorem[name=Assumption,  style=normalsty, mdframed={style = 
 coloredstyle} ]{assumption}
\declaretheorem[name=Remark,      style=normalsty, numbered=no]{remark*}
\declaretheorem[name=Definition,  style=normalsty, mdframed={style = coloredstyle}, numbered=no]{definition*}
\declaretheorem[name=Assumption,  style=normalsty, mdframed={style = coloredstyle}, numbered=no]{assumption*}
\providecommand\IfFormatAtLeastTF{\@ifl@t@r\fmtversion}
  \renewcommand*{\backref}[1]{}
  \renewcommand*{\backrefalt}[4]{({\footnotesize%
  \ifcase #1 Not cited.%
    \or page~#2%
    \else pages #2%
  \fi%
  })}
\definecolor{linkcolor}{HTML}{6929C4}
\definecolor{citecolor}{HTML}{0043CE}
\crefname{assumption}{Assumption}{Assumption}
\crefname{condition}{Condition}{Condition}
\crefname{framedtheorem}{Theorem}{Theorems}
\crefname{framedproposition}{Proposition}{Propositions}
\crefname{framedlemma}{Lemma}{Lemmas}
\def\adl@drawiv#1#2#3{%
        \hskip.5\tabcolsep
        \xleaders#3{#2.5\@tempdimb #1{1}#2.5\@tempdimb}%
                #2\z@ plus1fil minus1fil\relax
        \hskip.5\tabcolsep}
\newcommand{\cdashlinelr}[1]{%
  \noalign{\vskip\aboverulesep
           \global\let\@dashdrawstore\adl@draw
           \global\let\adl@draw\adl@drawiv}
  \cdashline{#1}
  \noalign{\global\let\adl@draw\@dashdrawstore
           \vskip\belowrulesep}}
\DeclareMathOperator*{\minimize}{minimize}
\DeclareMathOperator*{\argmax}{arg\,max}
\DeclareMathOperator*{\argmin}{arg\,min} 
\newcommand*\xbar[1]{%
  \hbox{%
    \vbox{%
      \hrule height 0.6pt 
      \kern0.33ex
      \hbox{%
        \kern-0.1em
        \ensuremath{#1}%
        \kern-0.1em
      }%
    }%
  }%
} 
\newcommand{\E}[1]{\mathbb{E}\left[ #1 \right]}
\newcommand{\V}[1]{\mathbb{V}\left[ #1 \right]}
\newcommand{\norm}[1]{{\left\lVert #1 \right\rVert}}
\newcommand{\abs}[1]{{\left| #1 \right|}}
\def\do#1{\csdef{v#1}{\mathvec{#1}}}
\def\do#1{\csdef{v#1}{\mathvecgreek{\csname#1\endcsname}}}
\def\do#1{\csdef{rv#1}{\mathrv{#1}}}
\def\do#1{\csdef{rv#1}{\mathrvgreek{\csname#1\endcsname}}}
\def\do#1{\csdef{rvv#1}{\mathrvvec{#1}}}
\def\do#1{\csdef{rvv#1}{\mathrvvecgreek{\csname#1\endcsname}}}
\def\do#1{\csdef{rvm#1}{\mathrvvecgreek{#1}}}
\def\do#1{\csdef{m#1}{\mathmat{#1}}}
\def\do#1{\csdef{m#1}{\mathmatgreek{\csname#1\endcsname}}}
\newcommand{\inner}[2]{\left\langle #1, #2 \right\rangle}
\pgfplotsset{compat=1.17}
\definecolor{cherry1}{rgb}{0.215686, 0.215686, 0.215686}
\definecolor{cherry2}{rgb}{0.563899, 0.155919, 0.156577}
\definecolor{cherry3}{rgb}{0.747389, 0.178584, 0.180272}
\definecolor{cherry4}{rgb}{0.836168, 0.264453, 0.26819}
\definecolor{cherry5}{rgb}{0.880144, 0.397868, 0.404399}
\definecolor{cherry6}{rgb}{0.911942, 0.567676, 0.576412}
\definecolor{ibmred1}{rgb}{0.643137,0.086274,0.30588}
\definecolor{ibmred2}{rgb}{0.925490,0.325490,0.545098}
\definecolor{ibmred3}{rgb}{0.996078,0.623529,0.764705}
\definecolor{ibmpurple1}{rgb}{0.427451,0.1921569,0.7843137}
\definecolor{ibmpurple2}{rgb}{0.6588235,0.4352941,0.9960784}
\definecolor{ibmpurple3}{rgb}{0.8156863,0.6901961,1.0}
\definecolor{ibmblue1}{rgb}{0.0,0.345098,0.627451}
\definecolor{ibmblue2}{rgb}{0.06666667,0.5764706,0.90588a24}
\definecolor{ibmblue3}{rgb}{0.4235294,0.7921569,0.9960784}
\icmltitlerunning{Provably Scalable BBVI with Structured Variational Families}
\begin{document}

\twocolumn[
\icmltitle{Provably Scalable Black-Box Variational Inference with\\Structured Variational Families}



\icmlsetsymbol{equal}{*}

\begin{icmlauthorlist}


\icmlauthor{Joohwan Ko}{equal,kaist}
\icmlauthor{Kyurae Kim}{equal,penn}
\icmlauthor{Woo Chang Kim}{kaist}
\icmlauthor{Jacob R. Gardner}{penn}
\end{icmlauthorlist}

\icmlaffiliation{penn}{Department of Computer and Information Sciences, University of Pennsylvania, Philadelphia, PA, U.S.A.}
\icmlaffiliation{kaist}{KAIST, Daejeon, South Korea, Republic of}

\icmlcorrespondingauthor{Kyurae Kim}{{\hypersetup{urlcolor=black}{\href{mailto:kyrkim@seas.upenn.edu}{kyrkim@seas.upenn.edu}}}}

\icmlkeywords{Machine Learning, ICML}

\vskip 0.3in
]



\printAffiliationsAndNotice{\icmlEqualContribution} 

\begin{abstract}
  Variational families with full-rank covariance approximations are known not to work well in black-box variational inference (BBVI), both empirically and theoretically. In fact, recent computational complexity results for BBVI have established that full-rank variational families scale poorly with the dimensionality of the problem compared to \textit{e.g.} mean-field families. This is particularly critical to hierarchical Bayesian models with local variables; their dimensionality increases with the size of the datasets. 
  Consequently, one gets an iteration complexity with an explicit \(\mathcal{O}(N^2)\) dependence on the dataset size \(N\).
  In this paper, we explore a theoretical middle ground \textit{between} mean-field variational families and full-rank families: \textit{structured} variational families.
  We rigorously prove that certain scale matrix structures can achieve a better iteration complexity of \(\mathcal{O}\left(N\right)\), implying better scaling with respect to \(N\). 
  We empirically verify our theoretical results on large-scale hierarchical models.
\end{abstract}

\section{Introduction}
A decade has passed since black-box variational inference (BBVI; \citealp{ranganath_black_2014,titsias_doubly_2014}), also known as Monte Carlo variational inference and stochastic gradient variational Bayes, has emerged.
Among various approaches to variational inference (VI; \citealp{jordan_introduction_1999,blei_variational_2017,zhang_advances_2019}), BBVI has been immensely successful in various fields such as statistics, machine learning, signal processing, and many more, thanks to its general black-box nature and scalability to large datasets.

One of the promises of BBVI has been that we can better model correlations in the posterior by using full-rank covariance approximations~\citep{kucukelbir_automatic_2017}.
That way, we should have been able to move away from mean-field~\citep{peterson_mean_1987,hinton_keeping_1993}, or diagonal covariance, approximations traditionally required when performing coordinate-ascent VI (CAVI; see the review by \citealt{blei_variational_2017}).
However, this promise has shown to be elusive.
Despite their theoretical and occasional empirical superiority of ``expressiveness,'' going full-rank does not always improve over mean-field. 
See, for example, the experiments in \S{}3 by \citet{zhang_pathfinder_2022}, \S{}B.1 by \citet{agrawal_advances_2020}, Footnote 2 by \citet{giordano_covariances_2018}.

A common explanation for the underwhelming performance of full-rank approximations has been their excessive gradient variance.
Indeed, recent results by \citet{kim_convergence_2023,domke_provable_2023} establishing the computational complexity of BBVI confirm this intuition.
Due to gradient variance, full-rank covariance approximations result in a \(\mathcal{O}\left(d \kappa^2 \epsilon^{-1}\right)\) iteration complexity for finding an \(\epsilon\)-accurate solution on strongly log-concave posteriors with a condition number of \(\kappa\).
This contrasts with mean-field for which a \(\mathcal{O}\left(\sqrt{d}\right)\) dimensional dependence~\citep{kim_practical_2023} has been established.
The poor \(d\) dependence explains why the full-rank approximation is underwhelming in practice.

Furthermore, for models with \textit{local variables}~\citep{hoffman_stochastic_2015}, the \(\mathcal{O}\left(d\right)\) dimension dependence is especially concerning; for these models, the dimensionality \(d\) scales with the size of the dataset \(N\).
This means that for a model taking \(\Theta\left(N\right)\) datapoint queries to evaluate the joint likelihood, the sample complexity of BBVI will be \(\mathcal{O}\left(N^2 \kappa^2 \epsilon^{-1}\right)\), where \(\kappa\) also increases linearly with \(N\) for Bayesian posteriors as \(\kappa = \mathcal{O}(N)\) due to posterior contraction.
In terms of scalability with respect to \(N\), this \(\mathcal{O}(N^4)\) complexity highlights a clear challenge for BBVI with full-rank variational families.
(Note that, with our alternative proof technique, this can be tightened to an \(\mathcal{O}(N^3)\) explicit dependence.)

While mean-field is scalable, it is a crude approximation as it fails to model correlations in the posterior.
Therefore, a natural question is, ``Can we find a variational family more expressive than mean-field families while maintaining computational tractability?''
In this paper, based on the theoretical framework of~\citet{kim_convergence_2023,domke_provable_2023}, we answer this question by revisiting the idea of \textit{structured scale matrices}.

Structured scale matrices, or more broadly, structured variational families, exploit structural assumptions about the factor graph of the posterior and have been a widely explored concept in VI (~\citealp{saul_exploiting_1995,hoffman_stochastic_2015,tan_gaussian_2018,ranganath_hierarchical_2016,lin_fast_2019}; for a short representative list; see \cref{section:conclusions} for a more extensive list).
However, why and when structured families can outperform full-rank ones has not been rigorously analyzed.
For instance, \citeauthor{tan_gaussian_2018}, who first performed a detailed investigation into structured scales in BBVI, only mentioned that ``unrestricted (full-rank) Gaussian variational approximation~\ldots~can be prohibitively slow for large data since the number of variational parameters scales as the square of the length of \(\vz\).''
However, given the incredible success of stochastic optimization in the over-parameterized regime, simply having fewer parameters cannot fully explain why structured families work well.

Based on the framework of \citet{domke_provable_2023,kim_convergence_2023}, we rigorously prove that, for likelihoods constituting of a finite-sum of \(N\) components, each associated to a single datapoint, structured location-scale families can improve the dimensional dependence of full-rank families.
For Hierarchical Bayesian models with local variables, we provide a scale matrix structure that reduces the order of the dependence on the dataset size \(N\).
For the canonical 1-level hierarchy where the variables can be split into global and local variables, we show that a triangular scale matrix with a bordered block-diagonal structure achieves an iteration complexity of \(\mathcal{O}\left(N\right)\).
This, in turn, corresponds to approximating the posterior with the generative process 
{\setlength{\belowdisplayskip}{1ex} \setlength{\belowdisplayshortskip}{1ex}
\setlength{\abovedisplayskip}{1ex} \setlength{\abovedisplayshortskip}{1ex}
\begin{align*}
  \rvvz   \sim q_{\vlambda}\left(\rvvz\right) \quad\text{and}\quad
  \rvvy_n \sim q_{\vlambda}\left(\rvvy_n \mid \rvvz\right),
\end{align*}
}%
where \(\rvvz\) are the global variables, \(\rvvy_n\) are the local variables belonging to the \(n\)th datapoint, and \(q_{\vlambda}\) is some location-scale distribution.
\citet{agrawal_amortized_2021} called this, where the local variables are assumed to be conditionally independent, a ``hierarchical branched distribution.''

Overall, this work extends the current line of work of proving that the choice of the variational family performs a trade-off between statistical accuracy and computational efficiency~\citep{bhatia_statistical_2022,kim_practical_2023}.

\begin{enumerate}
    \vspace{-2ex}
    \setlength\itemsep{.5ex}
    \item \textbf{General Analysis:} \cref{thm:quadratic_gradient_variance} establishes that, for finite-sum likelihoods, manipulating the structure of the scale matrix in location-scale variational families can improve the dimensional dependence of BBVI.
    \vspace{-1ex}
    \item \textbf{Scalable Solution:} For hierarchical models with local variables, \cref{thm:scalability} establishes that the scale matrix structure previously proposed by~\citet{tan_use_2021,tan_conditionally_2020} achieves an iteration complexity with a better dependence of the dataset size.
    \vspace{-1ex}
    \item \textbf{Analysis of Parameterizations:} Furthermore, among different ways to parameterize structured variational families, \cref{thm:noncentered_param_convexity} shows that ``non-standardized'' parameterizations rule out the convexity of the negative ELBO, unlike the ``standardized'' parameterization of~\citet{tan_use_2021,tan_conditionally_2020}, and is therefore suboptimal.
    
    \vspace{-1ex}
\end{enumerate}

\section{Preliminaries}
\vspace{-1ex}
\paragraph{Notation} 
Random variables are denoted in sans-serif, vectors in bold, and matrices in bold capitals. (\textit{i.e.,} \(\rvx\), \(\rvvx\), and \(\mA\) respectively.) 
Given a vector \(\vx \in \mathbb{R}^d\), we denote its Euclidean norm as \(\norm{\vx}_2 = \sqrt{\inner{\vx}{\vx}} = \sqrt{\vx^{\top}\vx}\). 
For a matrix \(\mA\), we denote its Frobenius norm as \(\norm{\mA}_{\mathrm{F}} = \sqrt{\mathrm{tr}\left(\mA^{\top} \mA\right)}\), where  \({\textstyle\mathrm{tr}\left(\mA\right)=\sum_{i=1}^d A_{ii}}\), and the \(\ell_2\)-operator norm as \(\norm{\mA}_{2,2}\).
Lastly, \(\mathbb{L}_{++}^d\) denotes the set of \(d\)-dimensional lower triangular matrices with strictly positive eigenvalues.

\vspace{-1ex}
\subsection{Variational Inference}
\vspace{-1ex}
Variational inference (VI; \citealp{jordan_introduction_1999,blei_variational_2017,zhang_advances_2019}) is a method for approximating an intractable probability distribution through optimization.
In general, we aim to minimize the Kullback-Leibler (KL; \citealp{kullback_information_1951}) divergence through:
{%
\setlength{\belowdisplayskip}{.5ex} \setlength{\belowdisplayshortskip}{.5ex}
\setlength{\abovedisplayskip}{1.ex} \setlength{\abovedisplayshortskip}{1.ex}
\begin{align*}
    \minimize_{\vlambda \in \Lambda}\; \mathrm{D}_{\mathrm{KL}}\left(q_{\vlambda}, \pi\right),
\end{align*}
}%
\begin{center}
  \vspace{-2ex}
  {\begingroup
  \begin{tabular}{lll}
    where 
    & \(\mathrm{D}_{\mathrm{KL}}\) & is the KL divergence, \\
    & \(\pi\) & is the (target) posterior distribution, and  \\
    & \(q_{\vlambda}\) & is the variational approximation. \\
  \end{tabular}
  \vspace{-2ex}
  \endgroup}
\end{center}

\vspace{-1ex}
\paragraph{Evidence Lower Bound}
In the context of Bayesian inference, we only have access to the joint likelihood \(p\left(\vz, \vx\right) \propto \pi\left(\vz\right)\), where \(\vz\) are the model parameters and \(\vx\) is the data. 
This results in the KL divergence also being intractable due to the intractable normalizer.
Therefore, we instead rely on minimizing a surrogate objective called the negative \textit{evidence lower bound} (ELBO,~\citealp{jordan_introduction_1999}) function \(F(\vlambda)\):
{%
\setlength{\belowdisplayskip}{.5ex} \setlength{\belowdisplayshortskip}{.5ex}
\setlength{\abovedisplayskip}{.5ex} \setlength{\abovedisplayshortskip}{.5ex}
\[
  \minimize_{\vlambda \in \Lambda}\; F\left(\vlambda\right)
  \triangleq
  - \mathbb{E}_{\rvvz \sim q_{\vlambda}} \log p \left(\rvvz, \vx\right) - \mathbb{H}\left(q_{\vlambda}\right),
\]
}%
\begin{center}
  \vspace{-1ex}
  {\begingroup
  \begin{tabular}{lll}
    where 
    & \(p\left(\vz, \vx\right)\) & is the joint likelihood, \\
    & \(\mathbb{H}\)     & is the differential entropy.\\
  \end{tabular}
  \endgroup}
  \vspace{-1ex}
\end{center}
Without loss of generality, we will also separately denote the negative log joint likelihood as 
{%
\setlength{\belowdisplayskip}{1ex} \setlength{\belowdisplayshortskip}{1ex}
\setlength{\abovedisplayskip}{1.5ex} \setlength{\abovedisplayshortskip}{1.5ex}
\[
  \ell\left(\vz\right) \triangleq - \log p\left(\vz, \vx\right).
\]
}%
Under this notation, the ELBO can be represented as a \textit{composite} optimization problem. (See the taxonomization in \S{}2.4 by~\citealt{kim_convergence_2023}.)
{%
\setlength{\belowdisplayskip}{1ex} \setlength{\belowdisplayshortskip}{1ex}
\setlength{\abovedisplayskip}{1.5ex} \setlength{\abovedisplayshortskip}{1.5ex}
\[
  F\left(\vlambda\right)
  =
  f\left(\vlambda\right) + h\left(\vlambda\right),
\]
}%
where 
\(
   f(\vlambda) \triangleq \mathbb{E}_{\rvvz \sim q_{\vlambda}} \ell \left(\rvvz\right),
\)
is called the \textit{energy} and \(h\left(\vlambda\right) \triangleq -\mathbb{H}\left(q_{\vlambda}\right)\) is an entropic regularizer.

\vspace{-1ex}
\paragraph{Finite-Sum Likelihood}
In this work, we will focus on log(-joint) likelihoods of the structure
{
\setlength{\belowdisplayskip}{.5ex} \setlength{\belowdisplayshortskip}{.5ex}
\setlength{\abovedisplayskip}{.5ex} \setlength{\abovedisplayshortskip}{.5ex}
\begin{equation}
  \ell\left(\vz\right) = {\sum_{n=1}^N} \ell_n\left(\vz\right)\label{eq:finitesum}
\end{equation}
}%
for \(n = 1, \ldots, N\), where each component likelihood \(\ell_n\).
It is common for each component to use only subsets of the variables constituting \(\vz\).

\vspace{-1ex}
\subsection{Variational Family}\label{section:family}
\vspace{-1ex}
We focus on the location-scale variational family:
\begin{definition}[\textbf{Location-Scale Family}]\label{def:family}
  Let \(\varphi\) be some \(d\)-variate distribution.
  Then, \(q_{\vlambda}\) that can be equivalently represented as
{%
\setlength{\belowdisplayskip}{-.5ex} \setlength{\belowdisplayshortskip}{-.5ex}
\setlength{\abovedisplayskip}{-.5ex} \setlength{\abovedisplayshortskip}{-.5ex}
  \begin{alignat*}{2}
    \rvvz \sim q_{\vlambda}  \quad\Leftrightarrow\quad &\rvvz \stackrel{d}{=} \mathcal{T}_{\vlambda}\left(\rvvu\right); \quad \rvvu \sim  \varphi,
  \end{alignat*}
  }%
  where \(\stackrel{d}{=}\) is equivalence in distribution, is said to be part of the location-scale family generated by the base distribution \(\varphi\) and the reparameterization function \(\mathcal{T} : \Lambda \times \mathbb{R}^d \rightarrow \mathbb{R}^{d}\) defined as
{
\setlength{\belowdisplayskip}{.5ex} \setlength{\belowdisplayshortskip}{.5ex}
\setlength{\abovedisplayskip}{1.ex} \setlength{\abovedisplayshortskip}{1.ex}
  \begin{align*}
    &\mathcal{T}_{\vlambda}\left(\vu\right) \triangleq \mC \vu + \vm
  \end{align*}
}%
  with \(\vlambda \in \Lambda \subseteq \mathbb{R}^p\) containing the parameters for forming the location \(\vm \in \mathbb{R}^{d}\) and scale \(\mC \in \mathbb{R}^{d \times {d}}\).
\end{definition}
\vspace{-2ex}
Since the reparameterization function is differentiable, this enables the use of the \(M\)-sample \textit{reparameterization gradient}~\citep{kingma_autoencoding_2014,rezende_stochastic_2014,titsias_doubly_2014}, which is an unbiased estimator of the gradient of the energy \(\nabla f \left(\vlambda\right)\), defined as:
{
\setlength{\belowdisplayskip}{1ex} \setlength{\belowdisplayshortskip}{1ex}
\setlength{\abovedisplayskip}{1ex} \setlength{\abovedisplayshortskip}{1ex}
  \begin{alignat}{2}
    \rvvg_M\left(\vlambda\right) 
    \triangleq
    \frac{1}{M} \sum^{M}_{m=1} 
    \nabla_{\vlambda} \ell(\mathcal{T}_{\vlambda}(\rvvu_m)),
    \quad
    \rvvu_1, \ldots, \rvvu_m \stackrel{\textit{i.i.d.}}{\sim} \varphi.
    \label{eq:M_sample_grad_estimator}
  \end{alignat}
}%
The reparameterization gradient is empirically known to result in lower variance compared to other gradient estimators \citep{mohamed_monte_2020, xu_variance_2019} and is presently \textit{de-facto} standard for BBVI.

For the base distribution \(\varphi\), the choice of distribution generates various other families~\citep{titsias_doubly_2014} used in practice: mean-field Gaussian, full-rank Gaussian, Laplace, Student-\(t\), and many more.
For our analysis, we impose mild assumptions on the base distribution \(\varphi\):
\begin{assumption}[\textbf{Base Distribution}]\label{assumption:symmetric_standard}
  \(\varphi\) is a \(d\)-dimensional distribution such that \(\rvvu \sim \varphi\) and \(\rvvu = \left(\rvu_1, \ldots, \rvu_d \right)\) with indepedently and identically distributed components.
  Furthermore, \(\varphi\) is
  \begin{enumerate*}[label=\textbf{(\roman*)}]
      \setlength\itemsep{-1.ex}
      \item symmetric and standardized such that \(\mathbb{E}\rvu_i = 0\), \(\mathbb{E}\rvu_i^2 = 1\), \(\mathbb{E}\rvu_i^3 = 0\), and 
      \item has finite kurtosis \(\mathbb{E}\rvu_i^4 = k_{\varphi} < \infty\).
  \end{enumerate*}
\end{assumption}
\vspace{-1ex}
These assumptions are satisfied by most practically used location-scale families.
In summary:
\begin{assumption}\label{assumption:variation_family}
  The variational family is the location-scale family formed by \cref{def:family} with the base distribution \(\varphi\) satisfying \cref{assumption:symmetric_standard}.
\end{assumption}

\vspace{-3ex}
\subsection{Scale Parameterization}\label{section:parameterization}
\vspace{-1ex}
The parameterization of the scale matrix in location-scale families \(\mC\) can result in vastly different statistical and computational performance results.
In principle, the only restriction is that it needs to result in a proper covariance matrix such that \(\mC \mC^{\top} \succ 0\).
However, how many low-rank factors we include into \(\mC\) corresponds to a statistical-computational trade-off~\citep{bhatia_statistical_2022,ong_gaussian_2018}.
A common choice is to restrict \(\mC\) to be a diagonal matrix, resulting in the \textit{mean-field} approximation~\citep{peterson_mean_1987,hinton_keeping_1993}.

\vspace{-2ex}
\paragraph{Triangular Scale}
While mean-field results in fast convergence~\citep{kim_practical_2023} and stable optimization, it is a crude approximation as it ignores correlations in the posterior.
Therefore, we are interested in scale matrices that are more complex than mean-field.
For this, we will first restrict our interest to triangular scale matrices with strictly positive eigenvalues such that \(\mC \in \mathbb{L}_{++}^d\).
Compared to other choices, such as the ``matrix square-root'' analyzed by~\citet{domke_provable_2023}, this ``Cholesky'' parameterization has the following benefits:
\begin{enumerate}[label={(\alph*)}]
    \vspace{-2ex}
    \setlength\itemsep{-1.ex}
    \item It results in lower gradient variance~\citep{kim_practical_2023}, 
    \item the entropy term \(h\) can be computed in \(\Theta\left(d\right)\) time, 
    \item the positive definiteness of \(\mC\) can be enforced only by manipulating the \(d\) diagonal elements, and\label{item:scale_diagonal}
    \item the conditional dependence between the coordinates can easily be manipulated.\label{item:coordinate_dependence}
    \vspace{-2ex}
\end{enumerate}
\vspace{-1ex}
Naturally, \cref{item:coordinate_dependence} is essential in the context of structured variational families.


\vspace{-1ex}
\subsection{Stochastic Proximal Gradient Descent}\label{section:proximal_sgd}
\vspace{-1ex}
While \(F\left(\vlambda\right)\) is commonly optimized using stochastic gradient descent (SGD; \citealp{robbins_stochastic_1951,bottou_online_1999,nemirovski_robust_2009}). 
In this work, we will focus on a ``proximal'' variant of SGD, which has favorable theoretical properties~\citep{domke_provable_2023,domke_provable_2020} compared to projected SGD, which is more commonly used in practice.

\vspace{-2ex}
\paragraph{Proximal SGD}
Proximal SGD, or stochastic proximal gradient descent, aims to minimize a composite objective expressed as a sum \(f + h\), where \(f\) is smooth and convex, while \(h\) might convex as well but non-smooth.
Given initial parameters \(\vlambda_0\) and a step size schedule \((\gamma_t)_{t=0}^{T-1}\), proximal SGD repeats the update:
{\setlength{\belowdisplayskip}{1ex} \setlength{\belowdisplayshortskip}{1ex}
\setlength{\abovedisplayskip}{1ex} \setlength{\abovedisplayshortskip}{1ex}
\[
    \vlambda_{t+1} = \textrm{prox}_{\gamma_t, h} \left(\vlambda_t - \gamma_t \, \rvvg_{M}\left(\vlambda_t\right) \right)
\]
}%
until convergence, where \(\rvvg_M\left(\vlambda_t\right)\) is a stochastic estimate of \(\nabla f\) not \(\nabla F\), and \(\mathrm{prox}\) is a \textit{proximal operator} defined as:
{\setlength{\belowdisplayskip}{1ex} \setlength{\belowdisplayshortskip}{1ex}
\setlength{\abovedisplayskip}{1ex} \setlength{\abovedisplayshortskip}{1ex}
\[
    \textrm{prox}_{\gamma_t, h} \triangleq \argmin_{\vv} \left[ h(\vv) + \frac{1}{2\gamma_t} \lVert \vlambda - \vv \rVert^2_2 \right].
\]
}%
The precise proximal operator we use is that by \citet{domke_provable_2020} and discussed in \cref{section:proxsgd_bbvi_details}.

\vspace{-1ex}
\paragraph{Proximal SGD in BBVI}
Proximal SGD has been proposed for variational inference under different motivations~\citep{altosaar_proximity_2018,khan_kullbackleibler_2015,khan_faster_2016,diao_forwardbackward_2023}.
However, \citet{domke_provable_2020} first suggested proximal SGD to circumvent the fact that the ELBO is non-smooth due to the entropic regularizer \(h\) being non-smooth.
While the non-smoothness of the ELBO can be solved through alternative means--\textit{e.g.}, projected SGD, nonlinear parameterizations--projected SGD necessitate the choice of a restricted domain, and nonlinear parameterizations result in slower convergence rate~\citep{kim_convergence_2023}.
Furthermore, projected and proximal SGD perform very similarly both in theory~\citep{domke_provable_2023} and in practice.
As such, while we focus on proximal SGD, most guarantees will also hold with projected SGD with the projection operator considered by~\citet{kim_linear_2024}. 


\vspace{-1ex}
\section{Theoretical Analysis}\label{section:theory}
\vspace{-1ex}
\subsection{Fundamental Limits of Being Full-Rank}\label{section:limitation_fullrank}
\vspace{-1ex}
First, we will discuss what we currently know about the dimensional dependence of BBVI.
We will also illustrate exactly \textit{where} the dimensional dependence is coming from.
This should make the solution more clear. 


\vspace{-1ex}
\paragraph{Iteration Complexity of Full-Rank Families}
\citet{domke_provable_2023,kim_convergence_2023} show that, with stochastic proximal gradient descent, full-rank variational families, and the 1-sample reparameterization gradient estimator, the iteration complexity is as follows:
\begin{theorem}[\citealp{domke_provable_2023,kim_convergence_2023}]\label{eq:domke_complexity_bound}
Let \(\ell\) be \(\mu\)-strongly convex and \(L\)-smooth.
Then, the iteration complexity of being \(\epsilon\)-close to the global minimizer with proximal SGD BBVI is
{%
\setlength{\belowdisplayskip}{1ex} \setlength{\belowdisplayshortskip}{1ex}
\setlength{\abovedisplayskip}{1ex} \setlength{\abovedisplayshortskip}{1ex}
\[
  \mathcal{O}\left(d \, \kappa^2 \, \frac{1}{\epsilon} \log \left( \Delta^2_0 \frac{1}{\epsilon} \right) \right),
\]
}%
where \(\kappa = L / \mu\), \(\Delta_0 = \norm{\vlambda_0 - \vlambda^*}_2\) is the distance between the initial point \(\vlambda_0\) and the global optimum \(\vlambda^* = \argmin_{\vlambda \in \Lambda} F\left(\vlambda\right) \).
\end{theorem}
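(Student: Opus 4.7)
The plan is to reduce the theorem to the standard iteration complexity of stochastic proximal gradient descent on a strongly convex composite objective $F = f + h$. This reduction requires three ingredients: (i) convexity and $L_f$-smoothness of the energy $f$, (ii) $\mu_F$-strong convexity of the composite $F$, and (iii) a quadratic variance bound of the form
\begin{equation*}
\E{\norm{\rvvg_M(\vlambda) - \nabla f(\vlambda)}_2^2} \leq A + B\,\norm{\vlambda - \vlambda^*}_2^2,
\end{equation*}
together with explicit constants $A$, $B$, $L_f$, $\mu_F$ controlled in terms of $L$, $\mu$, $d$, and $k_\varphi$. Once these are in hand, plugging them into the proximal SGD guarantee of \citet{domke_provable_2020} yields the claimed complexity.

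For (i)--(ii), I would use the affine reparameterization $\mathcal{T}_\vlambda(\vu) = \mC\vu + \vm$ to transfer structural properties of $\ell$ to $f(\vlambda) = \E{\ell(\mathcal{T}_\vlambda(\rvvu))}$. Convexity of $f$ is immediate from the affine composition rule. For smoothness and strong convexity, I would compute the Hessian blockwise: the location--location block reduces to $\E{\nabla^2 \ell(\mathcal{T}_\vlambda(\rvvu))}$, which directly inherits the $\mu, L$ eigenvalue bounds on $\ell$; the scale--scale block, contracted against any perturbation $\Delta \mC$, evaluates to $\E{(\Delta\mC\rvvu)^\top \nabla^2\ell(\mathcal{T}_\vlambda(\rvvu))\,(\Delta\mC\rvvu)}$, which together with $\E{\rvvu\rvvu^\top} = \mI$ from \cref{assumption:symmetric_standard} yields bounds between $\mu\norm{\Delta\mC}_{\mathrm{F}}^2$ and $L\norm{\Delta\mC}_{\mathrm{F}}^2$; cross-block contributions are controlled using the symmetry of $\varphi$. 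Strong convexity of $F$ then follows since $h$ is convex.

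The main obstacle is step (iii), the variance bound, which is where the explicit factor of $d$ enters and must be tracked precisely. The key observation is that the scale-block of the reparameterization gradient is $\nabla \ell(\mathcal{T}_\vlambda(\rvvu))\,\rvvu^\top$, whose squared Frobenius norm factorizes as $\norm{\nabla \ell(\mathcal{T}_\vlambda(\rvvu))}_2^2 \cdot \norm{\rvvu}_2^2$. Bounding $\norm{\nabla \ell(\mathcal{T}_\vlambda(\rvvu))}_2^2$ via $L$-smoothness around the minimizer of $\ell$ and applying Cauchy--Schwarz to separate $\vlambda$-dependence from $\rvvu$-dependent noise, I would take expectations using $\E{\norm{\rvvu}_2^2} = d$ and $\E{\norm{\rvvu}_2^4} \leq d^2 + d(k_\varphi - 1)$ from \cref{assumption:symmetric_standard}. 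This yields $A = \mathcal{O}(L^2 d)$ and $B = \mathcal{O}(L^2)$. The delicate point is avoiding a naive $d^2$ scaling that would arise from bounding the $d^2$ entries of $\mC$ individually; the isotropy of $\varphi$ and the outer-product structure of the scale-block gradient are what collapse the dependence to linear in $d$.

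Finally, substituting $L_f = \mathcal{O}(L)$, $\mu_F = \mathcal{O}(\mu)$, $A = \mathcal{O}(L^2 d)$, $B = \mathcal{O}(L^2)$ into the standard proximal SGD iteration bound for strongly convex composites,
\begin{equation*}
T \;=\; \mathcal{O}\!\left( \left(\frac{L_f}{\mu_F} + \frac{A}{\mu_F^2\,\epsilon}\right) \log \frac{\Delta_0^2}{\epsilon} \right),
\end{equation*}
one obtains $T = \mathcal{O}(d\,\kappa^2\,\epsilon^{-1}\log(\Delta_0^2/\epsilon))$, as desired. I expect the bookkeeping around the quadratic variance bound and, in particular, the clean separation into the $A$ (noise-at-optimum) and $B$ (strong-growth) pieces to be the only step requiring real care; the remaining arguments are either standard location-scale moment calculations or invocations of black-box proximal SGD theory.
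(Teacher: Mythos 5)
Your proposal is correct and follows essentially the same route as the paper's general analysis (which recovers this cited result as the special case \(d^* = d\)): transfer smoothness and strong convexity of \(\ell\) to the energy via the affine reparameterization, establish an \(\mathcal{O}(L^2 d)\) gradient-variance bound by exploiting the outer-product structure \(\nabla \ell(\mathcal{T}_{\vlambda}(\vu))\,\vu^{\top}\) of the scale-block gradient together with the isotropy of \(\varphi\), and feed the resulting constants into black-box strongly convex proximal SGD theory. The only differences are bookkeeping: the paper phrases the variance condition as expected smoothness in the Bregman divergence of \(f\) plus bounded variance at the optimum rather than your \(A + B\norm{\vlambda-\vlambda^*}_2^2\) form, and it computes the fourth-moment expectations exactly instead of via Cauchy--Schwarz, which only changes constants (note also that your \(B\) should carry a factor of \(d\), though this only affects the \(\epsilon\)-independent burn-in term and not the claimed complexity).
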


\vspace{-1ex}
\paragraph{Local Variables}
In probabilistic modeling, some Hierarchical models have latent variables, \(\vy_n\) for \(n=1, \ldots, N\), unique to each datapoint \(\vx_n\)~\citep{hoffman_stochastic_2015}.
For example, in mixture and mixed-membership models, each datapoint has a latent variable indicating from which mixture component it was generated.
In state space models, at each time point \(t\), the state is often not observed directly and is a latent variable local to \(t\).

More abstractly, consider a model with local variables \(\vy_1, \ldots, \vy_N\) and a global variable \(\vz\) such that \(\vy_n \in \mathbb{R}^{d_y}\) and \(\vz \in \mathbb{R}^{d_z}\).
The total dimension of the model is
{%
\setlength{\belowdisplayskip}{1ex} \setlength{\belowdisplayshortskip}{1ex}
\setlength{\abovedisplayskip}{1ex} \setlength{\abovedisplayshortskip}{1ex}
\[
  d = N d_y + d_z.
\]
}%
Then, for these models, the complexity ends up with a dependence on the size of the dataset.
\begin{corollary*}[Informal]
  For a Hierarchical Bayesian model with \(\mathcal{O}\left(N\right)\) variables, a \(\mu\)-strongly log-concave posterior, and \(L_n\)-smooth component likelihoods \(\ell_n\) for \(n= 1, \ldots, N\) the iteration complexity is
{%
\setlength{\belowdisplayskip}{1ex} \setlength{\belowdisplayshortskip}{1ex}
\setlength{\abovedisplayskip}{1ex} \setlength{\abovedisplayshortskip}{1ex}
  \[
    \mathcal{O}\,\left( N^3 \, \frac{L_{\mathrm{max}}^2}{\mu^2} \frac{1}{\epsilon} \log \left( \Delta_0^2 \frac{1}{\epsilon} \right) \right),
  \]
  }%
  where \(L_{\mathrm{max}} = \max\left\{L_1, \ldots, L_n\right\}\).
\end{corollary*}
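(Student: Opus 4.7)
The plan is to obtain the corollary as a routine specialization of Theorem~\ref{eq:domke_complexity_bound}, tracking how the three ingredients $d$, the smoothness $L$ of $\ell$, and the strong-convexity constant $\mu$ each scale with the dataset size $N$. Since Theorem~\ref{eq:domke_complexity_bound} already delivers the $\mathcal{O}(d\,\kappa^2 \epsilon^{-1} \log(\Delta_0^2/\epsilon))$ rate, no new optimization argument is required; everything reduces to correctly accounting for $d$ and $\kappa$.

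First, I would pin down $d$. Since the model has $d_y$-dimensional local variables $\vy_n$ for $n=1,\ldots,N$ together with a $d_z$-dimensional global variable $\vz$, with $d_y$ and $d_z$ treated as constants in $N$, we immediately get $d = N d_y + d_z = \Theta(N)$. The strong-convexity constant $\mu$ of $\ell$ is then given directly by hypothesis, since $\mu$-strong log-concavity of the posterior is equivalent to $\mu$-strong convexity of $\ell = -\log p(\cdot,\vx)$ up to an additive constant. The quantitative step is bounding the global smoothness $L$ via the finite-sum structure \eqref{eq:finitesum}. Each $\ell_n$ being $L_n$-smooth means its Hessian has operator norm at most $L_n$, even when viewed as a function on all of $\mathbb{R}^d$, because the zero blocks outside the coordinates it depends on do not contribute to $\norm{\cdot}_{2,2}$. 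Subadditivity of the operator norm then gives $\norm{\nabla^2 \ell(\vz)}_{2,2} \le \sum_{n=1}^N L_n \le N L_{\max}$, so $\ell$ is $(N L_{\max})$-smooth and $\kappa \le N L_{\max}/\mu$. Substituting $d = \Theta(N)$ and $\kappa^2 = \mathcal{O}(N^2 L_{\max}^2/\mu^2)$ into Theorem~\ref{eq:domke_complexity_bound} yields the advertised $\mathcal{O}(N^3 (L_{\max}/\mu)^2 \epsilon^{-1} \log(\Delta_0^2/\epsilon))$ rate.

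The main obstacle is not technical — the derivation is essentially accounting — but interpretive: the argument rests on the crude bound $L \le N L_{\max}$, which is exactly the step that the structured and bordered-block scale-matrix analyses promised elsewhere in the paper must sharpen to reach the $\mathcal{O}(N)$ iteration complexity. In that sense, the value of the corollary is precisely in pinpointing the pair $d = \Theta(N)$, $L = \Theta(N L_{\max})$ as the bottleneck of the full-rank analysis, thereby motivating the structured-scale construction that follows.
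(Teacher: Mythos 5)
Your proposal is correct and matches the paper's own argument, which likewise obtains the corollary by plugging $d = N d_y + d_z = \Theta(N)$ and $\kappa = N L_{\mathrm{max}}/\mu$ (from the finite-sum smoothness bound $L \le \sum_n L_n \le N L_{\mathrm{max}}$) into Theorem~\ref{eq:domke_complexity_bound}. Your extra justification of the subadditivity step and the closing remark identifying $d = \Theta(N)$, $L = \Theta(N L_{\mathrm{max}})$ as the bottleneck are consistent with the paper's discussion.
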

For Bayesian models, the smoothness of the full posterior is \(N L_{\mathrm{max}}\) since \cref{eq:finitesum} is a sum, not an average as in empirical risk minimization.
Therefore, the \(N^3\) factor in the corollary follows from plugging in \(\kappa = N L_{\rm{max}}/\mu\) and \(d = N d_y + d_z\) in \cref{eq:domke_complexity_bound}.

\vspace{1ex}
\begin{remark}[\textbf{Sample Complexity}]\label{remark:naive}
  While the cost of evaluating the gradient of the joint likelihood is at least \(\Omega\left(N\right)\) datapoint queries, we will assume it is \(\Theta(n)\) throughout the paper, as it is the most common case.
  Then, the sample complexity of BBVI with proximal SGD scales as \(\mathcal{O}\left(N^4\right)\).
\end{remark}

Clearly, full-rank families do not scale even if we ignore the \(\Theta\left(N^2\right)\) storage requirement of the scale matrix.

\vspace{1ex}
\begin{remark}[\textbf{Are fewer parameters \textit{obviously} better?}]\label{remark:obvious}
    The number of variational parameters \(p\) enters the complexity statement through \(\Delta_0\).
    Therefore, the dependence on the number of parameters alone is only logarithmic.
    This implies that simply reducing the number of parameters does not obviously improve the dimensional dependence.
\end{remark}
\vspace{1ex}
\begin{remark}[\textbf{Where does \(d\) come from?}]
  The \(\mathcal{O}\left(d\right)\) dimension dependence directly comes from gradient variance. (See Theorem 3 and 7 of \citet{domke_provable_2019}.)
  Therefore, reducing the gradient variance is key to improving the complexity.
\end{remark}

\subsection{Complexity of BBVI on Finite-Sum Likelihoods}\label{section:bbvi_complexity}

Before presenting our result, we generalize the notation we have introduced so far.
Since we consider hierarchical models with local variables, each component \(\ell_n\) will use a subset of the variables returned by \(\mathcal{T}_{\vlambda}\).
We express this through \(\mathcal{T}_{\vlambda}^n : \mathbb{R}^{d} \to \mathbb{R}^{d_n}\), the parameterization function specific to \(\ell_n\), such that
{%
\setlength{\belowdisplayskip}{1ex} \setlength{\belowdisplayshortskip}{1ex}
\setlength{\abovedisplayskip}{1ex} \setlength{\abovedisplayshortskip}{1ex}
\[
  \mathcal{T}_{\vlambda}^n\left(\vu\right)
  \triangleq
  \mC_n \vu + \vm_n,
\]
}%
where \(\mC_n \in \mathbb{R}^{d \times d_{n}}\) is a subset of rows \(\mC\) and \(\vm_n \in \mathbb{R}^{d_n}\) is a subset of components of \(\vm\) corresponding to the coordinates of \(\vz = \mathcal{T}_{\vlambda}\left(\vu\right)\) used by \(\ell_n\).

Furthermore, we are interested in the \textit{structure} of \(\mC_n\) (and correspondingly \(\mC\)).
That is, whether the columns of \(\mC_{n}\) are zero or non-zero.
For this, we introduce the indicator 
{%
\setlength{\belowdisplayskip}{1ex} \setlength{\belowdisplayshortskip}{1ex}
\setlength{\abovedisplayskip}{1ex} \setlength{\abovedisplayshortskip}{1ex}
\[
   \delta_{n,j} \triangleq \begin{cases}
        \; 1 &  \text{if the \(j\)th column  of \(\mC_n\) is non-zero} \\
        \; 0 &  \text{otherwise}.
   \end{cases}
\]
}%
Notice that \(\delta_{n,j}\) is also implicitly encoding the structure of \(\ell_n\): if \(\ell_n\) uses a certain coordinate of \(\vz\), say \(z_k = [\vz]_k\), then \(\delta_{n,k}\) \textit{has} to be non-zero, since the original matrix \(\mC\) has a non-zero diagonal to qualify as a Cholesky factor.
Therefore, for the case of a two-level hierarchical model, \(\sum_{j} \delta_{n,j}\) is \textit{at least} \(d_y + d_{z}\), where ``turning-on'' additional rows allows for representing of additional correlations.

\vspace{-2ex}
\paragraph{Upper Bound on the Gradient Variance}
We will see how this block structure affects the gradient variance.


\begin{theoremEnd}[all end, category=jacobianreparam]{lemma}
\label{thm:jacobian_reparam}
    The squared Jacobian of the reparameterization function \(\mathcal{T}^{n}_{\vlambda}\) for \(\ell_n\), \(\mJ_n\), is a diagonal matrix given as
    {%
\setlength{\belowdisplayskip}{1ex} \setlength{\belowdisplayshortskip}{1ex}
\setlength{\abovedisplayskip}{1ex} \setlength{\abovedisplayshortskip}{1ex}
    \begin{align*}
        {\mJ_n\left(\vu\right)}^{\top}
        \mJ_{n}\left(\vu\right)
        \;=\;
        \left(1
        +
        {\textstyle\sum_j
        \delta_{n,j}} \rvu_j^2 \right) \boldupright{I}_n
    \end{align*}
    }%
    where \(\boldupright{I}_n\) is the identity matrix of the subspace of \(\ell_n\) (\cref{def:n_identity}) and \(\rvu_j\) is \(j\)th element of \(\rvvu\).
\end{theoremEnd}
\begin{proofEnd}
    Given the location-scale reparameterization function, \(\mathcal{T}_{\vlambda}\), we define the function for \(n\)th datapoint as
    \begin{align*}
        \mathcal{T}_{\vlambda}^{n} \left(\vu\right)
        &=
        \mC_n \vu + \vm_n,
    \end{align*}
    where \(\mC_n\) and \(\vm_n\) has non-zero elements that are only related to the \(n\)th datapoint and makes all others zero.

    For the derivatives, \citet[Lemma 8]{domke_provable_2019} show that 
    \[
      \frac{\partial \mathcal{T}_{\vlambda}\left(\vu\right)}{\partial \vm_i}
      =
      \boldupright{e}_i
      \quad\text{and}\quad
      \frac{\partial \mathcal{T}_{\vlambda}\left(\vu\right)}{\partial \mC_{i,j}}
      =
      \boldupright{e}_i \rvu_j,
    \]
    where \(\boldupright{e}_i\) is the unit bases for the \(i\)th coordinate.
    Here, we are interested in the effect of the structure of \(\mC_n\).
        
    Using the indicator \(\delta\), we have
    \begin{equation}
      \frac{\partial \mathcal{T}_{\vlambda}^{n}\left(\vu\right)}{\partial \mC_{n, i,j}}
      =
      \delta_{n, i,j} \, \boldupright{e}_{n,i} \, \rvu_{j}
    \end{equation}
    where 
    \[
      \delta_{n,i, j} = \begin{cases}
        \; 1 &  \text{if the \(i,j\)th element of \(\mC_n\) is non-zero} \\
        \; 0 &  \text{otherwise.}
      \end{cases}
    \]

    Therefore,
    \begin{align*}
      \frac{\partial \mathcal{T}_{\vlambda}^{n}\left(\vu\right)}{\partial \mC_{n,i,j}}
      {\left(
      \frac{\partial \mathcal{T}_{\vlambda}^{n}\left(\vu\right)}{\partial \mC_{n, i,j}}
      \right)}^{\top}
      &=
      \left( \delta_{n, i,j} \, \boldupright{e}_{n,i} \, \rvu_{j} \right)
      {\left(
      \delta_{n, i,j} \, \boldupright{e}_{n,i} \, \rvu_{j}
      \right)}^{\top}
      =
      \delta_{n, i,j} \; \rvu_{j}^2
      \left( \boldupright{e}_{n,i} \, \boldupright{e}_{n,i}^{\top} \right)
    \end{align*}
    and thus, now denoting the Jacobian as \(\mJ_{n}\), we have
    \begin{align*}
        {\mJ_n\left(\vu\right)}^{\top} \mJ_n\left(\vu\right)
        &=
        {\left(\frac{\partial \mathcal{T}_{\vlambda}^{n}\left(\vu\right)}{\partial \vlambda}\right)}^{\top}
        \frac{\partial \mathcal{T}_{\vlambda}^{n}\left(\vu\right)}{\partial \vlambda}
        \\
        &=
        \sum_{i=1} \frac{\partial \mathcal{T}_{\vlambda}^{n}\left(\vu\right)}{\partial \vm_{i}}
        {\left(\frac{\partial \mathcal{T}_{\vlambda}^{n}\left(\vu\right)}{\partial \vm_{i}}\right)}^{\top} 
        +
        \sum_{i} \sum_{j}
        \frac{\partial \mathcal{T}_{\vlambda}^{n}\left(\vu\right)}{\partial \mC_{n,i,j}}
        {\left(
        \frac{\partial \mathcal{T}_{\vlambda}^{n}\left(\vu\right)}{\partial \mC_{n,i,j}}
        \right)}^{\top}
        \\
        &=
        \sum_{i} \boldupright{e}_{n,i} \, \boldupright{e}_{n,i}^{\top}
        +
        \sum_{i} \sum_{j}
        \delta_{n, i,j} \; \rvu_{j}^2 
        \left( \boldupright{e}_{n,i} \, \boldupright{e}_{n,i}^{\top}\right)
        \\
        &=
        \boldupright{I}_n
        +
        \sum_{i} \sum_{j}
        \delta_{n, i,j} \; \rvu_{j}^2 
        \left( \boldupright{e}_{n,i} \, \boldupright{e}_{n,i}^{\top}\right),
\shortintertext{assuming \(\delta_{n,i,j} = \delta_{n,j}\) for all \(n,j\), then}
        &=
        \boldupright{I}_n
        +
        \sum_j
        \delta_{n,j} \rvu_j^2 \; 
        \left(
        \sum_i \boldupright{e}_{n,i} \, \boldupright{e}_{n,i}^{\top}
        \right)
        \\
        &=
        \boldupright{I}_n
        +
        \sum_j
        \delta_{n,j} \rvu_j^2 \; \boldupright{I}_n
        \end{align*}
    where 
    \[  
      \delta_{n,j} = \begin{cases}
        \; 1 &  \text{if the \(j\)th column  of \(\mC_n\) is non-zero} \\
        \; 0 &  \text{otherwise}.
      \end{cases}
    \]
\end{proofEnd}

\begin{theoremEnd}[all end, category=gradnorm]{lemma}
\label{thm:gradnorm}
    Let \(\ell_n : \mathbb{R}^d \to \mathbb{R}\) be a differentiable function and \(\mathcal{T}^{n}\) be the location-scale reparameterization function for \(\ell_n\).
    Then, we have
    \[
        {\lVert
          \nabla_{\vlambda} \ell_n(\mathcal{T}_{\vlambda}^{n}(\vu)) 
          -
          \nabla_{\vlambda} \ell_n(\mathcal{T}_{\vlambda'}^{n}(\vu)) 
        \rVert}_2^2
        =
        \left(1 + 
        {\textstyle \sum_j}
        \delta_{n,j} \rvu_j^2
        \right)
        {\lVert 
          \nabla \ell_n(\mathcal{T}_{\vlambda}^{n}(\vu)) 
          -
          \nabla \ell_n(\mathcal{T}_{\vlambda'}^{n}(\vu)) 
        \rVert}_2^2
    \]
    for any \(\vlambda, \vlambda' \in \Lambda\) and any \(\vu \in \mathbb{R}^d\).
\end{theoremEnd}
\begin{proofEnd}
    \begin{align*}
        &\norm{
          \nabla_{\vlambda} \ell_n(\mathcal{T}_{\vlambda}^{n}(\vu)) 
          -
          \nabla_{\vlambda} \ell_n(\mathcal{T}_{\vlambda'}^{n}(\vu)) 
        }_2^2
        \\
        &\;=
        \norm{
          \frac{
            \partial \mathcal{T}^{n}_{\vlambda}\left(\vu\right)
          }{
            \partial \vlambda
          }
          \nabla \ell_n(\mathcal{T}_{\vlambda}^{n}(\vu)) 
          -
          \frac{
            \partial \mathcal{T}^{n}_{\vlambda'}\left(\vu\right)
          }{
            \partial \vlambda'
          }
          \nabla \ell_n(\mathcal{T}_{\vlambda'}^{n}(\vu)) 
        }_2^2.
\shortintertext{Now notice that \(\frac{\partial \mathcal{T}_{\vlambda}^{n}}{\partial \vlambda} = \mJ^n\) independently of \(\vlambda\). Then,}
        &\;=
        \norm{
          \mJ^{n}\left(\vu\right)
          \nabla \ell_n(\mathcal{T}_{\vlambda}^{n}(\vu)) 
          -
          \mJ^{n}\left(\vu\right)
          \nabla \ell_n(\mathcal{T}_{\vlambda'}^{n}(\vu)) 
        }_2^2
        \\
        &\;=
        \norm{
          \mJ^{n}\left(\vu\right)
          \Big(
          \nabla \ell_n(\mathcal{T}_{\vlambda}^{n}(\vu)) 
          -
          \nabla \ell_n(\mathcal{T}_{\vlambda'}^{n}(\vu)) 
          \Big)
        }_2^2
        \\
        &\;=
        {\Big(
          \nabla \ell_n(\mathcal{T}_{\vlambda}^{n}(\vu)) 
          -
          \nabla \ell_n(\mathcal{T}_{\vlambda'}^{n}(\vu)) 
        \Big)}^{\top}
        {\mJ^{n}\left(\vu\right)}^{\top}
        \mJ^{n}\left(\vu\right)
        \Big(
          \nabla \ell_n(\mathcal{T}_{\vlambda}^{n}(\vu)) 
          -
          \nabla \ell_n(\mathcal{T}_{\vlambda'}^{n}(\vu)) 
        \Big),
\shortintertext{and applying \cref{thm:jacobian_reparam},} 
        &\;=
        {\Big(
          \nabla \ell_n(\mathcal{T}_{\vlambda}^{n}(\vu)) 
          -
          \nabla \ell_n(\mathcal{T}_{\vlambda'}^{n}(\vu)) 
        \Big)}^{\top}
        \left(\left(
        1
        +
        \sum_j
        \delta_{n,j} u_j^2 \; \right)\boldupright{I}_n\right)
        \Big(
          \nabla \ell_n(\mathcal{T}_{\vlambda}^{n}(\vu)) 
          -
          \nabla \ell_n(\mathcal{T}_{\vlambda'}^{n}(\vu)) 
        \Big)
        \\
        &\;=
        \left(1 + 
        \sum_j
        \delta_{n,j} u_j^2
        \right)
        \norm{ 
          \nabla \ell_n(\mathcal{T}_{\vlambda}^{n}(\vu)) 
          -
          \nabla \ell_n(\mathcal{T}_{\vlambda'}^{n}(\vu)) 
        }_2^2.
    \end{align*}
\end{proofEnd}

\begin{theoremEnd}[all end,category=scaleblockderivative]{lemma}\label{thm:scale_block_derivative}
\label{thm:split_upper_bound}
Let \(\delta_{j}\) be an indicator such that \(\delta_{j} \in \{0, 1\}\) depending on the index \(j = 1, \ldots, d\), \(\rvvu\) satisfy \cref{assumption:symmetric_standard}, and all expectations be respect to \(\rvvu\).
Then,
\[
    \norm{
    \mathbb{E}
      {\textstyle\sum_{j}}
      \delta_{j} \; \rvu_{j}^2
    \rvvu \rvvu^{\top} 
    }_{2,2}
    \;\leq\;
    \sum_{j} \delta_{j} + k_{\varphi} - 1.
\]

\end{theoremEnd}
\begin{proofEnd}

Let us denote 
\[
  \mA
  =
  \mathbb{E}
  {\textstyle\sum_{j}}
  \delta_{j} \; \rvu_{j}^2
  \rvvu \rvvu^{\top}.
\]
Then, we have
\begin{alignat*}{2}
  A_{ij}
  &=
  \mathbb{E}
  {\textstyle\sum_{k}}
  \delta_{k} \; \rvu_{k}^2 \; {(\rvvu \rvvu^{\top})}_{ij}
  \\
  &=
  {\textstyle\sum_{k}}
  \delta_{k} \;  \mathbb{E} \rvu_{k}^2 \rvu_{i} \rvu_{j}.
\end{alignat*}
Especially for the diagonal elements, we have
\begin{alignat*}{2}
  A_{ii}
  &=
  \sum_{k}
  \delta_{k} \; \mathbb{E} \rvu_{k}^2 \mathbb{E} \rvu_{i}^2
  \\
  &=
  \left(
  \sum_{k \neq i}
  \delta_{k} \;
  \mathbb{E}
  \rvu_{k}^2
  \mathbb{E}
  \rvu_{i}^2
  \right)
  +
  \delta_{i} \mathbb{E} \rvu_{i}^4
  &&\qquad\text{(\(\rvu_{i} \perp\!\!\!\perp \rvu_{k}\))}
  \\
  &=
  \left(
  \sum_{k \neq  i}
  \delta_{k}
  \right)
  +
  \delta_{i} \, k_{\varphi}
  &&\qquad\text{(\cref{assumption:symmetric_standard})}
  \\
  &\leq
  \left(
  \sum_{k \neq i }
  \delta_{ k} \;
  \right)
  +
  \delta_{i} \, k_{\varphi}
  +
  \underbrace{
  \left( k_{\varphi} - \delta_{i} \, k_{\varphi} + \delta_{i} - 1 \right)
  }_{\text{\(\geq 0\) since \(k_{\varphi} \geq 1\)}}
  \\
  &=
  \left(
  \sum_{k}
  \delta_{ k}
  \right)
  +
  k_{\varphi} - 1.
\end{alignat*}

For the off-diagonal elements, 
\begin{alignat*}{2}
  A_{ij}
  &=
  \sum_{k}
  \delta_{k} \;  \mathbb{E} \rvu_{k}^2 \rvu_{i} \rvu_{j}
  \\
  &=
  \mathbb{E} 
  \left(
  \sum_{ \left(k \neq i\right) \land \left(k \neq j\right)}
  \delta_{k} \; \rvu_{k}^2 \rvu_{i} \rvu_{j}
  \right)
  +
  \mathbb{E} 
  \delta_{i} \; \rvu_{i}^2 \rvu_{i} \rvu_{j}
  +
  \mathbb{E} 
  \delta_{j} \; \rvu_{j}^2 \rvu_{i} \rvu_{j}
  \\
  &=
  \left(
  \sum_{ \left(k \neq i\right) \land \left(k \neq j\right)}
  \delta_{k} \; \mathbb{E} \rvu_{k}^2 \mathbb{E} \rvu_{i} \mathbb{E} \rvu_{j}
  \right)
  +
  \delta_{i} \; \mathbb{E} \rvu_{i}^3 \mathbb{E} \rvu_{j}
  +
  \delta_{j} \; \mathbb{E} \rvu_{i} \mathbb{E} \rvu_{j}^3
  &&\qquad\text{(\(\rvu_{i} \perp\!\!\!\perp \rvu_{j}\), \(\rvu_i \perp\!\!\!\perp \rvu_{k}\), and \(\rvu_j \perp\!\!\!\perp \rvu_{k}\))}
  \\
  &=
  0.
  &&\qquad\text{(\cref{assumption:symmetric_standard})}
\end{alignat*}
Therefore, \(\mA\) is a diagonal matrix and the \(\ell_2\) matrix norm follows as the maximal diagonal element such that
\[
 {\lVert \mA \rVert}_{2,2}
 \leq
\sum_{j} \delta_{j}
  +
  k_{\varphi} - 1.
\]

\end{proofEnd}

\begin{theoremEnd}[all end, category=locationscalereparam]{lemma}
\label{thm:location_scale_reparam}
    Let \(\mathcal{T}_{\vlambda}^{n}\) be the location-scale reparameterization function for \(\ell_n\) and let the vector-valued random variable \(\rvvu = \left(\rvu_1, \ldots, \rvu_j \right)\) satisfy \cref{assumption:symmetric_standard}. 
    Then,
    \begin{align*}
      \mathbb{E}
        \left(
        1 +  {\textstyle \sum_j} \delta_{n,j} \rvu_j^2
        \right)
        {\lVert 
          \mathcal{T}_{\vlambda}^{n}\left(\rvvu\right)
          -
          \bar{\vz}_{n}
        \rVert}_2^2
      \leq
    \left({\textstyle {\textstyle \sum_j}}
        \delta_{n,j}
        +1
      \right)
      {\lVert \vm_n -  \bar{\vz}_{n} \rVert}_2^2
      +
      \left(
      {\textstyle\sum_{j}}
      \delta_{n,j}
      +
      k_{\varphi}
      \right)
      {\lVert \mC_n \rVert}_{\mathrm{F}}^2
    \end{align*}
    for any vector \(\bar{\vz}_{n}\) matching the output dimension of \(\mathcal{T}_{\vlambda}^n\) and any \(\vlambda \in \Lambda\).
\end{theoremEnd}
\begin{proofEnd}
First,
\begin{alignat}{2}
  \mathbb{E}
  \left(
    1
    +
     {\textstyle \sum_j} \delta_{n,j} \rvu_j^2
  \right)
    {\lVert \mathcal{T}_{\vlambda}^{n}\left(\rvvu\right) - \bar{\vz}_n
    \rVert}_2^2
  \nonumber
  &=
  \mathbb{E}
  \left(
    1
    +
     {\textstyle \sum_j} \delta_{n,j} \rvu_j^2
  \right)
  \norm{
  \mC_n \rvvu + \vm_n - \bar{\vz}_{n}
  }_2^2,
  \nonumber
  \\
\shortintertext{expanding the square,} 
  &=
  \mathbb{E}
  \left(
    1
    +
     {\textstyle \sum_j} \delta_{n,j} \rvu_j^2
  \right)
  \left(
    \rvvu^{\top} \mC_{n}^{\top} \mC_{n} \rvvu
    -
    2 \, {\left( \vm_n -  \bar{\vz}_{n} \right)}^{\top} \mC_{n} \rvvu
    +
    {\lVert \vm_n -  \bar{\vz}_{n} \rVert}_2^2
  \right)
  \nonumber
  \\
  &=
  \underbrace{
  \mathbb{E}
  \left(
    1
    +
     {\textstyle \sum_j} \delta_{n,j} \rvu_j^2
  \right)
  \rvvu^{\top} \mC_{n}^{\top} \mC_{n} \rvvu
  }_{T^{(1)}}
  -
  2
  \underbrace{
  \mathbb{E}
  \left(
    1
    +
     {\textstyle \sum_j} \delta_{n,j} \rvu_j^2
  \right)
  {\left( \vm_n -  \bar{\vz}_{n} \right)}^{\top} \mC_{n} \rvvu
  }_{T^{(2)}}
  \nonumber
  \\
  &\qquad
  +
  \underbrace{
  \mathbb{E}
  \left(
    1
    +
     {\textstyle \sum_j} \delta_{n,j} \rvu_j^2
  \right)
  {\lVert \vm_n -  \bar{\vz}_{n} \rVert}_2^2.
  }_{T^{(3)}}
  \label{eq:thm_location_scale_reparam_main}
\end{alignat}
We now solve for the individual terms \(T^{(1)}\), \(T^{(2)}\), and \(T^{(3)}\).

\paragraph{Derivation of \(T^{(1)}\)}
First,
\begin{align}
  T^{(1)}
  &=
  \mathbb{E}
  \left(
    1
    +
     {\textstyle \sum_j} \delta_{n,j} \rvu_j^2
  \right)
  \rvvu^{\top} \mC_{n}^{\top} \mC_{n} \rvvu
  \nonumber
  \\
  &=
  \mathbb{E}
  \left(
    1
    +
     {\textstyle \sum_j} \delta_{n,j} \rvu_j^2
  \right)
  \mathrm{tr}
  \left(
    \rvvu^{\top} \mC_{n}^{\top} \mC_{n} \rvvu
  \right)
  \nonumber
  \\
  &=
  \mathbb{E}
  \left(
    \rvvu^{\top} \mC_{n}^{\top} \mC_{n} \rvvu
  \right)
  +
  \mathbb{E}
  \left(
     {\textstyle \sum_j} \delta_{n,j} \rvu_j^2
  \right)
  \mathrm{tr}
  \left(
    \rvvu^{\top} \mC_{n}^{\top} \mC_{n} \rvvu
  \right),
  \nonumber
\shortintertext{applying \cref{thm:trace_estimator} on the first term,}   &=
  \norm{\mC_{n}}_{\mathrm{F}}^2
  +
  \mathbb{E}
  \left(
     {\textstyle \sum_j} \delta_{n,j} \rvu_j^2
  \right)
  \mathrm{tr}
  \left(
    \rvvu^{\top} \mC_{n}^{\top} \mC_{n} \rvvu
  \right),
  \nonumber
\shortintertext{rotating the elements in the trace and pushing the scalar coefficient into the trace,}
  &=
  \norm{\mC_{n}}_{\mathrm{F}}^2
  +
  \mathrm{tr}
  \left(
    \mC_{n}^{\top} \mC_{n} 
    \mathbb{E}
    \left(
      {\textstyle \sum_j} \delta_{n,j} \rvu_j^2
      \rvvu \rvvu^{\top} 
    \right)
  \right).
  \nonumber
\shortintertext{Since we only deal with real matrices, we have  \(\mC_{n}^{\top} \mC_{n} \succeq 0\), enabling the use of \cref{thm:matrix_trace_pd_bound} as}
  &\leq
  {\lVert \mC_{n} \rVert}_{\mathrm{F}}^2
  +
  \norm{
    \mathbb{E}{\textstyle \sum_j}
    \delta_{n,j} \; \rvu_{j}^2
    \rvvu \rvvu^{\top} 
  }_{2,2}
  \mathrm{tr}
  \left(
    \mC_{n}^{\top} \mC_{n} 
  \right)
  \nonumber
  \\
  &=
  {\lVert \mC_{n} \rVert}_{\mathrm{F}}^2
  +
  \norm{
    \mathbb{E}{\textstyle \sum_j}
    \delta_{n,j} \; \rvu_{j}^2
    \rvvu \rvvu^{\top} 
  }_{2,2}
  {\lVert \mC_{n} \rVert}_{\mathrm{F}}^2
  \nonumber
  \\
  &=
  \left(
  1
  +
  \norm{
    \mathbb{E} {\textstyle \sum_j}
    \delta_{n,j} \; \rvu_{j}^2
    \rvvu \rvvu^{\top} 
  }_{2,2}
  \right)
  {\lVert \mC_{n} \rVert}_{\mathrm{F}}^2.
  \label{eq:thm_location_scale_t1_partial}
\end{align}
By using \cref{thm:scale_block_derivative}, we have
\begin{align*}
  \norm{
    \mathbb{E}
      {\textstyle \sum_{j}}
      \delta_{n,j} \; \rvu_{j}^2
    \rvvu \rvvu^{\top} 
  }_{2,2}
  \leq
  {\textstyle \sum_{j}}
  \delta_{n,j}
  +
  k_{\varphi} - 1
\end{align*}
Therefore, back to \cref{eq:thm_location_scale_t1_partial},
\begin{align}
  T^{(1)}
  &=
  \left(
  1 + 
  \norm{
    \mathbb{E}
    {\textstyle \sum_{j}}
    \delta_{n,j} \; \rvu_{j}^2
    \rvvu \rvvu^{\top} 
  }_{2,2}
  \right)
  {\lVert \mC_{n} \rVert}_{\mathrm{F}}^2
  \leq
  \left(
  {\textstyle \sum_{j}}
  \delta_{n,j}
  +
  k_{\varphi}
  \right)
  {\lVert \mC_{n} \rVert}_{\mathrm{F}}^2.
  \label{eq:thm_location_scale_t1}
\end{align}

\paragraph{Derivation of \(T^{(2)}\)}
Meanwhile, for \(T^{(2)}\),
\begin{alignat}{2}
  T^{(2)}
  &=
  \mathbb{E}
  \left(
    1
    +
     {\textstyle \sum_j}
    \delta_{n,j} \; u_{j}^2
  \right)
  \left( \vm_n -  \bar{\vz}_{n} \right) \mC_{n} \rvvu,
  \nonumber
\shortintertext{and from \cref{assumption:symmetric_standard,thm:u_identities},}  
  &=
  \left( \vm_n -  \bar{\vz}_{n} \right) \mC_{n}  \mathbb{E} \rvvu
  \;+\;
  \left( \vm_n -  \bar{\vz}_{n} \right) \mC_{n}
  \left( {\textstyle \sum_j}
    \delta_{n,j} \; 
    \mathbb{E}
    \rvu_{j}^2 \rvvu
  \right)
  \nonumber
  \\
  &=
  0.
  \label{eq:thm_location_scale_t2}
\end{alignat}

\paragraph{Derivation of \(T^{(3)}\)}
Finally,
\begin{alignat}{2}
  T^{(3)}
  &=
  \mathbb{E}
  \left(
    1
    +
     {\textstyle \sum_j}
    \delta_{n,j} \; u_{j}^2
  \right)
  {\lVert \vm_n -  \bar{\vz}_{n} \rVert}_2^2
  \nonumber
  \\
  &=
  \left(
    1
    +
     {\textstyle \sum_j}
    \delta_{n,j} \; \mathbb{E} u_{j}^2
  \right)
  {\lVert \vm_n-  \bar{\vz}_{n} \rVert}_2^2,
  \nonumber
\shortintertext{and from \cref{assumption:symmetric_standard},}  
  &=
  \left(
    1
    +
     {\textstyle \sum_j}
    \delta_{n,j} 
  \right)
  {\lVert \vm_n -  \bar{\vz}_{n} \rVert}_2^2.
\label{eq:thm_location_scale_t3}
\end{alignat}

Combining all the results to \cref{eq:thm_location_scale_reparam_main},
\begin{alignat*}{2}
  \mathbb{E}
  \left(
    1
    +
     {\textstyle \sum_j}
    \delta_{n,j} \; \rvu_{j}^2
  \right)
  \norm{
    \mathcal{T}_{\vlambda}^n\left(\rvvu\right)
    -
    \bar{\vz}_{n}
  }_2^2
  &\leq
  T^{(1)}
  -2 T^{(2)}
  +
  T^{(3)},
\shortintertext{applying \cref{eq:thm_location_scale_t1,eq:thm_location_scale_t2,eq:thm_location_scale_t3},}
  &\leq
  \left({\textstyle \sum_j}
    \delta_{n,j} 
    +1
  \right)
  {\lVert \vm_n -  \bar{\vz}_{n} \rVert}_2^2
  +
  \left(
  {\textstyle \sum_{j}}
  \delta_{n,j}
  +
  k_{\varphi}
  \right)
  {\lVert \mC_n \rVert}_{\mathrm{F}}^2.
\end{alignat*}

\end{proofEnd}

\begin{theoremEnd}[category=optimalgradientvariance]{theorem}\label{thm:quadratic_gradient_variance}
    Let \(\ell_n\) be \(L_n\)-smooth for some \(n = 1, \ldots, N\) and \cref{assumption:variation_family} hold. 
    Then, the gradient variance of \(\rvvg_M\) is bounded as
    {\small
    \setlength{\belowdisplayskip}{1ex} \setlength{\belowdisplayshortskip}{1ex}
    \setlength{\abovedisplayskip}{1ex} \setlength{\abovedisplayshortskip}{1ex}
    \begin{align*}
      \mathrm{tr}\mathbb{V} \, \rvvg_M\left(\vlambda\right)
      \leq
      \frac{N}{M}
      \left(d^* + k_{\varphi}\right)
      \sum_{n=1}^N
      L_{n}^2 \,
      \left(
        \norm{
          \vm_n - \bar{\vz}_n 
        }_2^2
        +
        \norm{
          \mC_n
        }_{\mathrm{F}}^2
      \right),
    \end{align*}
    }%
    where \(\bar{\vz}_n\) is a stationary point of \(\ell_n\) and 
    {%
    \setlength{\belowdisplayskip}{0ex} \setlength{\belowdisplayshortskip}{0ex}
    \setlength{\abovedisplayskip}{1ex} \setlength{\abovedisplayshortskip}{1ex}
    \[ 
      d^* \triangleq  \max_{n} {\textstyle\sum_{j}} \delta_{n,j} 
    \]
    }%
    is the effective dimensionality.
\end{theoremEnd}
\vspace{-1.5ex}
\begin{proofEnd}
    The proof can be seen as a generalization of \citet[Theorem 1]{domke_provable_2019} to the case where \(\mC\) is structured.
    Firstly, 
    \begin{alignat*}{2}
        \mathrm{tr}\mathbb{V}{\rvvg_M\left(\vlambda\right)}
        &\leq
        \mathrm{tr}\V{  
        \frac{1}{M}
        \sum_{m=1}^M
        \sum_{n=1}^N
        \nabla_{\vlambda} \ell_{n} \left(\mathcal{T}_{\vlambda}^n\left(\rvvu_m\right)\right) 
        }
        \\
        &=
        \frac{1}{M}
        \mathrm{tr}\V{  
        \sum_{n=1}^N
        \nabla_{\vlambda} \ell_{n} \left(\mathcal{T}_{\vlambda}^n\left(\rvvu\right)\right) 
        }
        &&\qquad\text{(\(\rvvu_1, \ldots, \rvvu_M\) are \textit{i.i.d.})}
        \\
        &\leq
        \frac{N}{M}
        \sum_{n=1}^N
        \mathrm{tr}\V{  
        \nabla_{\vlambda} \ell_{n} \left(\mathcal{T}_{\vlambda}^n\left(\rvvu\right)\right) 
        }
        &&\qquad\text{(\cref{thm:variance_of_sum_bound})}
        \\
        &\leq
        \frac{N}{M}
        \sum_{n=1}^N
        \mathbb{E}\norm{
          \nabla_{\vlambda} \ell_{n} \left(\mathcal{T}_{\vlambda}^n\left(\rvvu\right)\right)
        }_2^2.
        \nonumber
    \end{alignat*}

    Now, the individual expected-squared norms can be bounded as
    \begin{alignat*}{2}
        \mathbb{E}\norm{
          \nabla_{\vlambda} \ell_{n} \left(\mathcal{T}_{\vlambda}^n\left(\rvvu\right)\right)
        }_2^2
        &=
        \mathbb{E}
        \left( 1 + {\textstyle\sum_{j}} \delta_{n,j} \rvu_j^2 \right)
        \norm{
          \nabla \ell_{n} \left(\mathcal{T}_{\vlambda}^n\left(\rvvu\right)\right)
        }_2^2
        &&\qquad\text{(\cref{thm:gradnorm})}
        \\
        &=
        \mathbb{E}
        \left( 1 + {\textstyle\sum_{j}} \delta_{n,j} \rvu_j^2 \right)
        \norm{
          \nabla \ell_{n} \left(\mathcal{T}_{\vlambda}^n\left(\rvvu\right)\right)
          -
          \nabla \ell_{n}\left( \bar{\vz}_n \right)
        }_2^2
        &&\qquad\text{(since \(\nabla \ell_{n}\left( \bar{\vz}_n \right) = \boldupright{0}\))}
        \\
        &\leq
        L_{n}^2 \,
        \mathbb{E}
        \left( 1 + {\textstyle\sum_{j}} \delta_{n,j} \rvu_j^2 \right)
        \norm{
          \mathcal{T}_{\vlambda}^n\left(\rvvu\right)
          -
          \bar{\vz}_n
        }_2^2
        &&\qquad\text{(\(L_n\)-smoothness)}
        \\
        &\leq
        L_{n}^2 \,
        \left({\textstyle\sum_{j}} \delta_{n,j} + k_{\varphi}\right)
        \left(
        \norm{
          \vm_n  - \bar{\vz}_n
        }_2^2
        +
        \norm{
          \mC_n
        }_{\mathrm{F}}^2
        \right).
        &&\qquad\text{(\cref{thm:location_scale_reparam})}
    \end{alignat*}

    The sum of all the datapoints can then be bounded as
    \begin{alignat*}{2}
        \mathrm{tr}\mathbb{V}{\rvvg_M\left(\vlambda\right)}
        &=
        \frac{N}{M}
        \sum_{n=1}^N
        L_{n}^2 \,
        \left({\textstyle\sum_{j}} \delta_{n,j} + k_{\varphi}\right)
        \left(
        \norm{
          \vm_n - \bar{\vz}_n
        }_2^2
        +
        \norm{
          \mC_n
        }_{\mathrm{F}}^2
        \right)
        \\
        &\leq
        \frac{N}{M}
        \left(d^* + k_{\varphi}\right)
        \sum_{n=1}^N
        L_{n}^2 \,
        \left(
        \norm{
          \vm_n - \bar{\vz}_n 
        }_2^2
        +
        \norm{
          \mC_n
        }_{\mathrm{F}}^2
        \right).
    \end{alignat*}
\end{proofEnd}

\begin{theoremEnd}[all end, category=convexexpectedsmooth]{lemma}[\textbf{Convex Expected Smoothness}]\label{thm:expected_smoothness}
    Let \(\ell\) be \(\mu\)-strongly convex and \(L\)-smooth, \(\ell_n\) be \(L_n\)-smooth for \(n=1, \dots, N\), and \cref{assumption:variation_family} hold. Then, we have
    \[
      \mathbb{E}\lVert 
      \rvvg_M\left(\vlambda\right) 
      - 
      \rvvg_M\left(\vlambda'\right) 
      \rVert_2^2 
      \leq
        2
        \left(
        \frac{N}{M}
        \left(
        d^*
        +
        k_{\varphi}
        \right)
        \sum_{n=1}^N
        \frac{L_n^2}{\mu}
        +
        L
        \right)
        \mathrm{D}_{f}\left(\vlambda, \vlambda'\right)
    \]
    for any \(\vlambda, \vlambda' \in \Lambda\), 
    where 
    \(\mathrm{D}_{f}\) is the Bregman divergence generated by \(f\), and \(d^* = \max_{n} \sum_{j} \delta_{n,j}\) is the effective dimensionality.
\end{theoremEnd}
\begin{proofEnd}
    The proof is a generalization of Lemma 3 by \citet{kim_convergence_2023}, which a strategy of applying smoothness~\citep{domke_provable_2019} and then applying ``quadratic functional growth'' \citep{kim_practical_2023}.
    
    First notice that if \(\ell\) is \(L\)-smooth, then the energy is also \(L\)-smooth by virtue of \citet[Theorem 1]{domke_provable_2020}.
    In our case, \(\ell\) is \(N L_{\mathrm{max}}\) smooth.
    Therefore, 
    \begin{alignat*}{2}
        \mathbb{E}{\lVert 
          \rvvg_M\left(\vlambda\right) 
          - 
          \rvvg_M\left(\vlambda'\right) 
        \rVert}_2^2 
        &=
        \mathrm{tr} \V{
          \rvvg_M\left(\vlambda\right) - \rvvg_M\left(\vlambda'\right)
        }
        +
        \norm{\nabla f\left(\vlambda\right) - \nabla f\left(\vlambda'\right)}_2^2
        \\
        &\leq
        \mathrm{tr} \V{
          \rvvg_M\left(\vlambda\right) - \rvvg_M\left(\vlambda'\right)
        }
        +
        2 L
        \left(
          f(\vlambda) - f\left(\vlambda'\right)
          -
          \inner{
           \nabla f(\vlambda')
          }{
            \vlambda - \vlambda'
          }
        \right)
        &&\quad\text{(\(L\)-smoothness of \(\ell\))}
        \\
        &=
        \mathrm{tr} \V{
          \rvvg_M\left(\vlambda\right) - \rvvg_M\left(\vlambda'\right)
        }
        +
        2 L \, \mathrm{D}_{f}\left(\vlambda, \vlambda'\right).
    \end{alignat*}

    The variance term can be bounded as
    \begin{align*}
        \mathrm{tr} \V{
          \rvvg_M\left(\vlambda\right) - \rvvg_M\left(\vlambda'\right)
        }
        &=
        \frac{1}{M^2} 
        \mathrm{tr}\V{
          \sum_{m=1}^M
          \sum_{n=1}^N
          \ell_{n}\left(\mathcal{T}_{\vlambda}^n\left(\rvvu_m\right)\right)
          - 
          \ell_{n}\left(\mathcal{T}_{\vlambda'}^n\left(\rvvu_m\right)\right)
        }
        \\
        &=
        \frac{1}{M} 
        \mathrm{tr}\V{
          \sum_{n=1}^N
          \ell_{n}\left(\mathcal{T}_{\vlambda}^n\left(\rvvu\right)\right)
          - 
          \ell_{n}\left(\mathcal{T}_{\vlambda'}^n\left(\rvvu\right)\right)
        }
        &&\quad\text{(\(\rvvu_1, \ldots, \rvvu_m\) are \textit{i.i.d.})}
        \\
        &\leq
        \frac{N}{M} 
        \sum_{n=1}^N
        \mathrm{tr}\V{
          \ell_{n}\left(\mathcal{T}_{\vlambda}^n\left(\rvvu\right)\right)
          - 
          \ell_{n}\left(\mathcal{T}_{\vlambda'}^n\left(\rvvu\right)\right)
        }
        &&\quad\text{(\cref{thm:variance_of_sum_bound})}
        \\
        &\leq
        \frac{N}{M} 
        \sum_{n=1}^N
        \mathbb{E}{
        \norm{
          \ell_{n}\left(\mathcal{T}_{\vlambda}^n\left(\rvvu\right)\right)
          - 
          \ell_{n}\left(\mathcal{T}_{\vlambda'}^n\left(\rvvu\right)\right)
        }_2^2
        }
        \\
        &=
        \frac{N}{M}
        \sum_{n=1}^N
        \mathbb{E} \left( 1 + {\textstyle\sum_j} \delta_{n,j} \rvu_j^2 \right) 
        {\lVert
          \nabla \ell_n (\mathcal{T}^n_{\vlambda}(\rvvu)) - \nabla \ell_n (\mathcal{T}^n_{\vlambda'}(\rvvu)) 
        \rVert}_2^2
        &&\quad\text{(\cref{thm:gradnorm})}
        \\
        &\leq
        \frac{N}{M}
        \sum_{n=1}^N
        L_n^2
        \mathbb{E} \left( 1 + {\textstyle\sum_j} \delta_{n,j} \rvu_j^2 \right)
        {\lVert 
          \mathcal{T}^n_{\vlambda}\left(\rvvu\right) - \mathcal{T}^n_{\vlambda'}\left(\rvvu\right)
        \rVert}_2^2
        &&\quad\text{(\(L_n\)-smoothness)}
        \\
        &\leq
        \frac{N}{M}
        \left\{
        \sum_{n=1}^N
        L_n^2
        \left(
        {\textstyle\sum_{j}}
        \delta_{n,j}
        +
        k_{\varphi}
        \right)
        \right\}
        \norm{\vlambda - \vlambda'}_2^2.
        &&\quad\text{(\cref{thm:reparam_gradnorm})}
    \end{align*}
    
    Now, we bound \(\norm{\vlambda - \vlambda'}\) by the Bregman divergence generated by \(f\) as done by \citet[Theorem 1]{kim_practical_2023}.
    For this, we convert the squared distance of the variational parameters (\(\vlambda\)-space) into the squared distance in model parameters (\(\vz\)-space).
    That is, 
    \begin{align}
      \norm{
        \vlambda - \vlambda'
      }_2^2
      &=
      \norm{
        \mC - \mC'
      }_{\mathrm{F}}^2
      +
      \norm{
        \vm - \vm'
      }_2^2,
      \nonumber
\shortintertext{using the identity in \cref{thm:trace_estimator},}
      &=
      \mathbb{E}\norm{
        \left( \mC - \mC' \right) \rvvu
      }_{2}^2
      +
      \norm{
        \vm - \vm'
      }_2^2
      \nonumber
      \\
      &=
      \mathbb{E}
      {\lVert
        \mathcal{T}_{\vlambda}\left(\rvvu\right) - \mathcal{T}_{\vlambda'}\left(\rvvu\right)
      \rVert}_2^2
      \nonumber
\shortintertext{by the \(\mu\)-strong log-concavity of the posterior,}
      &\leq
      \frac{2}{\mu} 
      \mathbb{E} \left( \ell \left(\mathcal{T}_{\vlambda}\left(\rvvu\right)\right) - \ell \left(\mathcal{T}_{\vlambda}\left(\rvvu\right)\right) - 
      \langle 
      \nabla \ell \left(\mathcal{T}_{\vlambda^{\prime}}\left(\rvvu\right)\right), \mathcal{T}_{\vlambda}\left(\rvvu\right) - \mathcal{T}_{\vlambda^{'}}\left(\rvvu\right)\rangle
      \right) 
      \nonumber
      \\
      &=
      \frac{2}{\mu}
      \left( f\left( \vlambda \right) - f\left( \vlambda'\right) - \mathbb{E} \langle \nabla \ell \left(\mathcal{T}_{\vlambda^{\prime}}\left(\rvvu\right)\right), \mathcal{T}_{\vlambda}\left(\rvvu \right) - \mathcal{T}_{\vlambda'}\left(\rvvu \right) \rangle \right),
      \nonumber
\shortintertext{and applying Lemma 10 by \citet{kim_convergence_2023} on the inner product term,}
      &=
      \frac{2}{\mu} 
      \left( f\left( \vlambda \right) - f\left( \vlambda'\right) -  \langle \nabla f\left( \vlambda'\right), \vlambda - \vlambda^{'} \rangle \right)
      \nonumber
      \\
      &=
      \frac{2}{\mu} \mathrm{D}_{f}\left(\vlambda, \vlambda'\right).
      \label{eq:functional_growth}
    \end{align}

    Combining the results, 
    \begin{alignat*}{2}
        \mathbb{E}{\lVert 
          \rvvg_M\left(\vlambda\right) 
          - 
          \rvvg_M\left(\vlambda'\right) 
        \rVert}_2^2 
        &=
        \mathrm{tr} \V{
          \rvvg_M\left(\vlambda\right) - \rvvg_M\left(\vlambda'\right)
        }
        +
        2 L \, \mathrm{D}_{f}\left(\vlambda, \vlambda'\right)
        \\
        &=
        \frac{N}{M}
        \left\{
        \sum_{n=1}^N
        L_n^2
        \left(
        {\textstyle\sum_{j}}
        \delta_{n,j}
        +
        k_{\varphi}
        \right)
        \right\}
        \norm{\vlambda - \vlambda'}_2^2
        +
        2 L \, \mathrm{D}_{f}\left(\vlambda, \vlambda'\right)
        \\
        &\leq
        \frac{N}{M}
        \left\{
        \sum_{n=1}^N
        L_n^2
        \left(
        {\textstyle\sum_{j}}
        \delta_{n,j}
        +
        k_{\varphi}
        \right)
        \right\}
        \frac{2}{\mu}
        \mathrm{D}_{f}\left(\vlambda, \vlambda'\right)
        +
        2 L \, \mathrm{D}_{f}\left(\vlambda, \vlambda'\right)
        &&\quad\text{(\cref{eq:functional_growth})}
        \\
        &=
        2
        \left(
        \frac{N}{M}
        \sum_{n=1}^N
        \frac{L_n^2}{\mu}
        \left(
        {\textstyle\sum_{j}}
        \delta_{n,j}
        +
        k_{\varphi}
        \right)
        +
        L
        \right)
        \mathrm{D}_{f}\left(\vlambda, \vlambda'\right),
        &&\quad\text{(Reorganized)}
        \\
        &\leq
        2
        \left(
        \frac{N}{M}
        \sum_{n=1}^N
        \frac{L_n^2}{\mu}
        \left(
        d^*
        +
        k_{\varphi}
        \right)
        +
        L
        \right)
        \mathrm{D}_{f}\left(\vlambda, \vlambda'\right)
        &&\quad\text{(\(\sum_{j}\delta_{n,j} \leq d^*\) for all \(n = 1, \ldots, N\))}
        \\
        &=
        2
        \left(
        \frac{N}{M}
        \left(
        d^*
        +
        k_{\varphi}
        \right)
        \frac{\sum_{n=1}^N L_n^2}{\mu}
        +
        L
        \right)
        \mathrm{D}_{f}\left(\vlambda, \vlambda'\right).
        &&\quad\text{(Pushed-in the summation)}
    \end{alignat*}
\end{proofEnd}

\begin{theoremEnd}[all end, category=variancetransfer]{lemma}
    Let \(\ell\) be \(\mu\)-strongly convex, \(\ell_n\) be \(L_n\)-smooth for \(n=1, \dots, N\), and the \cref{assumption:variation_family} hold.
    Then, the variance of the \(M\)-sample reparameterization gradient of the energy is bounded as
    \[
      \mathrm{tr} \mathbb{V} \, \rvvg\left(\vlambda\right)
      \leq
      \frac{4 N}{M}
      \left(d^* + k_{\varphi}\right) 
      \frac{\overline{L^2}}{\mu}\mathrm{D}_f \left(\vlambda , \vlambda^{\prime}\right)
      +
      \frac{4 N}{M} \, \sum_{n=1}^N \mathrm{tr}\V{ \nabla_{\vlambda^{\prime}} 
      \ell_n\left(\mathcal{T}_{\vlambda'}\left(\rvvu\right)\right)
      },
    \]
    for any \(\vlambda, \vlambda' \in \Lambda\), where \(\overline{L^2} = \sum_{n=1}^N L_n^2\) and \(\mathrm{D}_{f}\) is the Bregman divergence generated from the energy \(f\).
\end{theoremEnd}
\begin{proofEnd}
    From the definition of our gradient estimator, we have
    \begin{alignat}{2}
        \mathrm{tr} \mathbb{V} \rvvg \left(\vlambda\right)
        &= 
        \mathrm{tr} \V{ \frac{1}{M}\sum_{m=1}^M \sum_{n=1}^N \nabla_{\vlambda} 
        \ell_{n} \left(\mathcal{T}_{\vlambda}\left(\rvvu_m\right)\right) }
        \nonumber
        \\
        &= 
        \frac{1}{M^2}\sum_{m=1}^M   
        \mathrm{tr} \, \V{
          \sum_{n=1}^N
          \nabla_{\vlambda} \ell_{n} \left(\mathcal{T}_{\vlambda}\left(\rvvu_m\right)\right)
        }
        &&\qquad\text{(Independence of \(\rvvu_1, \ldots, \rvvu_m\))} 
        \nonumber
        \\
        &= 
        \frac{1}{M}
        \mathrm{tr} \, \mathbb{V} \left[
        \sum_{n=1}^N 
        \nabla_{\vlambda} 
        \ell_n \left(\mathcal{T}_{\vlambda}\left(\rvvu\right)\right)\right]
        &&\qquad\text{(since \(\rvvu_m\) is \textit{i.i.d.})}
        \nonumber
        \\
        &\leq
        \frac{N}{M}
        \sum_{n=1}^N 
        \mathrm{tr} \mathbb{V} \, 
        \nabla_{\vlambda} 
        \ell_n \left(\mathcal{T}_{\vlambda}\left(\rvvu\right)\right)
        &&\qquad\text{(\cref{thm:variance_of_sum_bound})}
        \label{eq:variance_transfer_eq1}
    \end{alignat}
    From here, the proof proceeds identically to Lemma 8.20 in \citet{garrigos_handbook_2023}.
    Starting from \cref{thm:variance_bound_squared_difference}
    \begin{alignat}{2}
        \mathrm{tr}\mathbb{V} \,
        \nabla_{\vlambda} 
        \ell_n \left(\mathcal{T}_{\vlambda}\left(\rvvu\right)\right)
        &\leq
        \mathbb{E} 
        \norm{
          \nabla_{\vlambda} 
          \ell_n \left(\mathcal{T}_{\vlambda}\left(\rvvu\right)\right)
          -
          \nabla f_n\left(\vlambda'\right)
        }_2^2 
        \nonumber
        \\
        &=
        \mathbb{E} 
        \norm{
          \nabla_{\vlambda} 
          \ell_n \left(\mathcal{T}_{\vlambda}\left(\rvvu\right)\right)
          -
          \nabla_{\vlambda'} \ell_n \left(\mathcal{T}_{\vlambda'}\left(\rvvu\right)\right)
          +
          \nabla_{\vlambda'} \ell_n \left(\mathcal{T}_{\vlambda'}\left(\rvvu\right)\right)
          -
          \nabla f_n\left(\vlambda'\right)
        }_2^2, 
        \nonumber
\shortintertext{applying \({(a+b)}^2 \leq 2 a^2 + 2 b^2\),}
        &\leq
        2 \,
        \mathbb{E} 
        \norm{
          \nabla_{\vlambda} 
          \ell_n \left(\mathcal{T}_{\vlambda}\left(\rvvu\right)\right)
          -
          \nabla_{\vlambda'} \ell_n \left(\mathcal{T}_{\vlambda'}\left(\rvvu\right)\right)
        }_2^2
        +
        2 \,
        \mathbb{E} 
        \norm{
          \nabla_{\vlambda'} \ell_n \left(\mathcal{T}_{\vlambda'}\left(\rvvu\right)\right)
          -
          \nabla f_n\left(\vlambda'\right)
        }_2^2 
        \nonumber
        \\
        &=
        2 \, \mathbb{E} 
        \norm{
          \nabla_{\vlambda} 
          \ell_n \left(\mathcal{T}_{\vlambda}\left(\rvvu\right)\right)
          -
          \nabla_{\vlambda'} \ell_n \left(\mathcal{T}_{\vlambda'}\left(\rvvu\right)\right)
        }_2^2
        +
        2 \, \mathrm{tr}\mathbb{V} \, \nabla_{\vlambda^{\prime}} 
        \ell_n\left(\mathcal{T}_{\vlambda'}\left(\rvvu\right)\right),
        \nonumber
\shortintertext{and \cref{thm:convex_expected_smoothness},}
        &\leq
        4\left(\sum_{j} \delta_{n,j} + k_{\varphi}\right) \frac{L_n^2}{\mu}\mathrm{D}_f \left(\vlambda , \vlambda^{\prime}\right)
        +
        2 \, \mathrm{tr}\mathbb{V} \, \nabla_{\vlambda^{\prime}} 
        \ell_n\left(\mathcal{T}_{\vlambda'}\left(\rvvu\right)\right).
        \label{thm:variance_transfer_eq2}
    \end{alignat}
    Combining this into \cref{eq:variance_transfer_eq1,thm:variance_transfer_eq2}, we get
    \begin{alignat*}{2}
        \mathrm{tr} \mathbb{V} \rvvg \left(\vlambda\right)
        &= 
        \frac{N}{M}
        \sum_{n=1}^N
        \mathrm{tr}\mathbb{V} \,
        \nabla_{\vlambda} 
        \ell_n \left(\mathcal{T}_{\vlambda}\left(\rvvu\right)\right)
        \\
        &\leq
        \underbrace{
        \frac{N}{M}
        \sum_{n=1}^N
        4\left(\sum_{j} \delta_{n,j} + k_{\varphi}\right) 
        \frac{L_n^2}{\mu}\mathrm{D}_f \left(\vlambda , \vlambda^{\prime}\right)
        }_{V_{\vlambda}}
        +
        \underbrace{
        \frac{N}{M} \sum_{n=1}^N \, \mathrm{tr}\V{ 2    \nabla_{\vlambda^{\prime}} 
        \ell_n\left(\mathcal{T}_{\vlambda'}\left(\rvvu\right)\right)
        }
        }_{V_{\vlambda'}}.
    \end{alignat*}

    \(V_{\vlambda}\) follows as
    \begin{alignat*}{2}
        V_{\vlambda}
        &=
        \frac{N}{M}
        \sum_{n=1}^N
        4\left(\sum_{j} \delta_{n,j} + k_{\varphi}\right) 
        \frac{L_n^2}{\mu}\mathrm{D}_f \left(\vlambda , \vlambda^{\prime}\right),
\shortintertext{where the definition of the effective dimensionality yields \(\textstyle\sum_{j} \delta_{n,j} + k_{\varphi} \leq d^* + k_{\varphi}\) for all \(n = 1, \ldots, N\) such that}
        &\leq
        \frac{N}{M}
        \left(d^* + k_{\varphi}\right) 
        \sum_{n=1}^N 
        4 \frac{L_n^2}{\mu}\mathrm{D}_f \left(\vlambda , \vlambda^{\prime}\right),
\shortintertext{and reorganizing the constants and the sum,}
        &=
        \frac{4 N}{M}
        \left(d^* + k_{\varphi}\right) 
        \frac{\sum_{n=1}^N L_n^2}{\mu}\mathrm{D}_f \left(\vlambda , \vlambda^{\prime}\right).
    \end{alignat*}

    Lastly, \(V_{\vlambda'}\) follows as
    \begin{alignat*}{2}
        V_{\vlambda'}
        =
        \frac{N}{M} \, \sum_{n=1}^N \mathrm{tr}\V{ 2 \nabla_{\vlambda^{\prime}} 
        \ell_n\left(\mathcal{T}_{\vlambda'}\left(\rvvu\right)\right)
        }
        =
        \frac{4 N}{M} \, \sum_{n=1}^N \mathrm{tr}\V{ \nabla_{\vlambda^{\prime}} 
        \ell_n\left(\mathcal{T}_{\vlambda'}\left(\rvvu\right)\right)
        }.
    \end{alignat*}
    Therefore, 
    \begin{alignat*}{2}
        \mathrm{tr} \mathbb{V} \rvvg \left(\vlambda\right)
        &\leq
        \frac{4 N}{M}
        \left(d^* + k_{\varphi}\right) 
        \frac{\sum_{n=1}^N L_n^2}{\mu}\mathrm{D}_f \left(\vlambda , \vlambda^{\prime}\right)
        +
        \frac{4 N}{M} \, \sum_{n=1}^N \mathrm{tr}\V{ \nabla_{\vlambda^{\prime}} 
        \ell_n\left(\mathcal{T}_{\vlambda'}\left(\rvvu\right)\right)
        }.
    \end{alignat*}
\end{proofEnd}

\vspace{.5ex}
\begin{remark}
  When \(\mC\) is dense, such as in full-rank variational families, we have \(d^* = d\).
  Therefore, we exactly retrieve Theorem 6 of \citet{domke_provable_2019}.
\end{remark}

\vspace{.5ex}
\begin{remark}\label{remark:dimension_dependence_structure}
   \(\delta_{n,j}\) is related to dimension dependence for the following reason: by setting the \(j\)th column of \(\mC_{n}\) to be \textit{non-zero} (\(\delta_{n,j} = 1\)), we are effectively deciding to use \(\rvu_{j}\) (the \(j\)th component of the \(d\)-dimensional vector \(\rvvu\)) when computing \(\mathcal{T}^{n}_{\vlambda}\).
  The exact number of components of \(\rvvu\) mixed by \(\mC_{n}\) is the ``effective'' dimension dependence \(d^*\).
\end{remark}

\vspace{.5ex}
\begin{remark}
Since the gradient variance dominates the computational complexity, \cref{thm:quadratic_gradient_variance} answers \cref{remark:obvious}.
That is, the \textit{structure} of \(\mC_n\), which depends on \(\ell_n\), affects the complexity, not the number of parameters.
\end{remark}

Notice that \cref{thm:quadratic_gradient_variance} is pointing towards a trivial way to improve the dimensional dependence of mean-field:
\begin{corollary}\label{corollary:meanfield}
  Let the posterior \(\pi\) factorize into independent univariate sub-posteriors such that \(\pi(\vz) = \prod^{N}_{n}\pi_n\left(z_n\right)\), where each \(\pi_n\) is \(L_n\)-log-smooth for \(n = 1, \ldots, N\).
  Then, the gradient variance of the mean-field approximation is dimension-independent.
\end{corollary}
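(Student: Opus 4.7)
The plan is to recognize that the corollary is essentially a direct specialization of \cref{thm:quadratic_gradient_variance}: once the factorization assumption is combined with the mean-field parameterization, the effective dimensionality $d^*$ collapses to a constant, which kills the explicit $d$ dependence in the variance bound.

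First, I would spell out the structure of $\mC$ and $\mC_n$ under the hypotheses. The mean-field family prescribes that $\mC \in \mathbb{L}_{++}^d$ is diagonal. On the other hand, the assumption $\pi(\vz) = \prod_{n=1}^N \pi_n(z_n)$ with each $\pi_n$ univariate means that each component $\ell_n(\vz) = -\log \pi_n(z_n)$ depends on exactly one coordinate of $\vz$, namely the scalar $z_n$. Combining the two, the restriction $\mC_n$ of $\mC$ to the rows used by $\ell_n$ is a $1 \times d$ row with a single non-zero entry in column $n$ (the diagonal entry $C_{nn}$).

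Second, I would translate this into the indicator language of Section 3.2. Since only column $n$ of $\mC_n$ is non-zero, we have $\delta_{n,j} = \ind{j = n}$, hence $\sum_j \delta_{n,j} = 1$ for every $n = 1, \ldots, N$. Therefore the effective dimensionality is
\[
  d^{*} \;=\; \max_{n}\; {\textstyle\sum_{j}} \delta_{n,j} \;=\; 1,
\]
independent of $d = N$.

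Finally, I would invoke \cref{thm:quadratic_gradient_variance}. Since each $\ell_n$ is $L_n$-smooth by hypothesis and \cref{assumption:variation_family} is in force, the theorem yields
\[
  \mathrm{tr}\,\mathbb{V}\,\rvvg_M(\vlambda)
  \;\leq\;
  \frac{N}{M}\,(1 + k_{\varphi})\, \sum_{n=1}^{N} L_n^{2}\,
  \Bigl( \norm{\vm_n - \bar{\vz}_n}_2^{2} + \norm{\mC_n}_{\mathrm{F}}^{2} \Bigr),
\]
where the prefactor $d^{*} + k_{\varphi} = 1 + k_{\varphi}$ no longer depends on $d$. This is the claimed dimension-independence. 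There is no real obstacle here; the only thing worth checking carefully is that the ``structural'' indicator $\delta_{n,j}$ defined in terms of $\mC_n$ is forced to take the simple form $\ind{j=n}$ by the interplay of \textbf{(i)} the diagonal structure of $\mC$ in mean-field and \textbf{(ii)} the univariate dependence of each $\ell_n$.
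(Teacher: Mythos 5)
Your computation of the effective dimensionality is correct and is surely the observation the paper has in mind: under mean-field \(\mC\) is diagonal, each \(\ell_n(\vz) = -\log\pi_n(z_n)\) touches only the coordinate \(z_n\), so \(\mC_n\) has exactly one non-zero column and \(d^* = \max_n \sum_j \delta_{n,j} = 1\). The paper gives no separate proof of this corollary and presents it as an immediate consequence of \cref{thm:quadratic_gradient_variance}, so up to this point you are on the intended track.

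However, as written the argument does not deliver the stated conclusion. The bound you end with still carries the prefactor \(N/M\) and a sum of \(N\) terms, and in this setting \(N = d\) (one univariate factor per coordinate). Collecting the disjoint blocks, your bound is at most
\[
  \frac{N}{M}\,(1+k_{\varphi})\,\Bigl(\max_n L_n^2\Bigr)\left(\norm{\vm - \bar{\vz}}_2^2 + \norm{\mC}_{\mathrm{F}}^2\right),
\]
which scales linearly in \(d\): the dimension dependence has not been removed, only relocated from the \((d^* + k_{\varphi})\) factor into the \(N/M\) factor. That factor originates in \cref{thm:variance_of_sum_bound}, the generic worst-case bound for the variance of a sum of correlated terms — and the paper itself notes (in the discussion of the synthetic experiments) that this lemma is the source of a polynomial order of pessimism precisely when the component gradients are uncorrelated. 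The missing idea is that, under the hypotheses of the corollary, the summands \(\nabla_{\vlambda}\ell_n(\mathcal{T}^n_{\vlambda}(\rvvu))\) are \emph{mutually independent}: each is a function of \(\rvu_n\) alone (since \(\ell_n\) depends only on \(z_n = C_{nn}\rvu_n + m_n\) and its gradient is supported on the parameter block \((m_n, C_{nn})\)), and the components of \(\rvvu\) are independent by \cref{assumption:symmetric_standard}. Hence the trace of the variance of the sum equals the sum of the traces with no factor of \(N\), and combining this exact additivity with \cref{thm:gradnorm,thm:location_scale_reparam} (with \(\sum_j\delta_{n,j}=1\)) gives \(\mathrm{tr}\,\mathbb{V}\,\rvvg_M(\vlambda) \le \frac{1+k_{\varphi}}{M}\sum_{n=1}^N L_n^2\bigl(\abs{m_n-\bar{z}_n}^2 + C_{nn}^2\bigr) \le \frac{(1+k_{\varphi})}{M}\bigl(\max_n L_n^2\bigr)\bigl(\norm{\vm-\bar{\vz}}_2^2+\norm{\mC}_{\mathrm{F}}^2\bigr)\), whose prefactor is genuinely independent of \(d\). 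You should make this replacement of \cref{thm:variance_of_sum_bound} explicit rather than citing \cref{thm:quadratic_gradient_variance} wholesale.
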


\vspace{.5ex}
\begin{remark}
While \cref{corollary:meanfield} partially answers Conjecture 1 of \citet{kim_convergence_2023}, the general case for jointly-\(L\)-log-smooth posteriors remains open.
\end{remark}

\vspace{-1ex}
\paragraph{Improving the Dimension Dependence}
Admittedly, \cref{corollary:meanfield} is not very interesting since posteriors do not factorize as such for interesting problems.
However, \cref{thm:quadratic_gradient_variance} does suggest that we can shape the resulting dimension dependence of realistic problems by designing the structure of \(\mC_n\).
We will later show a specific structure for models with local variables that can improve the dependence on the number of datapoints.

\vspace{-1ex}
\paragraph{Complexity of BBVI}
Now, based on \cref{thm:quadratic_gradient_variance}, we can prove an iteration complexity bound on BBVI with proximal SGD (\cref{section:proximal_sgd}) and the reparameterization gradient (\cref{section:family}).
The proof is based on the general results on proximal SGD by \citet{gorbunov_unified_2020}, recently refined by \citet{garrigos_handbook_2023}.

\begin{theoremEnd}[category=complexitybbvifixed]{theorem}\label{thm:proxsgd_bbvi_complexity}
    Let \(\ell\) be \(\mu\)-strongly convex and \(L\)-smooth, \(\ell_n\) be \(L_n\)-smooth for \(n = 1, \ldots, N\), and
    \cref{assumption:variation_family} hold.
    Then, the last iterate \(\vlambda_{T+1}\) of BBVI with proximal SGD and \(\rvvg_M\) is \(\epsilon\)-close as \(\mathbb{E}\norm{ \vlambda_{T+1} - \vlambda^* }_2^2 \leq \epsilon\) to the global optimum \(\vlambda^* = \argmin F\left(\vlambda\right)\) if
{
\setlength{\abovedisplayskip}{1ex} \setlength{\abovedisplayshortskip}{1ex}
\setlength{\belowdisplayskip}{1.5ex} \setlength{\belowdisplayshortskip}{1.5ex}
  \begin{align*}
    T
    &\geq 
    \max\left(
    C_{\mathrm{var}} \frac{1}{\epsilon},\; C_{\mathrm{bias}} 
    \right)
    \log \left( 2 \Delta_0^2 \, \frac{1}{\epsilon} \right)
  \end{align*}
}%
  for some fixed stepsize \(\gamma\), where \(\Delta_0 = {\lVert \vlambda_0 - \vlambda^*\rVert}_2\) is the distance to the optimum,
{
\setlength{\abovedisplayskip}{1ex} \setlength{\abovedisplayshortskip}{1ex}
\setlength{\belowdisplayskip}{1ex} \setlength{\belowdisplayshortskip}{1ex}
  \begin{align*}
     C_{\mathrm{var}} &= 4 \frac{N}{M} \left(d^* + k_{\varphi}\right) {\textstyle\sum_{n=1}^N \kappa_n^2 \left(\norm{\vm_n^* - \bar{\vz}_n}_2^2 + \norm{\mC^*_n}_{\mathrm{F}}^2 \right)}
     \\
     C_{\mathrm{bias}} &= 2 \frac{N}{M} \left(d^* + k_{\varphi}\right) {\textstyle\sum_{n=1}^N} \kappa_n^2 + \kappa,
  \end{align*}
}%
  \(\kappa_n = L_n/\mu\), \(\kappa = L/\mu\) are the condition numbers, \(d^*\) is the effective dimensionality defined in \cref{thm:quadratic_gradient_variance}, \(\bar{\vz}_n\) is a stationary point of \(\ell_n\), and \(\vm_n^*, \mC^*_n\) are part of \(\vlambda^*\).
\end{theoremEnd}
\vspace{-1.5ex}
\begin{proofEnd}
    First, 
    \begin{enumerate}
        \item \cref{assumption:proximal_non_expansive} is satisfied as discussed in \cref{section:proxsgd_bbvi_details}, 
        \item \cref{assumption:convex_expected_smoothness} is satisfied by \cref{thm:expected_smoothness}, while
        \item \cref{assumption:bounded_gradient_variance} is satisfied by \cref{thm:quadratic_gradient_variance}.
    \end{enumerate}
    Furthermore, the constants are 
    \begin{align*}
        \mathcal{L} 
        = \frac{N}{M} \left(d^* + k_{\varphi}\right) \frac{\sum^N_{n=1} {L^2_n}}{\mu} + L
        \qquad\text{and}\qquad
        \sigma^2
        = 
        \frac{N}{M} \left(d^* + k_{\varphi}\right) \sum_{n=1}^N L_n^2\left(\norm{\vm_n^* - \bar{\vz}_n}_2^2 + \norm{\mC^*_n}_{\mathrm{F}}^2\right).
    \end{align*}
    Therefore, we can apply the results of \cref{thm:proxsgd_convergence_fixed} and consequently \cref{thm:proxsgd_complexity_fixed}, which states that an \(\epsilon\)-accurate solution can be achieved by a number of iterations of at least
    \begin{align*}
      T
      &\geq
      \max \left(
        \frac{4 \sigma^2}{\mu^2} \frac{1}{\epsilon},
        \frac{2 \mathcal{L}}{\mu},
        1
      \right)
      \log \left( 2 \lVert\vlambda_0 - \vlambda^*\rVert_2^2 \, \frac{1}{\epsilon} \right)
      \\
      &=
      \max \left(
        \frac{4 N}{M} \left(d^* + k_{\varphi}\right) \sum_{n=1}^N \frac{L^2_n}{\mu^2} \left(\norm{\vm_n^* - \bar{\vz}_n}_2^2 + \norm{\mC^*_n}_{\mathrm{F}}^2\right)
        \frac{1}{\epsilon},\;
        \frac{2 N}{M} \left(d^* + k_{\varphi}\right) \sum^N_{n=1} \frac{{L_n^2}}{\mu^2} + \frac{L}{\mu},\;
        1
      \right)
      \log \left( 2 \lVert\vlambda_0 - \vlambda^*\rVert_2^2 \, \frac{1}{\epsilon} \right),
\shortintertext{noticing that the second entry of the max function is always larger than 1, we have}
      &=
      \max \left(
        \frac{4 N}{M} \left(d^* + k_{\varphi}\right) \sum_{n=1}^N \frac{L_n^2}{\mu^2} \left(\norm{\vm_n^* - \bar{\vz}_n}_2^2 + \norm{\mC^*_n}_{\mathrm{F}}^2\right)
        \frac{1}{\epsilon},\;
        \frac{2 N}{M} \left(d^* + k_{\varphi}\right) \sum^N_{n=1} \frac{{L_n^2}}{\mu^2} + \frac{L}{\mu}
      \right)
      \log \left( 2 \lVert\vlambda_0 - \vlambda^*\rVert_2^2 \, \frac{1}{\epsilon} \right).
    \end{align*}
\end{proofEnd}


\vspace{.5ex}
\begin{remark}
   Since each evaluation of \(\rvvg_M\) takes \(\Theta\left(N M\right)\) time, \cref{thm:proxsgd_bbvi_complexity} implies a sample complexity of
   {
   \setlength{\abovedisplayskip}{1ex} \setlength{\abovedisplayshortskip}{1ex}
   \setlength{\belowdisplayskip}{1ex} \setlength{\belowdisplayshortskip}{1ex}
   \[
     \mathcal{O}\left(N^2 \, d^* {\textstyle\sum_{n=1}^N \kappa_n^2} \frac{1}{\epsilon} \log \frac{1}{\epsilon}\right),
   \]
   }%
   or equivalently, a complexity of \(\mathcal{O}\left(d^* N^3\right)\) after taking \(\sum_{n} \kappa_n^2 \leq N \max_{n} \kappa_{n}^2\).
\end{remark}

\vspace{0.5ex}
\begin{remark}
  It also possible to prove a \(\mathcal{O}\left(\nicefrac{1}{\epsilon}\right) \) complexity using the decreasing stepsize schedules of \citet{gower_sgd_2019,stich_unified_2019}.
\end{remark}

\vspace{-.5ex}
\subsection{Hierarchical Branched Structured Families}\label{section:structured}
\vspace{-.5ex}
Equipped with the results of \cref{section:bbvi_complexity}, we present a scale matrix structure that is more scalable in terms of dataset size dependence.
Consider a canonical 2-level Hierarchical model with local variables \(\vy_1, \ldots, \vy_N\) and a global variable \(\vz\) such that \(\vy_n \in \mathbb{R}^{d_y}\) and \(\vz \in \mathbb{R}^{d_z}\).
(All local variables are assumed to have the same dimensionality.)
Then, 
{%
\setlength{\abovedisplayskip}{1ex} \setlength{\abovedisplayshortskip}{1ex}
\setlength{\belowdisplayskip}{1ex} \setlength{\belowdisplayshortskip}{1ex}
\[
  \ell_n\left(\vy_n, \vz\right) = -\log p\left(\vx_n, \vy_n \mid \vz\right) - \frac{1}{N}\log p\left(\vz\right),
\]
}%
where \(\vx_n\) is the \(n\)th datapoint and \(d_n = d_{\vz} + d_{\vy}\).

\vspace{-1ex}
\paragraph{Structured Covariance Matrices}
We assume the variables are structured as 
{%
\setlength{\belowdisplayskip}{1ex} \setlength{\belowdisplayshortskip}{1ex}
\setlength{\abovedisplayskip}{1ex} \setlength{\abovedisplayshortskip}{1ex}
\begin{equation}
  {\begin{bmatrix}
    \vz^{\top} & \vy_1^{\top} & \ldots & \vy_n^{\top} & \ldots & \vy_N^{\top}
  \end{bmatrix}}^{\top},
  \label{eq:variable_structure}
\end{equation}
}%
with the noise vector \(\rvvu\) correspondingly structured as
{%
\setlength{\belowdisplayskip}{0ex} \setlength{\belowdisplayshortskip}{0ex}
\setlength{\abovedisplayskip}{1ex} \setlength{\abovedisplayshortskip}{1ex}
\begin{equation}
  {\begin{bmatrix}
    \rvvu_{\vz}^{\top} & \rvvu_{\vy_1}^{\top} & \ldots & \rvvu_{\vy_n}^{\top} & \ldots & \rvvu_{\vy_N}^{\top}
  \end{bmatrix}}^{\top}.
  \label{eq:noise_structure}
\end{equation}
}%

\begin{figure}[t]
  \centering
  \scalebox{0.85}{
  \pgfkeys{
  tikz/matrixbracestyle/.style={
    decoration={brace,raise=1ex},
    decorate, 
    thick
  }
}

\newcommand*\matrixbraceleft[4][C]{
    \draw[matrixbracestyle]
      (#1.west|-#1-#3-1.south west)
      -- 
      node[left=1.5ex] {#4} 
      (#1.west|-#1-#2-1.north west);
}

\newcommand*\matrixbraceright[4][C]{
    \draw[matrixbracestyle]
      (#1.east|-#1-#2-1.north east)
      -- 
      node[right=1.5ex] {#4} 
      (#1.east|-#1-#3-1.south east);
}

\newcommand*\matrixbracebottom[4][C]{
    \draw[matrixbracestyle] 
      (#1.south-|#1-4-#2.south east) 
      -- 
      node[below=1.5ex] {#4} 
      (#1.south-|#1-4-#3.south west);
}

\tikzset{
  greenish/.style={
    fill=green!50!lime!60,draw opacity=0.4,
    draw=green!50!lime!60,fill opacity=0.1,
  },
  cyanish/.style={
    fill=cyan!90!blue!60, draw opacity=0.4,
    draw=blue!70!cyan!30,fill opacity=0.1,
  },
  orangeish/.style={
    fill=orange!90, draw opacity=0.8,
    draw=orange!90, fill opacity=0.3,
  },
  brownish/.style={
    fill=brown!70!orange!40, draw opacity=0.4,
    draw=brown, fill opacity=0.3,
  },
  purpleish/.style={
    fill=violet!90!pink!20, draw opacity=0.5,
    draw=violet, fill opacity=0.3,    
  }
}

\begin{tikzpicture}[
   every left  delimiter/.style={xshift=1ex},
   every right delimiter/.style={xshift=-1ex},
]
    \matrix [
      matrix of math nodes,
      nodes in empty cells,
      left delimiter  = {[},
      right delimiter = {]},
      nodes = {
        minimum size = 3.2em,
        rectangle,
        anchor=center,
      }
    ] at (0,0) (C) {
      \mC_{\vz,\vz} \\
      \mC_{\vy_{1},\vz} & \mC_{\vy_{1},\vy_{1}} \\
      \vdots &  & \ddots \\
      \mC_{\vy_{N},\vz} & & & \mC_{\vy_{N},\vy_{N}} \\
    };
    

    \begin{scope}[on background layer]
      \draw[fill=gray!20, draw=none]
        (C-1-1.north west) 
        -- 
        (C-1-1.south east) 
        --
        (C-4-1.south east) 
        --
        (C-4-1.south west) 
        --
        cycle;
        
      \draw[fill=gray!20, draw=none]
        (C-2-2.north west) 
        -- 
        (C-2-2.south east) 
        --
        (C-2-2.south east) 
        --
        (C-2-2.south west) 
        --
        cycle;
        
      \draw[fill=gray!20, draw=none]
        (C-4-4.north west) 
        -- 
        (C-4-4.south east) 
        --
        (C-4-4.south east) 
        --
        (C-4-4.south west) 
        --
        cycle;
    
     
        
      \draw[fill=gray!20, draw=none]
        (C-3-3.north west) 
        -- 
        (C-3-3.south east) 
        --
        (C-3-3.south west) 
        --
        cycle;
    \end{scope}

    \begin{scope}[
      every node/.append style={transform shape},
    ]
      \matrixbraceleft{1}{1}{\(d_{\vz}\)}
      \matrixbraceleft{4}{4}{\(d_{\vy}\)}
      \matrixbraceright{1}{4}{\(d\)}
      \matrixbracebottom{1}{1}{\(d_{\vz}\)}
      \matrixbracebottom{4}{4}{\(d_{\vy}\)}
    \end{scope} 
\end{tikzpicture}
  }
  \vspace{-1ex}
  \caption{\textbf{Visualization of \(\mC\) under the proposed structure.} 
    The colored entries are non-zero, while the white entries are filled with zeros.
  }
  \label{fig:structure}
  \vspace{-2ex}
\end{figure}

Now for the \(\mC_{n} \in \mathbb{R}^{(d_{\vz} + d_{\vy}) \times d}\), we propose the following structure:
{%
\setlength{\belowdisplayskip}{1ex} \setlength{\belowdisplayshortskip}{1ex}
\setlength{\abovedisplayskip}{1ex} \setlength{\abovedisplayshortskip}{1ex}
\[
  \mC_{n}
  =
  \begin{bmatrix}
    \mC_{\vz,\vz}  & \boldupright{0} & \ldots & \boldupright{0} & \boldupright{0} & \boldupright{0} & \ldots & \boldupright{0} \\
    \mC_{\vy_n,\vz} & \boldupright{0} & \ldots & \boldupright{0} & \mC_{\vy_n,\vy_n} & \boldupright{0} & \ldots & \boldupright{0}
  \end{bmatrix},
\]
}%
\begin{center}
  \vspace{-1ex}
  {\begingroup
  \begin{tabular}{lll}
    where 
    & \(\boldupright{0}\) & is a zero block, \\
    & \(\mC_{\vz,\vz}\) & transforms \(\rvvu_{\vz}\) into \(\rvvz\), \\
    & \(\mC_{\vy_n,\vy_n}\) & transforms \(\rvvu_{\vy_n}\) into \(\rvvy_n\), and \\
    & \(\mC_{\vy_n, \vz}\) & correlates \(\rvvu_{\vz}\) with \(\rvvy_n\). \\
  \end{tabular}
  \endgroup}
  \vspace{-1ex}
\end{center}
We are essentially assuming \(\rvvy_n\) and \(\rvvz\) are correlated, but \(\rvvy_1, \ldots, \rvvy_N\) are conditionally independent such that \(\left(\rvvy_n \mid \rvvz\right) \perp\!\!\!\perp \left(\rvvy_m \mid \rvvz\right)\) for \(n \neq m\).
\citet{agrawal_amortized_2021} called this a ``hierarchical branched distribution'' approximation.
Also, combined with \cref{eq:variable_structure,eq:noise_structure}, the full matrix \(\mC\) exhibits a \textit{bordered block-diagonal structure} as visualized in \cref{fig:structure}.

\vspace{1ex}
\begin{remark}
    The proposed structure has a space/storage complexity of \(\Theta\left( \left( d_{\vz} d_{\vy} + d_{\vy}^2 \right) N\right)\).
    This improves over full-rank, which has a storage complexity of {\( \Theta\left({\left( d_{\vy} + d_{\vz} N \right)}^2\right) \)}.
\end{remark}


\vspace{-2ex}
\paragraph{Structured Variational Family Perspective}
By the property of triangular matrices, this implicitly forms a \textit{structured variational family} of the form of
{%
\setlength{\belowdisplayskip}{.5ex} \setlength{\belowdisplayshortskip}{.5ex}
\setlength{\abovedisplayskip}{.5ex} \setlength{\abovedisplayshortskip}{.5ex}
\[
  q\left(\vz, \vy_1, \ldots, \vy_N \right)
  =
  q\left(\vz\right) {\textstyle \prod^{N}_{n=1}} q\left(\vy_n \mid \vz \right).
\]
}%
Furthermore, when \(\varphi\) is chosen to be Gaussian, \(q\) becomes
{%
\setlength{\belowdisplayskip}{1ex} \setlength{\belowdisplayshortskip}{1ex}
\setlength{\abovedisplayskip}{1ex} \setlength{\abovedisplayshortskip}{1ex}
\begin{align}
  &q\left(\vz, \vy_1, \ldots, \vy_N \right)
  =
  \mathcal{N}\left(\vz; \vm_{\vz},  \mC_{\vz,\vz} \mC_{\vz,\vz}^{\top} \right) 
  \nonumber
  \\
  &\;\;\times
  {\textstyle \prod^{N}_{n=1}}
  \mathcal{N}\left(
  \vy_n;\;  \vm_{\vy_n} + \mC_{\vy_n,\vz}  \mC_{\vz,\vz}^{-1}  \left(\vz - \vm_{\vz} \right), \;
  \mC_{\vy_n,\vy_n} \mC_{\vy_n,\vy_n}^{\top}  
  \right)
  \label{eq:structure}
\end{align}
}%
Note that we do not actually compute \(\mC^{-1}_{\vz,\vz}\), and one can avoid it by directly accessing \(\rvvu_{\vz}\).
Also, \cref{eq:structure} is identical to the structure proposed by \citet[\S 3]{tan_use_2021} and \citet{tan_conditionally_2020}.
However, they did not consider the scalability aspect of this parameterization.
\vspace{-2ex}
\paragraph{Standardized Parameterization}
Notice that, in \cref{eq:structure}, \(\rvvz\) is first ``standardized'' before interacting with \(\rvvy_n\).
(Again, this happens implicitly as we directly use \(\rvvu_{\vz}\) instead of explicitly standardizing \(\rvvz\).)
Under this parameterization, the existence of the ``full'' scale matrix \(\mC\) guarantees the ELBO to be ``regular'' according to \citet{domke_provable_2020,titsias_doubly_2014,challis_gaussian_2013}:

\begin{theorem}[Theorem 9; \citealp{domke_provable_2020}]
  Let \cref{assumption:variation_family} hold and the standardized parameterization be used.
  Then, if the posterior \(\pi\) is log-concave, the negative ELBO is convex.
  If the posterior is also \(\mu\)-strongly log-concave, the negative ELBO is then \(\mu\)-strongly convex.
\end{theorem}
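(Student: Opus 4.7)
The plan is to decompose the negative ELBO as \(F(\vlambda) = f(\vlambda) + h(\vlambda)\) and show that each summand inherits convexity (respectively strong convexity) from log-concavity of the posterior. Since a sum of a (strongly) convex function and a convex function is (strongly) convex, this suffices.

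\paragraph{Convexity of the energy.}
First I would analyze \(f(\vlambda) = \mathbb{E}_{\rvvu\sim\varphi}\,\ell(\mathcal{T}_{\vlambda}(\rvvu))\). The crucial observation is that under the standardized parameterization \(\mathcal{T}_{\vlambda}(\vu) = \mC\vu + \vm\), the map \(\vlambda \mapsto \mathcal{T}_{\vlambda}(\vu)\) is affine in \(\vlambda = (\vm, \mC)\) for every fixed \(\vu\). Log-concavity of \(\pi\) means \(\ell = -\log\pi\) is convex, and the composition of a convex function with an affine map is convex, so \(\vlambda \mapsto \ell(\mathcal{T}_{\vlambda}(\vu))\) is convex for each \(\vu\). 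Taking expectation preserves convexity, hence \(f\) is convex.

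For the strongly convex case, I would reuse the identity already established in the proof of \cref{thm:expected_smoothness} (via \cref{thm:trace_estimator} and \cref{assumption:symmetric_standard}):
\[
  \mathbb{E}\,\lVert \mathcal{T}_{\vlambda}(\rvvu) - \mathcal{T}_{\vlambda'}(\rvvu)\rVert_2^2
  \;=\;
  \lVert \mC - \mC'\rVert_{\mathrm{F}}^2 + \lVert \vm - \vm'\rVert_2^2
  \;=\;
  \lVert \vlambda - \vlambda'\rVert_2^2.
\]
Applying the \(\mu\)-strong convexity of \(\ell\) pointwise in \(\vu\), taking expectation, and invoking Lemma 10 of \citet{kim_convergence_2023} (as in \cref{thm:expected_smoothness}) to identify \(\mathbb{E}\langle \nabla\ell(\mathcal{T}_{\vlambda'}(\rvvu)), \mathcal{T}_{\vlambda}(\rvvu)-\mathcal{T}_{\vlambda'}(\rvvu)\rangle\) with \(\langle \nabla f(\vlambda'), \vlambda-\vlambda'\rangle\) yields
\[
  f(\vlambda) \;\geq\; f(\vlambda') + \langle \nabla f(\vlambda'),\,\vlambda-\vlambda'\rangle + \tfrac{\mu}{2}\lVert\vlambda-\vlambda'\rVert_2^2,
\]
i.e.\ \(f\) is \(\mu\)-strongly convex in \(\vlambda\). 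The standardized parameterization is essential here: non-standardized parameterizations (as anticipated by the authors' \cref{thm:noncentered_param_convexity}) would introduce a nonlinear dependence of \(\vz\) on \(\vlambda\) and destroy this transfer.

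\paragraph{Convexity of the entropic regularizer.}
Next I would handle \(h(\vlambda) = -\mathbb{H}(q_{\vlambda})\). For any location-scale family built from the base distribution \(\varphi\) via \(\rvvz = \mC\rvvu + \vm\), the change-of-variables formula gives \(\mathbb{H}(q_{\vlambda}) = \mathbb{H}(\varphi) + \log\lvert\det \mC\rvert\); the location \(\vm\) and the base entropy contribute constants. Since \(\mC \in \mathbb{L}_{++}^d\) is lower triangular with strictly positive diagonal, \(\log\lvert\det\mC\rvert = \sum_{i=1}^d \log C_{ii}\), so
\[
  h(\vlambda) \;=\; -\mathbb{H}(\varphi) - \sum_{i=1}^d \log C_{ii},
\]
which is a sum of convex functions of single coordinates of \(\vlambda\), and therefore convex. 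Combining with the previous paragraph, \(F = f + h\) is convex whenever \(\pi\) is log-concave, and \(\mu\)-strongly convex whenever \(\pi\) is \(\mu\)-strongly log-concave.

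\paragraph{Main obstacle.}
The only delicate step is the transfer of the strong convexity constant from \(\vz\)-space to \(\vlambda\)-space without loss; everything else is routine. The identity \(\mathbb{E}\lVert\mathcal{T}_{\vlambda}(\rvvu)-\mathcal{T}_{\vlambda'}(\rvvu)\rVert_2^2 = \lVert\vlambda-\vlambda'\rVert_2^2\) relies squarely on \(\varphi\) being standardized so that \(\mathbb{E}\rvvu\rvvu^{\top} = \boldupright{I}\) (\cref{assumption:symmetric_standard}); without this, one would only obtain strong convexity with a constant degraded by the conditioning of the noise covariance. This is also why the result does not extend to the non-standardized parameterization analyzed in \cref{thm:noncentered_param_convexity}.
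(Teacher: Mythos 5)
Your proof is correct. Note, however, that the paper does not prove this statement at all: it is imported verbatim as Theorem~9 of \citet{domke_provable_2020}, so there is no in-paper proof to compare against. Your argument is essentially the standard one from that reference: convexity of \(f\) follows because \(\vlambda \mapsto \mathcal{T}_{\vlambda}(\vu)\) is affine for each fixed \(\vu\) and expectation preserves convexity; the strong-convexity transfer uses the isometry \(\mathbb{E}\lVert\mathcal{T}_{\vlambda}(\rvvu)-\mathcal{T}_{\vlambda'}(\rvvu)\rVert_2^2=\lVert\vlambda-\vlambda'\rVert_2^2\), which is exactly the ``quadratic functional growth'' computation the paper itself carries out in \cref{eq:functional_growth} inside the proof of \cref{thm:expected_smoothness} (there run in the opposite direction, to upper-bound \(\lVert\vlambda-\vlambda'\rVert_2^2\) by a Bregman divergence); and the convexity of \(h(\vlambda)=-\sum_i\log C_{ii}\) plus the indicator of \(\Lambda\) is stated in \cref{section:proxsgd_bbvi_details} with a citation to \citet{domke_provable_2023}. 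The one point worth flagging is that your strong-convexity step invokes \(\nabla\ell\) and the gradient--expectation interchange, which tacitly assumes \(\ell\) is differentiable; the theorem as stated only assumes (strong) log-concavity, so a fully rigorous version would either add that hypothesis or rephrase the argument with subgradients. This is a minor gap shared with the cited source and does not affect the substance.
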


\vspace{-3ex}
\paragraph{Non-Standardized Parameterization}
On the other hand, it is also possible to \textit{not} standardize \(\vz\).
This parameterization was considered by \citet[Table 1]{agrawal_amortized_2021}:
{%
\setlength{\belowdisplayskip}{1ex} \setlength{\belowdisplayshortskip}{1ex}
\setlength{\abovedisplayskip}{1ex} \setlength{\abovedisplayshortskip}{1ex}
\begin{align}
  &q\left(\vz, \vy_1, \ldots, \vy_N \right)
  =
  \mathcal{N}\left(\vz;\; \vm_{\vz}, \; \mC_{\vz,\vz} \mC_{\vz,\vz}^{\top} \right)
  \nonumber
  \\
  &\quad\times
  {\textstyle \prod^{N}_{n=1}}
  \mathcal{N}\left(\vy_n;\;  \vm_{\vy_n} + \mA_{n} \, \vz,\; 
 \mC_{\vy_n,\vy_n} \mC_{\vy_n,\vy_n}^{\top}  \right),
 \label{eq:conventional_param}
\end{align}
}%
where \(\mA_n\) is also an optimized parameter.
The ``expressiveness'' of this parameterization is equivalent to \cref{eq:structure}.
However, the loss-landscape is vastly different:

\begin{theoremEnd}[category=nonstandardconvexity]{theorem}\label{thm:noncentered_param_convexity}
Let \cref{assumption:variation_family} hold.
Then, under the non-standardized parameterization, there exists a strongly log-concave posterior for which the negative ELBO is not convex.
\end{theoremEnd}
\vspace{-2ex}
\begin{proofEnd}
Our proof is based on a negative example of the global convexity of the energy \( \vlambda \mapsto \mathbb{E} \ell\left( \mathcal{T}_{\vlambda}\left(\rvvu\right) \right)\).
This directly implies that the negative ELBO may not be convex even if the posterior is log-concave.

The non-standardized parameterization applies reparameterization as
\begin{align}
  \rvvz = \mC_{\vz, \vz} \rvvu_{\vz} + \vm_{\vz} \qquad\text{and}\qquad
  \rvvy \mid \rvvz = \mC_{\vy, \vy} \rvvu_{\vy} + \vm_{\vy} + \mC_{\vy, \vz} \rvvz
  \label{eq:nonstandard_reparam_functions}
\end{align}

Now, consider the negative log-joint likelihood 
\(
  \ell(\vz,\vy) = \norm{\vz}_2^2 + \norm{\vy}_2^2,
\)
which corresponds to a standard Gaussian posterior.
Naturally, the corresponding (normalized) posterior is strongly convex.
This model can also be viewed as a 2-level hierarchical model with a single data point such that \(N=1\).

Now, the energy can be computed as
\begin{alignat*}{2}
    \mathbb{E} \ell(\mathcal{T}_{\vlambda}\left(\rvvu\right))
    &=
    \mathbb{E} 
    \norm{\rvvz}_2^2 + \norm{\rvvy}_2^2
    \\
    &=
    \mathbb{E} 
    {\lVert \mC_{\vz, \vz} \rvvu_{\vz} + \vm_{\vz} \rVert}_2^2 
    + \mathbb{E}{\lVert \mC_{\vy, \vy} \rvvu_{\vy} + \vm_{\vy} + \mC_{\vy, \vz} \rvvz \rVert}_2^2
    &&\qquad\text{(\cref{eq:nonstandard_reparam_functions})}
    \\
    &=
    \mathbb{E} 
    {\lVert \mC_{\vz, \vz} \rvvu_{\vz} + \vm_{\vz} \rVert}_2^2 
    + \mathbb{E}{\lVert \mC_{\vy, \vy} \rvvu_{\vy} + \vm_{\vy} + \mC_{\vy, \vz} \,( \mC_{\vz, \vz} \rvvu_{\vz} + \vm_{\vz})  \rVert}_2^2.
    &&\qquad\text{(\cref{eq:nonstandard_reparam_functions})}
\end{alignat*}
For clarity, we will set \(\vm_{\vz} = \boldupright{0}\) and \(\vm_{\vy} = \boldupright{0}\).
Then, 
\begin{align*}
    \mathbb{E} \ell(\mathcal{T}_{\vlambda}\left(\rvvu\right))
    &=
    \mathbb{E} 
    {\lVert \mC_{\vz, \vz} \rvvu_{\vz}\rVert}_2^2 
    + 
    \mathbb{E}{\lVert \mC_{\vy, \vy} \rvvu_{\vy} + \mC_{\vy, \vz} \mC_{\vz, \vz} \rvvu_{\vz} \rVert}_2^2.
\end{align*}
For the first term,
\begin{alignat*}{2}
    \mathbb{E} 
    {\lVert \mC_{\vz, \vz} \rvvu_{\vz}\rVert}_2^2 
    =
    {\lVert \mC_{\vz, \vz} \rVert}_{\mathrm{F}}^2.
    &&\qquad\text{(\cref{assumption:symmetric_standard,thm:trace_estimator})}
\end{alignat*}
And for the second term,
\begin{alignat*}{2}
    \mathbb{E}{\lVert \mC_{\vy, \vy} \rvvu_{\vy} + \mC_{\vy, \vz} \mC_{\vz, \vz} \rvvu_{\vz} \rVert}_2^2
    &=
    \mathbb{E}{\lVert \mC_{\vy, \vy} \rvvu_{\vy} \rVert}_{2}^2
    +
    2 \mathbb{E}\inner{
      \mC_{\vy, \vy} \rvvu_{\vy}
    }{
      \mC_{\vy, \vz} \mC_{\vz, \vz} \rvvu_{\vz}
    }
    +
    \mathbb{E}{\lVert \mC_{\vy, \vz} \mC_{\vz, \vz} \rvvu_{\vz} \rVert}_{2}^2
    \\
    &=
    \mathbb{E}{\lVert \mC_{\vy, \vy} \rvvu_{\vy} \rVert}_{2}^2
    +
    2 \inner{
      \mC_{\vy, \vy} \mathbb{E}\rvvu_{\vy}
    }{
      \mC_{\vy, \vz} \mC_{\vz, \vz} \mathbb{E}\rvvu_{\vz}
    }
    +
    \mathbb{E}{\lVert \mC_{\vy, \vz} \mC_{\vz, \vz} \rvvu_{\vz} \rVert}_{2}^2
    &&\quad\text{(since \(\rvvu_{\vy} \perp\!\!\!\perp \rvvu_{\vz}\))}
    \\
    &=
    \mathbb{E}{\lVert \mC_{\vy, \vy} \rvvu_{\vy} \rVert}_{2}^2
    +
    \mathbb{E}{\lVert \mC_{\vy, \vz} \mC_{\vz, \vz} \rvvu_{\vz} \rVert}_{2}^2
    &&\quad\text{(\cref{assumption:symmetric_standard})}
    \\
    &=
    {\lVert \mC_{\vy, \vy} \rVert}_{\mathrm{F}}^2
    +
    {\lVert \mC_{\vy, \vz} \mC_{\vz, \vz} \rVert}_{\mathrm{F}}^2.
    &&\quad\text{(\cref{assumption:symmetric_standard,thm:trace_estimator})}
\end{alignat*}
Therefore, 
\begin{align*}
    \mathbb{E} \ell(\mathcal{T}_{\vlambda}\left(\rvvu\right))
    &=
    {\lVert \mC_{\vz, \vz} \rVert}_{\mathrm{F}}^2
    +
    {\lVert \mC_{\vy, \vy} \rVert}_{\mathrm{F}}^2
    +
    {\lVert \mC_{\vy, \vz} \mC_{\vz, \vz} \rVert}_{\mathrm{F}}^2.
\end{align*}
Now, consider the case where \(d_{\vy} = 1\) and \(d_{\rvvz} = 1\).
Then, the scale matrices are all scalars such that
\begin{align*}
    \mathbb{E} \ell(\mathcal{T}_{\vlambda}\left(\rvvu\right))
    &=
    C_{z, z}^2
    +
    C_{y, y}^2
    +
    {\left( C_{y, z} C_{z, z} \right)}^2.
\end{align*}
The convexity of this function with respect to \((C_{z, z}, C_{y, y}, C_{y, z})\) is equivalent to the convexity of 
\[
  f(x, y, z) = x^2 + z^2 + x^2 y^2
\]
on \(\mathbb{R}_{+} \times \mathbb{R}_+ \times \mathbb{R}_+\).
Unfortunately, this function is not convex:
notice that the Hessian determinant is given as
\[
   \mathrm{det} \, \nabla^2 f\left(x, y, z\right) = 8 x^2 (1 - 3 y^2),
\]
which is negative for some \(y \in \mathbb{R}_+\).
Specifically, for \(0 < y < 1/\sqrt{3}\).
The fact that the determinant is negative implies that, for this region, some of the eigenvalues of \(\nabla^2 f\) are negative, ruling out convexity.

\end{proofEnd}

\begin{remark}
    Even if the target posterior is strongly log-concave, the negative ELBO will fail to be strongly convex under the non-standardized parameterization.
    Therefore, convergence will be slower.
\end{remark}
\vspace{-2ex}
Overall, we have the following results:
\begin{lemma}\label{thm:structured_dimensionality}
  The effective dimensionality \(d^*\) of the bordered block-diagonal scale matrix structure is
{%
\setlength{\belowdisplayskip}{1ex} \setlength{\belowdisplayshortskip}{1ex}
\setlength{\abovedisplayskip}{1ex} \setlength{\abovedisplayshortskip}{1ex}
  \[
     d^* = d_{\vz} + d_{\vy}.
  \]
}
\end{lemma}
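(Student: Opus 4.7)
The proof is essentially a direct counting argument from the definition. Recall that \(d^* \triangleq \max_{n} \sum_{j} \delta_{n,j}\), where \(\delta_{n,j}\) indicates whether the \(j\)th column of the submatrix \(\mC_n\) is non-zero. Since the noise vector \(\rvvu\) is partitioned as in \cref{eq:noise_structure} into blocks \(\rvvu_{\vz}, \rvvu_{\vy_1}, \ldots, \rvvu_{\vy_N}\) of sizes \(d_{\vz}, d_{\vy}, \ldots, d_{\vy}\), the columns of \(\mC_n\) are naturally partitioned into the same blocks.

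The plan is to fix an arbitrary \(n\) and inspect the proposed structure
\[
  \mC_{n}
  =
  \begin{bmatrix}
    \mC_{\vz,\vz}  & \boldupright{0} & \ldots & \boldupright{0} & \boldupright{0} & \boldupright{0} & \ldots & \boldupright{0} \\
    \mC_{\vy_n,\vz} & \boldupright{0} & \ldots & \boldupright{0} & \mC_{\vy_n,\vy_n} & \boldupright{0} & \ldots & \boldupright{0}
  \end{bmatrix}
\]
block by block. Within the \(\rvvu_{\vz}\)-block (the first \(d_{\vz}\) columns), \(\mC_{\vz,\vz}\) is a diagonal block of a Cholesky factor in \(\mathbb{L}_{++}^{d_{\vz}}\) and therefore has strictly positive diagonal, so each of the \(d_{\vz}\) columns is non-zero. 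Within the \(\rvvu_{\vy_n}\)-block, \(\mC_{\vy_n,\vy_n} \in \mathbb{L}_{++}^{d_{\vy}}\) likewise contributes \(d_{\vy}\) non-zero columns. In every other block \(\rvvu_{\vy_m}\) with \(m \neq n\), the structure imposes the zero block \(\boldupright{0}\), so those columns contribute nothing.

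Summing gives \(\sum_{j} \delta_{n,j} = d_{\vz} + d_{\vy}\), independently of \(n\). Taking the maximum over \(n\) yields \(d^* = d_{\vz} + d_{\vy}\), which is the claim. I do not anticipate any real obstacle; the only subtlety is to justify that the diagonal blocks \(\mC_{\vz,\vz}\) and \(\mC_{\vy_n,\vy_n}\) actually contribute to \emph{every} one of their columns being non-zero, which follows from the Cholesky/\(\mathbb{L}_{++}\) parameterization discussed in \cref{section:parameterization} (item \ref{item:scale_diagonal}), guaranteeing strictly positive diagonal entries.
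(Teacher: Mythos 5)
Your counting argument is correct and is exactly the reasoning the paper relies on: the paper states this lemma without an explicit proof, treating it as immediate from the definition \(d^* = \max_n \sum_j \delta_{n,j}\) and the earlier observation that \(\sum_j \delta_{n,j}\) equals \(d_{\vy} + d_{\vz}\) when only the \(\rvvu_{\vz}\)- and \(\rvvu_{\vy_n}\)-blocks of \(\mC_n\) are non-zero. Your added justification that the diagonal blocks have strictly positive diagonals (so every column in those blocks is genuinely non-zero) is a sound and worthwhile detail, consistent with the Cholesky parameterization in \cref{section:parameterization}.
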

\vspace{-2ex}
This leads to our key result:


\begin{theoremEnd}[category=structuredfamilycomplexity]{corollary}\label{thm:scalability}
    Let the assumptions of \cref{thm:proxsgd_bbvi_complexity} hold and the structured variational family with a bordered block-diagonal structure matching that of \(\ell_1, \ldots, \ell_n\) be used.
    Then, the iteration complexity of finding an \(\epsilon\)-accurate solution using BBVI with proximal SGD, \(\rvvg_M\), and some fixed stepsize is 
{
\setlength{\abovedisplayskip}{.5ex} \setlength{\abovedisplayshortskip}{.5ex}
\setlength{\belowdisplayskip}{1.ex} \setlength{\belowdisplayshortskip}{1.ex}
  \begin{align*}
    \mathcal{O}
    \left(
      \frac{N}{M}
      \left(d_{\vz} + d_{\vy} + k_{\varphi}\right) 
      \sum_{n=1}^N \kappa_{n}^2 \,
      \frac{1}{\epsilon}
      \log \left( 2 \Delta_0^2 \, \frac{1}{\epsilon} \right)
    \right).
  \end{align*}
}%
\end{theoremEnd}

\begin{remark}
  Using the bordered block-diagonal scale matrix structure the sample complexity of BBVI is 
  {%
  \setlength{\belowdisplayskip}{1ex} \setlength{\belowdisplayshortskip}{1ex}
  \setlength{\abovedisplayskip}{1ex} \setlength{\abovedisplayshortskip}{1ex}
  \[
    \mathcal{O}\left(N^2 {\textstyle\sum^{N}_{n=1} \kappa_n^2} \frac{1}{\epsilon}\log \left(\frac{1}{\epsilon}\right)\right).
  \]
  }%
  and equivalently \(\mathcal{O}\left(N^3\right)\) after taking \(\sum^{N}_{n=1} \kappa_n^2 \leq N \max_{n} \kappa_{n}^2\).
  This is a factor of \(N\) improvement from full-rank families (\cref{remark:naive}).
\end{remark}

\begin{figure*}
    \centering
    \begin{minipage}[t]{0.70\textwidth}
        \subfloat[\textsf{full-rank}]{
            \centering
            \begin{tikzpicture}
\pgfkeys{/pgf/number format/.cd,
sci,
}
\begin{axis}[
    tuftelike,
    width=0.34\textwidth,
    height=0.35\textwidth,
    xmode=log,
    log basis x={10},
    scaled y ticks = false,
    ymax   = 40000.0,
    ymin   = 0.0,
    xlabel = {\footnotesize{}Stepsize \(\gamma\)},
    ylabel = {\footnotesize{}\(T\) for \(\epsilon\) accuracy},
    xmin={1e-5},
    xmax={0.1},
    max space between ticks=20
    scaled x ticks=false,
    axis line style = thick,
    axis x line shift=2ex,
    axis y line shift=2ex,
    every tick/.style={black,thick},
    tick label style={font=\small},  
    legend style={
        fill opacity=0.8,
        text opacity=1.0,
        draw=none,
        anchor=north east,
        at={(1.05,1.05)},
    },
    legend cell align=left,
]

     \draw[red,dashed] (axis cs:0,406) -- (axis cs:1,406);
	\addplot[color=ibmblue3, very thick] coordinates {
 (1.0e-6, 59999.0)
(1.2648552168552957e-6, 59999.0)
(1.5998587196060574e-6, 59999.0)
(2.0235896477251553e-6, 59999.0)
(2.559547922699533e-6, 59999.0)
(3.2374575428176463e-6, 59999.0)
(4.094915062380427e-6, 59999.0)
(5.179474679231212e-6, 59999.0)
(6.551285568595509e-6, 59999.0)
(8.286427728546843e-6, 54091.333333333336)
(1.0481131341546853e-5, 42806.0)
(1.3257113655901107e-5, 33872.0)
(1.67683293681101e-5, 26811.0)
(2.1209508879201923e-5, 21230.333333333332)
(2.682695795279727e-5, 16820.0)
(3.39322177189533e-5, 13331.333333333334)
(4.291934260128778e-5, 10571.666666666666)
(5.4286754393238594e-5, 8397.0)
(6.866488450042999e-5, 6678.333333333333)
(8.68511373751352e-5, 5318.333333333333)
(0.00010985411419875583, 4243.0)
(0.0001389495494373137, 3397.3333333333335)
(0.0001757510624854793, 2731.6666666666665)
(0.00022229964825261955, 2216.0)
(0.00028117686979742306, 1829.3333333333333)
(0.00035564803062231287, 1589.0)
(0.0004498432668969444, 59999.0)
(0.0005689866029018299, 59999.0)
(0.0007196856730011522, 59999.0)
(0.0009102981779915216, 59999.0)
(0.001151395399326448, 59999.0)
(0.0014563484775012444, 59999.0)
(0.0018420699693267163, 59999.0)
(0.0023299518105153716, 59999.0)
(0.0029470517025518097, 59999.0)
(0.0037275937203149418, 59999.0)
(0.004714866363457394, 59999.0)
(0.005963623316594642, 59999.0)
(0.007543120063354623, 59999.0)
(0.009540954763499945, 59999.0)
(0.012067926406393288, 59999.0)
(0.015264179671752335, 59999.0)
(0.019306977288832496, 59999.0)
(0.024420530945486497, 59999.0)
(0.03088843596477485, 59999.0)
(0.03906939937054621, 59999.0)
(0.04941713361323838, 59999.0)
(0.06250551925273976, 59999.0)
(0.07906043210907701, 59999.0)
(0.1, 59999.0)
    };
    \addlegendentry{\footnotesize\(n=100\)}
    
	\addplot[color=ibmblue2, very thick] coordinates {
 (1.0e-6, 59999.0)
(1.2648552168552957e-6, 59999.0)
(1.5998587196060574e-6, 59999.0)
(2.0235896477251553e-6, 59999.0)
(2.559547922699533e-6, 59999.0)
(3.2374575428176463e-6, 59999.0)
(4.094915062380427e-6, 59999.0)
(5.179474679231212e-6, 59999.0)
(6.551285568595509e-6, 59999.0)
(8.286427728546843e-6, 58620.666666666664)
(1.0481131341546853e-5, 46466.0)
(1.3257113655901107e-5, 36862.666666666664)
(1.67683293681101e-5, 29253.666666666668)
(2.1209508879201923e-5, 23262.333333333332)
(2.682695795279727e-5, 18537.0)
(3.39322177189533e-5, 14804.0)
(4.291934260128778e-5, 11872.0)
(5.4286754393238594e-5, 9579.333333333334)
(6.866488450042999e-5, 7827.333333333333)
(8.68511373751352e-5, 6587.0)
(0.00010985411419875583, 59999.0)
(0.0001389495494373137, 59999.0)
(0.0001757510624854793, 59999.0)
(0.00022229964825261955, 59999.0)
(0.00028117686979742306, 59999.0)
(0.00035564803062231287, 59999.0)
(0.0004498432668969444, 59999.0)
(0.0005689866029018299, 59999.0)
(0.0007196856730011522, 59999.0)
(0.0009102981779915216, 59999.0)
(0.001151395399326448, 59999.0)
(0.0014563484775012444, 59999.0)
(0.0018420699693267163, 59999.0)
(0.0023299518105153716, 59999.0)
(0.0029470517025518097, 59999.0)
(0.0037275937203149418, 59999.0)
(0.004714866363457394, 59999.0)
(0.005963623316594642, 59999.0)
(0.007543120063354623, 59999.0)
(0.009540954763499945, 59999.0)
(0.012067926406393288, 59999.0)
(0.015264179671752335, 59999.0)
(0.019306977288832496, 59999.0)
(0.024420530945486497, 59999.0)
(0.03088843596477485, 59999.0)
(0.03906939937054621, 59999.0)
(0.04941713361323838, 59999.0)
(0.06250551925273976, 59999.0)
(0.07906043210907701, 59999.0)
(0.1, 59999.0)
    };
    \addlegendentry{\footnotesize\(n=200\)}
    
	\addplot[color=ibmblue1, very thick] coordinates {
 (1.0e-6, 59999.0)
(1.2648552168552957e-6, 59999.0)
(1.5998587196060574e-6, 59999.0)
(2.0235896477251553e-6, 59999.0)
(2.559547922699533e-6, 59999.0)
(3.2374575428176463e-6, 59999.0)
(4.094915062380427e-6, 59999.0)
(5.179474679231212e-6, 59999.0)
(6.551285568595509e-6, 59999.0)
(8.286427728546843e-6, 59999.0)
(1.0481131341546853e-5, 49113.333333333336)
(1.3257113655901107e-5, 39146.666666666664)
(1.67683293681101e-5, 31280.666666666668)
(2.1209508879201923e-5, 25114.333333333332)
(2.682695795279727e-5, 20317.666666666668)
(3.39322177189533e-5, 16691.666666666668)
(4.291934260128778e-5, 14398.333333333334)
(5.4286754393238594e-5, 59999.0)
(6.866488450042999e-5, 59999.0)
(8.68511373751352e-5, 59999.0)
(0.00010985411419875583, 59999.0)
(0.0001389495494373137, 59999.0)
(0.0001757510624854793, 59999.0)
(0.00022229964825261955, 59999.0)
(0.00028117686979742306, 59999.0)
(0.00035564803062231287, 59999.0)
(0.0004498432668969444, 59999.0)
(0.0005689866029018299, 59999.0)
(0.0007196856730011522, 59999.0)
(0.0009102981779915216, 59999.0)
(0.001151395399326448, 59999.0)
(0.0014563484775012444, 59999.0)
(0.0018420699693267163, 59999.0)
(0.0023299518105153716, 59999.0)
(0.0029470517025518097, 59999.0)
(0.0037275937203149418, 59999.0)
(0.004714866363457394, 59999.0)
(0.005963623316594642, 59999.0)
(0.007543120063354623, 59999.0)
(0.009540954763499945, 59999.0)
(0.012067926406393288, 59999.0)
(0.015264179671752335, 59999.0)
(0.019306977288832496, 59999.0)
(0.024420530945486497, 59999.0)
(0.03088843596477485, 59999.0)
(0.03906939937054621, 59999.0)
(0.04941713361323838, 59999.0)
(0.06250551925273976, 59999.0)
(0.07906043210907701, 59999.0)
(0.1, 59999.0)
    };
    \addlegendentry{\footnotesize\(n=300\)}

\end{axis}
\end{tikzpicture}
            \vspace{-1ex}
        }\hspace{-1em}
        \subfloat[\textsf{structured}]{
            \centering
            \begin{tikzpicture}
\begin{axis}[
    tuftelike,
    width=0.34\textwidth,
    height=0.35\textwidth,
    xmode=log,
    log basis x={10},
    ymax   = 40000.0,
    ymin   = 0.0,
    xlabel = {\footnotesize{}Stepsize \(\gamma\)},
    ylabel = {},
    xmin={1e-5},
    xmax={0.1},
    axis y line=none,
    max space between ticks=20,
    scaled x ticks=false,
    axis line style = thick,
    axis x line shift=1.5ex,
    axis y line shift=1.5ex,
    every tick/.style={black,thick},
    tick label style={font=\small},  
    legend style={
        fill opacity=0.8,
        text opacity=1.0,
        draw=none,
        anchor=north east,
        at={(1.05,1.05)},
    },
    legend cell align=left,
]

	\addplot[color=ibmred3, very thick] coordinates {
(1.0e-6, 59999.0)
(1.2648552168552957e-6, 59999.0)
(1.5998587196060574e-6, 59999.0)
(2.0235896477251553e-6, 59999.0)
(2.559547922699533e-6, 59999.0)
(3.2374575428176463e-6, 59999.0)
(4.094915062380427e-6, 59999.0)
(5.179474679231212e-6, 59999.0)
(6.551285568595509e-6, 59999.0)
(8.286427728546843e-6, 53947.666666666664)
(1.0481131341546853e-5, 42658.333333333336)
(1.3257113655901107e-5, 33723.333333333336)
(1.67683293681101e-5, 26658.0)
(2.1209508879201923e-5, 21077.0)
(2.682695795279727e-5, 16667.333333333332)
(3.39322177189533e-5, 13179.333333333334)
(4.291934260128778e-5, 10418.666666666666)
(5.4286754393238594e-5, 8238.333333333334)
(6.866488450042999e-5, 6517.666666666667)
(8.68511373751352e-5, 5152.333333333333)
(0.00010985411419875583, 4072.6666666666665)
(0.0001389495494373137, 3219.6666666666665)
(0.0001757510624854793, 2545.3333333333335)
(0.00022229964825261955, 2014.0)
(0.00028117686979742306, 1592.3333333333333)
(0.00035564803062231287, 1259.0)
(0.0004498432668969444, 997.3333333333334)
(0.0005689866029018299, 789.0)
(0.0007196856730011522, 624.3333333333334)
(0.0009102981779915216, 495.6666666666667)
(0.001151395399326448, 393.3333333333333)
(0.0014563484775012444, 312.0)
(0.0018420699693267163, 248.66666666666666)
(0.0023299518105153716, 198.66666666666666)
(0.0029470517025518097, 159.33333333333334)
(0.0037275937203149418, 127.66666666666667)
(0.004714866363457394, 103.33333333333333)
(0.005963623316594642, 85.0)
(0.007543120063354623, 82.33333333333333)
(0.009540954763499945, 59999.0)
(0.012067926406393288, 59999.0)
(0.015264179671752335, 59999.0)
(0.019306977288832496, 59999.0)
(0.024420530945486497, 59999.0)
(0.03088843596477485, 59999.0)
(0.03906939937054621, 59999.0)
(0.04941713361323838, 59999.0)
(0.06250551925273976, 59999.0)
(0.07906043210907701, 59999.0)
(0.1, 59999.0)
    };
    \addlegendentry{\footnotesize\(n=100\)}
    
	\addplot[color=ibmred2, very thick] coordinates {
(1.0e-6, 59999.0)
(1.2648552168552957e-6, 59999.0)
(1.5998587196060574e-6, 59999.0)
(2.0235896477251553e-6, 59999.0)
(2.559547922699533e-6, 59999.0)
(3.2374575428176463e-6, 59999.0)
(4.094915062380427e-6, 59999.0)
(5.179474679231212e-6, 59999.0)
(6.551285568595509e-6, 59999.0)
(8.286427728546843e-6, 58084.0)
(1.0481131341546853e-5, 45930.333333333336)
(1.3257113655901107e-5, 36314.333333333336)
(1.67683293681101e-5, 28704.0)
(2.1209508879201923e-5, 22695.333333333332)
(2.682695795279727e-5, 17950.333333333332)
(3.39322177189533e-5, 14193.666666666666)
(4.291934260128778e-5, 11225.333333333334)
(5.4286754393238594e-5, 8875.333333333334)
(6.866488450042999e-5, 7020.666666666667)
(8.68511373751352e-5, 5554.0)
(0.00010985411419875583, 4394.0)
(0.0001389495494373137, 3473.0)
(0.0001757510624854793, 2747.6666666666665)
(0.00022229964825261955, 2175.3333333333335)
(0.00028117686979742306, 1722.0)
(0.00035564803062231287, 1363.0)
(0.0004498432668969444, 1081.3333333333333)
(0.0005689866029018299, 857.0)
(0.0007196856730011522, 680.6666666666666)
(0.0009102981779915216, 540.3333333333334)
(0.001151395399326448, 430.3333333333333)
(0.0014563484775012444, 344.0)
(0.0018420699693267163, 276.0)
(0.0023299518105153716, 222.66666666666666)
(0.0029470517025518097, 183.66666666666666)
(0.0037275937203149418, 162.0)
(0.004714866363457394, 59999.0)
(0.005963623316594642, 59999.0)
(0.007543120063354623, 59999.0)
(0.009540954763499945, 59999.0)
(0.012067926406393288, 59999.0)
(0.015264179671752335, 59999.0)
(0.019306977288832496, 59999.0)
(0.024420530945486497, 59999.0)
(0.03088843596477485, 59999.0)
(0.03906939937054621, 59999.0)
(0.04941713361323838, 59999.0)
(0.06250551925273976, 59999.0)
(0.07906043210907701, 59999.0)
(0.1, 59999.0)
    };
    \addlegendentry{\footnotesize\(n=200\)}
    
	\addplot[color=ibmred1, very thick] coordinates {
 (1.0e-6, 59999.0)
(1.2648552168552957e-6, 59999.0)
(1.5998587196060574e-6, 59999.0)
(2.0235896477251553e-6, 59999.0)
(2.559547922699533e-6, 59999.0)
(3.2374575428176463e-6, 59999.0)
(4.094915062380427e-6, 59999.0)
(5.179474679231212e-6, 59999.0)
(6.551285568595509e-6, 59999.0)
(8.286427728546843e-6, 59999.0)
(1.0481131341546853e-5, 47856.333333333336)
(1.3257113655901107e-5, 37837.333333333336)
(1.67683293681101e-5, 29909.0)
(2.1209508879201923e-5, 23649.666666666668)
(2.682695795279727e-5, 18704.666666666668)
(3.39322177189533e-5, 14790.666666666666)
(4.291934260128778e-5, 11699.666666666666)
(5.4286754393238594e-5, 9248.0)
(6.866488450042999e-5, 7319.0)
(8.68511373751352e-5, 5791.0)
(0.00010985411419875583, 4579.0)
(0.0001389495494373137, 3623.3333333333335)
(0.0001757510624854793, 2867.3333333333335)
(0.00022229964825261955, 2270.3333333333335)
(0.00028117686979742306, 1799.3333333333333)
(0.00035564803062231287, 1423.6666666666667)
(0.0004498432668969444, 1130.0)
(0.0005689866029018299, 898.0)
(0.0007196856730011522, 715.3333333333334)
(0.0009102981779915216, 570.6666666666666)
(0.001151395399326448, 458.0)
(0.0014563484775012444, 369.6666666666667)
(0.0018420699693267163, 301.3333333333333)
(0.0023299518105153716, 255.66666666666666)
(0.0029470517025518097, 59999.0)
(0.0037275937203149418, 59999.0)
(0.004714866363457394, 59999.0)
(0.005963623316594642, 59999.0)
(0.007543120063354623, 59999.0)
(0.009540954763499945, 59999.0)
(0.012067926406393288, 59999.0)
(0.015264179671752335, 59999.0)
(0.019306977288832496, 59999.0)
(0.024420530945486497, 59999.0)
(0.03088843596477485, 59999.0)
(0.03906939937054621, 59999.0)
(0.04941713361323838, 59999.0)
(0.06250551925273976, 59999.0)
(0.07906043210907701, 59999.0)
(0.1, 59999.0)
     };
   \addlegendentry{\footnotesize\(n=300\)}
\end{axis}
\end{tikzpicture}
            \vspace{-1ex}
        }\hspace{-1em}
        \subfloat[\textsf{mean-field}]{
            \centering
            \begin{tikzpicture}
\begin{axis}[
    tuftelike,
    width=0.35\textwidth,
    height=0.35\textwidth,
    xmode=log,
    log basis x={10},
    ymax   = 40000.0,
    ymin   = 0.0,
    xlabel = {\footnotesize{}Stepsize \(\gamma\)},
    ylabel = {},
    axis y line=none,
    xmin={1e-5},
    xmax={0.1},
    max space between ticks=20,
    scaled x ticks=false,
    axis line style = thick,
    axis x line shift=1.5ex,
    axis y line shift=1.5ex,
    every tick/.style={black,thick},
    tick label style={font=\small},  
    legend style={
        draw=none,
        fill opacity=0.8,
        text opacity=1.0,
        anchor=north east,
        at={(1.05,1.05)},
    },
    legend cell align=left,
]

	\addplot[color=ibmpurple3, very thick] coordinates {
 (1.0e-6, 59999.0)
(1.2648552168552957e-6, 59999.0)
(1.5998587196060574e-6, 59999.0)
(2.0235896477251553e-6, 59999.0)
(2.559547922699533e-6, 59999.0)
(3.2374575428176463e-6, 59999.0)
(4.094915062380427e-6, 59999.0)
(5.179474679231212e-6, 59999.0)
(6.551285568595509e-6, 59999.0)
(8.286427728546843e-6, 53941.666666666664)
(1.0481131341546853e-5, 42651.0)
(1.3257113655901107e-5, 33716.666666666664)
(1.67683293681101e-5, 26651.333333333332)
(2.1209508879201923e-5, 21070.0)
(2.682695795279727e-5, 16660.0)
(3.39322177189533e-5, 13172.333333333334)
(4.291934260128778e-5, 10412.333333333334)
(5.4286754393238594e-5, 8232.0)
(6.866488450042999e-5, 6510.333333333333)
(8.68511373751352e-5, 5146.0)
(0.00010985411419875583, 4067.0)
(0.0001389495494373137, 3214.0)
(0.0001757510624854793, 2540.0)
(0.00022229964825261955, 2008.3333333333333)
(0.00028117686979742306, 1586.6666666666667)
(0.00035564803062231287, 1254.3333333333333)
(0.0004498432668969444, 992.0)
(0.0005689866029018299, 784.3333333333334)
(0.0007196856730011522, 619.0)
(0.0009102981779915216, 489.6666666666667)
(0.001151395399326448, 386.3333333333333)
(0.0014563484775012444, 305.3333333333333)
(0.0018420699693267163, 241.66666666666666)
(0.0023299518105153716, 191.0)
(0.0029470517025518097, 151.0)
(0.0037275937203149418, 119.0)
(0.004714866363457394, 94.0)
(0.005963623316594642, 74.0)
(0.007543120063354623, 58.0)
(0.009540954763499945, 46.0)
(0.012067926406393288, 36.0)
(0.015264179671752335, 29.0)
(0.019306977288832496, 23.0)
(0.024420530945486497, 18.333333333333332)
(0.03088843596477485, 19.333333333333332)
(0.03906939937054621, 59999.0)
(0.04941713361323838, 59999.0)
(0.06250551925273976, 59999.0)
(0.07906043210907701, 59999.0)
(0.1, 59999.0)
    };
    \addlegendentry{\footnotesize\(n=100\)}
    
	\addplot[color=ibmpurple2, very thick] coordinates {
(1.0e-6, 59999.0)
(1.2648552168552957e-6, 59999.0)
(1.5998587196060574e-6, 59999.0)
(2.0235896477251553e-6, 59999.0)
(2.559547922699533e-6, 59999.0)
(3.2374575428176463e-6, 59999.0)
(4.094915062380427e-6, 59999.0)
(5.179474679231212e-6, 59999.0)
(6.551285568595509e-6, 59999.0)
(8.286427728546843e-6, 58074.0)
(1.0481131341546853e-5, 45920.666666666664)
(1.3257113655901107e-5, 36304.666666666664)
(1.67683293681101e-5, 28694.333333333332)
(2.1209508879201923e-5, 22684.666666666668)
(2.682695795279727e-5, 17939.666666666668)
(3.39322177189533e-5, 14182.666666666666)
(4.291934260128778e-5, 11214.0)
(5.4286754393238594e-5, 8863.666666666666)
(6.866488450042999e-5, 7009.333333333333)
(8.68511373751352e-5, 5542.666666666667)
(0.00010985411419875583, 4382.333333333333)
(0.0001389495494373137, 3462.6666666666665)
(0.0001757510624854793, 2736.3333333333335)
(0.00022229964825261955, 2163.6666666666665)
(0.00028117686979742306, 1710.0)
(0.00035564803062231287, 1351.3333333333333)
(0.0004498432668969444, 1069.3333333333333)
(0.0005689866029018299, 845.3333333333334)
(0.0007196856730011522, 668.3333333333334)
(0.0009102981779915216, 528.3333333333334)
(0.001151395399326448, 418.3333333333333)
(0.0014563484775012444, 330.6666666666667)
(0.0018420699693267163, 261.6666666666667)
(0.0023299518105153716, 207.0)
(0.0029470517025518097, 163.66666666666666)
(0.0037275937203149418, 129.33333333333334)
(0.004714866363457394, 103.0)
(0.005963623316594642, 81.33333333333333)
(0.007543120063354623, 65.0)
(0.009540954763499945, 52.666666666666664)
(0.012067926406393288, 43.0)
(0.015264179671752335, 39.333333333333336)
(0.019306977288832496, 59999.0)
(0.024420530945486497, 59999.0)
(0.03088843596477485, 59999.0)
(0.03906939937054621, 59999.0)
(0.04941713361323838, 59999.0)
(0.06250551925273976, 59999.0)
(0.07906043210907701, 59999.0)
(0.1, 59999.0)
    };
    \addlegendentry{\footnotesize\(n=200\)}
    
	\addplot[color=ibmpurple1, very thick] coordinates {
 (1.0e-6, 59999.0)
(1.2648552168552957e-6, 59999.0)
(1.5998587196060574e-6, 59999.0)
(2.0235896477251553e-6, 59999.0)
(2.559547922699533e-6, 59999.0)
(3.2374575428176463e-6, 59999.0)
(4.094915062380427e-6, 59999.0)
(5.179474679231212e-6, 59999.0)
(6.551285568595509e-6, 59999.0)
(8.286427728546843e-6, 59999.0)
(1.0481131341546853e-5, 47841.666666666664)
(1.3257113655901107e-5, 37822.0)
(1.67683293681101e-5, 29894.333333333332)
(2.1209508879201923e-5, 23634.666666666668)
(2.682695795279727e-5, 18689.666666666668)
(3.39322177189533e-5, 14776.333333333334)
(4.291934260128778e-5, 11685.333333333334)
(5.4286754393238594e-5, 9233.666666666666)
(6.866488450042999e-5, 7304.333333333333)
(8.68511373751352e-5, 5776.666666666667)
(0.00010985411419875583, 4564.666666666667)
(0.0001389495494373137, 3607.6666666666665)
(0.0001757510624854793, 2853.0)
(0.00022229964825261955, 2255.0)
(0.00028117686979742306, 1782.3333333333333)
(0.00035564803062231287, 1409.0)
(0.0004498432668969444, 1114.3333333333333)
(0.0005689866029018299, 881.3333333333334)
(0.0007196856730011522, 698.0)
(0.0009102981779915216, 552.0)
(0.001151395399326448, 437.6666666666667)
(0.0014563484775012444, 346.6666666666667)
(0.0018420699693267163, 274.3333333333333)
(0.0023299518105153716, 217.66666666666666)
(0.0029470517025518097, 172.33333333333334)
(0.0037275937203149418, 137.33333333333334)
(0.004714866363457394, 109.33333333333333)
(0.005963623316594642, 87.66666666666667)
(0.007543120063354623, 70.66666666666667)
(0.009540954763499945, 60.666666666666664)
(0.012067926406393288, 13660.333333333334)
(0.015264179671752335, 59999.0)
(0.019306977288832496, 59999.0)
(0.024420530945486497, 59999.0)
(0.03088843596477485, 59999.0)
(0.03906939937054621, 59999.0)
(0.04941713361323838, 59999.0)
(0.06250551925273976, 59999.0)
(0.07906043210907701, 59999.0)
(0.1, 59999.0)
    };
    \addlegendentry{\footnotesize\(n=300\)}

\end{axis}
\end{tikzpicture}
            \vspace{-1ex}
        }
        \caption{
            \textbf{Number of iterations \(T\) required to obtain \(\epsilon\) accuracy of variational families for a given stepsize \(\gamma\).}
            \textsf{structured} behaves similarly to \textsf{mean-field}, while \textsf{full-rank} requires significantly more number of iterations, which also scales worse with respect to the number of datapoints \(n\).
        }\label{fig:synthetic}
    \end{minipage}
    \hfill
    \begin{minipage}[t]{0.27\textwidth}
        \centering
        \begin{tikzpicture}
\begin{axis}[
    tuftelike,
    height =0.90\textwidth,
    width  =0.90\textwidth,
    xlabel = {\small{}\# of datapoints \(n\)},
    ylabel = {\small{}\(T\) for \(\epsilon\)-accuracy},
    xmin={100},
    xmax={500},
    ymin={0},
    ymax={10000},
    y tick label style={/pgf/number format/sci},
    scaled x ticks  =false,
    scaled y ticks  = false,
    axis line style = thick,
    axis x line shift=2ex,
    axis y line shift=2ex,
    every tick/.style={black,thick},
    tick label style={font=\small},  
    legend style={
        fill opacity=0.,
        text opacity=1.0,
         draw=none,
         anchor=north east,
         at={(1.1,0.6)},
    },
    legend cell align=left,
]
	\addplot[color=ibmred2, very thick] coordinates {
 (100, 82.33333333333333)
(150, 119.66666666666667)
(200, 162.0)
(250, 208.33333333333334)
(300, 255.66666666666666)
(350, 317.0)
(400, 341.3333333333333)
(450, 411.0)
(500, 437.0)
    };
    \addlegendentry{\footnotesize\textsf{structured}}
    
	\addplot[color=ibmpurple2, very thick] coordinates {
(100, 18.333333333333332)
(150, 28.333333333333332)
(200, 39.333333333333336)
(250, 49.666666666666664)
(300, 60.666666666666664)
(350, 76.0)
(400, 82.66666666666667)
(450, 98.0)
(500, 105.0)
    };
    \addlegendentry{\footnotesize\textsf{mean-field}}
    
	\addplot[color=ibmblue2, very thick] coordinates {
(100, 1589.0)
(150, 3555.6666666666665)
(200, 6587.0)
(250, 10600.0)
(300, 14398.333333333334)
(350, 19864.0)
(400, 27448.333333333332)
(450, 34703.0)
(500, 41314.0)
    };
    \addlegendentry{\footnotesize\textsf{full-rank}}
\end{axis}
\end{tikzpicture}
        \vspace{-1ex}
        \caption{\textbf{Scaling of variational families with respect to the number of datapoints \(n\)}.
        \textsf{full-rank} exhibits a worst scaling than \textsf{structured} and \textsf{mean-field}.
        }\label{fig:synthetic_scaling}
    \end{minipage}
    \vspace{-2ex}
\end{figure*}

\begin{table*}[b]
\vspace{-1ex}
\caption{Models and Datasets used in the Experiments}\label{table:datasets}
\vspace{-1ex}
\begin{center}
\begin{tabular}{lrrrrrrr}
\toprule
\multicolumn{1}{c}{\multirow{2}{*}{\textbf{Problem}}}
& \multicolumn{4}{c}{\textbf{Dimensions}}
& \multicolumn{3}{c}{\textbf{\# of Variational Parameters} (\(p\))}
\\
\cmidrule(lr){2-5} \cmidrule(lr){6-8}
& \multicolumn{1}{c}{\(N\)} & \multicolumn{1}{c}{\(d_y\)} & \multicolumn{1}{c}{\(d_z\)} & \multicolumn{1}{c}{\(d\)} & \textsf{mean-field} & \textsf{structured} & \textsf{full-rank}
\\
\midrule
\textsf{rpoisson-small}    &  1,961 & \multirow{3}{*}{1} & \multirow{3}{*}{16}   &  1,977 &   3,954 & 35,450 & 1,957,230 \\
\textsf{rpoisson-middle}    &  3,922 &                    &                       &  3,938  &   7,876  & 70,748 & 7,759,829 \\
\textsf{rpoisson-large}   & 19,609 &                    &                       & 19,625  & 39,282  & 353,114 & %
\\
\midrule
\textsf{volatility-small}  &   262 & \multirow{3}{*}{6} & \multirow{3}{*}{33}   &  1,605 & 3,210  & 59,544 & 1,290,420 \\
\textsf{volatility-middle}  &   522 &                    &                       &  3,165 & 6,210 & 118,044 & 4,825,170 \\
\textsf{volatility-large} &  2,579 &                    &                       & 15,507 & 31,014 & 580,869 & 
\\
\midrule
\textsf{irt-small}         &  3,348 & \multirow{3}{*}{1} &  \multirow{3}{*}{193} &  3,541 & 7,082 & 671,774 & 6,274,652 \\
\textsf{irt-middle}          &  6,695 &                    &                       &  6,888 & 13,776 & 1,324,439 & 23,732,604 \\
\textsf{irt-large}          & 33,475 &                    &                       & 33,668 & 67,336 & 6,546,539 &  %
\\
\bottomrule
\end{tabular}
\end{center}
\vspace{-2ex}
\end{table*}
\begin{figure*}[t]
    \newcommand{\figurescale}{0.95}
    \newcommand{\figurevspace}{-1.5ex}
    \centering
    \subfloat[\textsf{rpoisson-small}]{
      \includegraphics[scale=\figurescale]{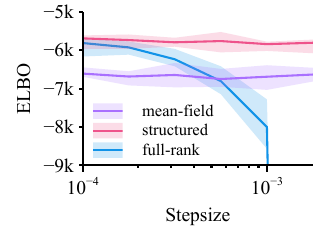}
      \vspace{\figurevspace}
    }
    \subfloat[\textsf{rpoisson-middle}]{
      \includegraphics[scale=\figurescale]{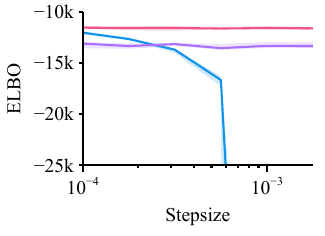}
      \vspace{\figurevspace}
    }
    \subfloat[\textsf{rpoisson-large}]{
      \includegraphics[scale=\figurescale]{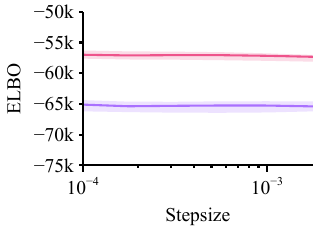}
      \vspace{\figurevspace}
    }
    \\
    \subfloat[\textsf{volatility-small}]{
      \includegraphics[scale=\figurescale]{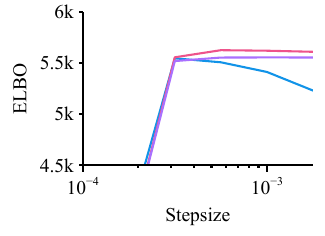}
      \vspace{\figurevspace}
    }
    \subfloat[\textsf{volatility-middle}]{
      \includegraphics[scale=\figurescale]{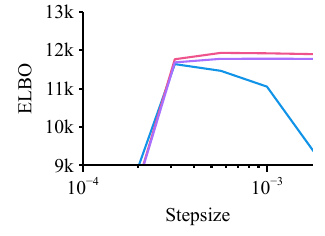}
      \vspace{\figurevspace}
    }
    \subfloat[\textsf{volatility-large}]{
      \includegraphics[scale=0.8]{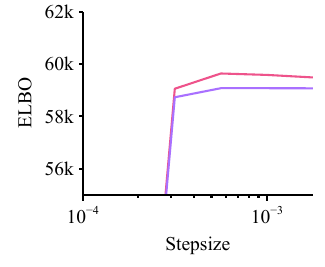}
      \vspace{\figurevspace}
    }
    \\
    \subfloat[\textsf{irt-small}]{
      \includegraphics[scale=\figurescale]{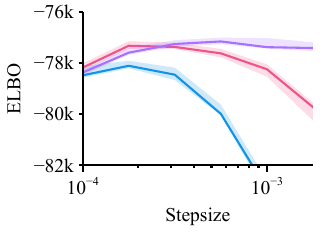}
      \vspace{\figurevspace}
    }
    \subfloat[\textsf{irt-middle}]{
      \includegraphics[scale=\figurescale]{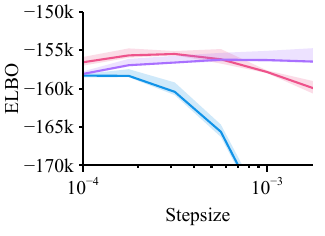}
      \vspace{\figurevspace}
    }
    \subfloat[\textsf{irt-large}]{
      \includegraphics[scale=\figurescale]{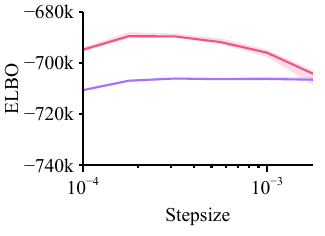}
      \vspace{\figurevspace}
    }
    \caption{
    \textbf{ELBO at \(T = 5 \times 10^4\) versus the optimizer stepsize (\(\gamma\)) on the considered problems with varying dataset sizes.} 
    The solid lines are the median over 8 independent replications, while the colored bands mark the 80\% empirical percentiles.
    }
    \label{fig:irt_stepsize}
    \vspace{-2ex}
\end{figure*}

\vspace{-0.5ex}
\section{Experiments}
\vspace{-0.5ex}
We now empirically evaluate our theoretical analysis in \cref{section:theory}.
Mainly, we will compare the scalability of mean-field, full-rank, and the structured variational family described in~\cref{section:structured}.

\vspace{-0.5ex}
\subsection{Synthetic Experiments}
\vspace{-0.5ex}
\subsubsection{Setup}
To quantitatively verify the theoretical results in \cref{section:theory}, we use proximal SGD with the proximal operator described in \cref{section:proxsgd_bbvi_details} to match the theory.
For the target distribution, we use an isotropic Gaussian target distribution of \(\ell_n\left(\vy_n, \vz\right) = -\log \mathcal{N}\left(\vy_n; 5 \boldupright{1}_{d_y}, 0.1 \boldupright{I}_{d_y}\right) - \log \mathcal{N}\left(\vz_n; 5 \boldupright{1}_{d_z}, 0.1 \boldupright{I}_{d_z}\right)\) (mean 5 and variance 0.1) where we set \(d_z = 5\), \(d_y = 3\), and vary the ``number of datapoints'' \(n\).
All variational families are initialized with a standard Gaussian.
We then run BBVI with \(M=8\) Monte Carlo samples, 50 different stepsizes over \([10^{-6}, 1]\), and estimate the sequence of expected distance to the optimum \({(r_1)}_{t \geq 0}\), where \(r_t \triangleq \mathbb{E}\norm{\vlambda_t - \vlambda^*}_2^2\).
We then estimate the minimum number of iterations \(T\) required to hit \(\epsilon\) accuracy such that \(r_{T-1} > \epsilon\) and \(r_{T} \leq \epsilon\).
We set \(\epsilon = 1\) in all cases.

\vspace{-0.5ex}
\subsubsection{Results}
\vspace{-0.5ex}
\paragraph{Effect of Stepsize}
We demonstrate the effect of stepsize on the number of iterations required to achieve \(\epsilon\)-accuracy in \cref{fig:synthetic}.
Under our setup, the distance to the optimum scales as \(\norm{\vlambda_0 - \vlambda^*}_2^2 = \Theta(n)\).
Therefore, all methods show an increase in the minimum \(T\) as \(n\) increases.
However, the \(T\) required by \textsf{full-rank} appears to increase much faster than \textsf{structured} and \textsf{mean-field}.
This is because, as predicted in \cref{section:theory}, the variance of \textsf{full-rank} also increases in \(n\), forcing the use of a smaller stepsize \(\gamma\).
Clearly, the range of stepsizes \textsf{full-rank} achieves the \(\epsilon\) accuracy threshold is much narrower and shrinks faster as \(n\) increases.
In sharp contrast, \textsf{structured} behaves very similarly to \(\textsf{mean-field}\).

\vspace{-1ex}
\paragraph{Scaling w.r.t. \(N\)}
Now, by picking the stepsize that achieves the lowest \(T\) for hitting the \(\epsilon\) accuracy target for each curve in \cref{fig:synthetic}, we can directly evaluate the iteration complexity bounds in \cref{section:theory}.
For this, we examine the scaling with respect to \(n\), which is shown in \cref{fig:synthetic_scaling}.
We can see that \textsf{full-rank} scales much worse than both \textsf{mean-field} and \textsf{structured}.
In fact, \textsf{full-rank} exhibits a quadratic scaling while \textsf{mean-field} and \textsf{structured} exhibit a linear scaling.
This confirms our theory \cref{section:theory} that \textsf{structured} is able to shave off a polynomial order in iteration complexity.
Now, one may notice that the scaling observed in \cref{section:theory} are all a polynomial order better than what the theory in \cref{section:theory} predicts (\(\mathcal{O}(N^3)\) for \textsf{full-rank} and \(\mathcal{O}(N^2)\) for structured when taking \(\sum_{n=1}^N \kappa_n = \Theta(N)\)).
This is because our target posterior is an isotropic Gaussian; there are no correlations between the gradient estimator of each component.
Therefore, our theory is a polynomial-order pessimistic due to the use of \cref{thm:variance_of_sum_bound} in \cref{section:auxiliary}.

Note that, in principle, since the dimensionality of the posterior is increasing, \(\epsilon\) needs to be increased at a rate of \(n\) so that we allocate an equal ``accuracy'' budget to each coordinate.
Therefore, considering the fact that we fix \(\epsilon\) irrespective of \(n\), the linear scaling of \(T\) w.r.t. to \(n\) is actually benign.


\subsection{Realistic Experiments}\label{section:realworld}
We will now qualitatively verify our theory on more complex models and real-world datasets.
For a fixed budget of gradient evaluations, we will compare the ELBO value obtained at the end of BBVI for the three variational families.
Recall that, in SGD, the stepsize \(\gamma\) performs a trade-off between the accuracy of the final solution versus the convergence speed.
We will demonstrate that, as the theory predicts, \textsf{structured} provide a more favorable trade-off compared to full-rank and mean-field, especially in the large data regime (large \(n\)).

\vspace{-.5ex}
\subsubsection{Setup}
\vspace{-.5ex}
\paragraph{Stochastic Optimization}
While our theoretical results use proximal SGD, we use regular SGD without projection for our experiments.
In practice, as long as the initialization is sensible, proximal SGD does not provide additional benefits both in practice~\citep{kim_convergence_2023} and in theory~\citep{domke_provable_2023}, and simple SGD is the most common method for performing BBVI in practice.
Furthermore, we will consider only fixed stepsizes.
In practice, step-decay stepsize schedules~\citep{goffin_convergence_1977} are believed to be superior to polynomial decay schedules or naive fixed stepsizes.
However, the theoretical evidence for this is not yet clear.
(See \citealt{wang_convergence_2021,ge_step_2019} for related investigations.)
Also, for the sake of experimentation, step-decay stepsizes are hard to control due to the additional number of configurations, such as the number of stages, amount of decay per stage, and such.


\vspace{-1.5ex}
\paragraph{Implementation}
We implemented our experiments using the Turing ecosystem~\citep{ge_turing_2018} in the Julia language~\citep{bezanson_julia_2017}.
The structured covariances were implemented using the compressed sparse column (CSC) sparse matrix interface provided by the \texttt{CUDA.jl} library~\citep{besard_effective_2019}, while the sparse derivatives were implemented using the \texttt{Zygote.jl} framework~\citep{innes_don_2018}.
We use \(8\) Monte Carlo samples and the Adam optimizer~\citep{kingma_adam_2015} for all problems, while the reported ELBOs are estimated using 1024 Monte Carlo samples every 100 iterations.
The variational families are Gaussian such that \(\varphi = \mathcal{N}\left(0,1\right)\).

\vspace{-1.5ex}
\paragraph{Models and Dataset}
We use three different Hierarchical Bayesian models, \textsf{volatility}, \textsf{rpoisson}, and \textsf{irt}, which have local variables.
\textsf{rpoisson} is a robust generalized linear regression model known as the Poisson-log-normal model~\citep[\S 4.2.4]{cameron_regression_2013}.
We treat the regression coefficients and the hyperparameters as global variables, while the individual-level response, which is modeled to include log-normal noise, is treated as the local variable.
\textsf{volatility} is a multivariate stochastic volatility model~\citep{chib_multivariate_2009,naesseth_variational_2018}.
We treat the hyperparameters as the global variables, while the latent volatilities are treated as the local variables.
Lastly, \textsf{irt} is a two-parameter logistic item response theory (IRT) model~\citep{lord_statistical_2008,wu_variational_2020}.
We treat the ``ability'' of each student as local variables, while the hyperparameters and item-level variables are treated as global variables.
The full description of these models can be found in \cref{section:models}.
To evaluate the effect of dataset size, we use subsets of the full datasets, as shown in \cref{table:datasets}.
For the initial point, we use \(q_{\vlambda_0} = \mathcal{N}\left(\boldupright{0}, 10^{-2} \boldupright{I}\right) \) for all experiments.

\subsubsection{Results}


The effect of the stepsize \(\gamma\) are shown in \cref{fig:irt_stepsize}.
Additional results can be found in \cref{section:additional_results}.
Notice that full-rank is much more sensitive to the stepsize compared to \textsf{structured} and \textsf{mean-field}.
This exactly aligns with the theory: larger gradient variance means that the size of the stationary region of SGD is wider.
Therefore, the quality of the solution is much more sensitive to the stepsize.
Since \textsf{full-rank} has the largest amount of gradient variance, followed by \textsf{structured}, and then \textsf{mean-field}, the sensitivity to the stepsize follows the same ordering.

Overall, our experimental results demonstrate that, for fixed stepsize SGD, the complexity of the variational family trades optimization speed for the statistical accuracy of the variational approximation.
This re-affirms the results of \citet{bhatia_statistical_2022} on a more realistic setup.

\section{Discussions}\label{section:conclusions}
\paragraph{Conclusions}
In this work, we have theoretically investigated the limitations of full-rank variational families in BBVI. 
Specifically, the dimensional scaling of the gradient variance of full-rank variational families.
This is particularly problematic for Bayesian models with local variables, implying that BBVI with full-rank variational families do not scale to larger datasets.
Fortunately, we have rigorously shown that variational families with structured scale matrices are able to improve this scaling issue.
We have evaluated this theoretical insight on large-scale Hierarchical Bayesian models and have confirmed that practice agrees with the theory.
Furthermore, our analysis provides a precise quantitative analysis of how certain scale matrix structures would improve the computational complexity of BBVI.

\vspace{-1.5ex}
\paragraph{Related Works}
Structured variational approximations have a long history since their use in hidden Markov models~\citep{saul_exploiting_1995,ghahramani_factorial_1997}, CAVI~\citep{hoffman_stochastic_2015,mimno_sparse_2012,rohde_semiparametric_2016}, fixed-form variational Bayes~\citep{salimans_fixedform_2013}, BBVI~\citep{ong_gaussian_2018,quiroz_gaussian_2023,ranganath_hierarchical_2016,tan_use_2021,tan_conditionally_2020}, natural gradient VI~\citep{lin_fast_2019}, and now modern approaches such as amortized VI~\citep{archer_black_2015,johnson_composing_2016,webb_faithful_2018,sonderby_ladder_2016,agrawal_amortized_2021,maaloe_auxiliary_2016,gao_linear_2016,yu_structured_2022} and normalizing flows~\citep{ambrogioni_automatic_2021,ambrogioni_automatic_2021a}.
Our proposed covariance structure is identical to that of \citet{tan_use_2021,tan_conditionally_2020}.
But, they did not consider the computational complexity of this parameterization, for which we rigorously prove its scalability with respect to \(N\).


\vspace{-1.5ex}
\paragraph{Limitations of the Current Theory}
In the \(\mathcal{O}(N^3)\) sample complexity we obtained (after making the implicit \(N\) dependence in \(\sum_{n} \kappa_n = \mathcal{O}(N)\) explicit), an excess factor of \(N\) comes from the fact that the gradient variance analysis strategy of \citet{domke_provable_2019} results in an \(\mathcal{O}\left(L^2\right)\) dependence on the smoothness constant \(L = \mathcal{O}\left(N\right)\).
We conjecture that an \(\mathcal{O}(L)\) dependence is realistic, which would match that of empirical risk minimization and also imply a dataset size dependence of \(\mathcal{O}(N^2)\) for hierarchical models.
This also matches the following intuition: computing the gradient costs \(\Theta(N)\), while, by posterior contraction, the smoothness naturally scales as \(\mathcal{O}(N)\).
Therefore, we conjecture a \(\mathcal{O}(N^2)\) complexity.
Note that this is also closely related to the fact that the complexity of BBVI currently has a \(\mathcal{O}\left(\kappa^2\right)\) dependence on \(\kappa\), which is believed to be loose.


\clearpage

\section*{Acknowledgements}
J. Ko and W. Kim were supported by the National Research Foundation of Korea (NRF) grant funded by the Ministry of
Science and ICT (NRF-2022M3J6A1063021 and RS-2023-
00208980); K. Kim was supported by a gift from AWS AI to Penn Engineering's ASSET Center for Trustworthy AI; J. R. Gardner was supported by NSF award [IIS-2145644].

\section*{Impact Statement}
This paper presents a theoretical analysis of black-box variational inference with the goal of broadening our understanding of the algorithm and potentially contributing to its improvement.
As such, the potential societal consequences of our work are inherited from those of Bayesian inference and general probabilistic modeling.
However, the work itself is theoretical, and we do not expect direct societal consequences.

\bibliographystyle{icml2024}
\bibliography{references}

@inproceedings{agrawal_advances_2020,
  title = {Advances in Black-Box {{VI}}: {{Normalizing}} Flows, Importance Weighting, and Optimization},
  shorttitle = {Advances in Black-Box {{VI}}},
  booktitle = {Advances in {{Neural Information Processing Systems}}},
  author = {Agrawal, Abhinav and Sheldon, Daniel R and Domke, Justin},
  year = {2020},
  volume = {33},
  pages = {17358--17369},
  publisher = {{Curran Associates, Inc.}},
  urldate = {2022-12-14},
  file = {/home/msca8h/Zotero/storage/2PXTCBFL/Agrawal et al. - 2020 - Advances in Black-Box VI Normalizing Flows, Impor.pdf}
}

@inproceedings{agrawal_amortized_2021,
  title = {Amortized Variational Inference for Simple Hierarchical Models},
  booktitle = {Advances in {{Neural Information Processing Systems}}},
  author = {Agrawal, Abhinav and Domke, Justin},
  year = {2021},
  volume = {34},
  pages = {21388--21399},
  publisher = {{Curran Associates, Inc.}},
  urldate = {2023-05-21},
  file = {/home/msca8h/Zotero/storage/68PIEHXI/Agrawal and Domke - 2021 - Amortized Variational Inference for Simple Hierarc.pdf}
}

@inproceedings{altosaar_proximity_2018,
  title = {Proximity Variational Inference},
  booktitle = {Proceedings of the {{International Conference}} on {{Artificial Intelligence}} and {{Statistics}}},
  author = {Altosaar, Jaan and Ranganath, Rajesh and Blei, David},
  year = {2018},
  month = mar,
  series = {{{PMLR}}},
  volume = {84},
  pages = {1961--1969},
  publisher = {{JMLR}},
  urldate = {2023-05-08},
  langid = {english},
  file = {/home/msca8h/Zotero/storage/ERBNCRPX/Altosaar et al. - 2018 - Proximity Variational Inference.pdf}
}

@inproceedings{ambrogioni_automatic_2021,
  title = {Automatic Variational Inference with Cascading Flows},
  booktitle = {Proceedings of the {{International Conference}} on {{Machine Learning}}},
  author = {Ambrogioni, Luca and Silvestri, Gianluigi and van Gerven, Marcel},
  year = {2021},
  month = jul,
  series = {{{PMLR}}},
  volume = {139},
  pages = {254--263},
  publisher = {{JMLR}},
  urldate = {2023-08-17},
  langid = {english},
  file = {/home/msca8h/Zotero/storage/EJCUKEXQ/Ambrogioni et al. - 2021 - Automatic variational inference with cascading flo.pdf;/home/msca8h/Zotero/storage/LPZVZHKZ/Ambrogioni et al. - 2021 - Automatic variational inference with cascading flo.pdf}
}

@inproceedings{ambrogioni_automatic_2021a,
  title = {Automatic Structured Variational Inference},
  booktitle = {Proceedings of {{The International Conference}} on {{Artificial Intelligence}} and {{Statistics}}},
  author = {Ambrogioni, Luca and Lin, Kate and Fertig, Emily and Vikram, Sharad and Hinne, Max and Moore, Dave and van Gerven, Marcel},
  year = {2021},
  month = mar,
  series = {{{PMLR}}},
  volume = {130},
  pages = {676--684},
  publisher = {{JMLR}},
  urldate = {2023-10-06},
  langid = {english},
  file = {/home/msca8h/Zotero/storage/JKN6L3VC/Ambrogioni et al. - 2021 - Automatic structured variational inference.pdf;/home/msca8h/Zotero/storage/NV78AMGT/Ambrogioni et al. - 2021 - Automatic structured variational inference.pdf}
}

@article{archer_black_2015,
  title = {Black Box Variational Inference for State Space Models},
  author = {Archer, Evan and Park, Il Memming and Buesing, Lars and Cunningham, John and Paninski, Liam},
  year = {2015},
  month = nov,
  journal = {International Conference on Learning Representations Workshops},
  eprint = {1511.07367},
  primaryclass = {stat},
  urldate = {2023-10-08},
  archiveprefix = {arxiv},
  keywords = {Statistics - Machine Learning},
  file = {/home/msca8h/Zotero/storage/445DQVRM/Archer et al. - 2015 - Black box variational inference for state space mo.pdf;/home/msca8h/Zotero/storage/IM6BGAFA/1511.html}
}

@article{article,
  title = {{{JAGS}}: {{A}} Program for Analysis of Bayesian Graphical Models Using Gibbs Sampling},
  author = {Plummer, Martyn},
  year = {2003},
  month = apr,
  journal = {3rd International Workshop on Distributed Statistical Computing (DSC 2003); Vienna, Austria},
  volume = {124}
}

@article{besard_effective_2019,
  title = {Effective Extensible Programming: {{Unleashing}} Julia on {{GPUs}}},
  author = {Besard, Tim and Foket, Christophe and De Sutter, Bjorn},
  year = {2019},
  journal = {IEEE Transactions on Parallel and Distributed Systems},
  volume = {30},
  number = {4},
  pages = {827--841}
}

@article{bezanson_julia_2017,
  title = {Julia: {{A}} Fresh Approach to Numerical Computing},
  author = {Bezanson, Jeff and Edelman, Alan and Karpinski, Stefan and Shah, Viral B},
  year = {2017},
  journal = {SIAM review},
  volume = {59},
  number = {1},
  pages = {65--98}
}

@techreport{bhatia_statistical_2022,
  type = {{{arXiv}} Preprint},
  title = {Statistical and Computational Trade-Offs in Variational Inference: A Case Study in Inferential Model Selection},
  shorttitle = {Statistical and Computational Trade-Offs in Variational Inference},
  author = {Bhatia, Kush and Kuang, Nikki Lijing and Ma, Yi-An and Wang, Yixin},
  year = {2022},
  month = jul,
  number = {arXiv:2207.11208},
  eprint = {2207.11208},
  primaryclass = {cs, stat},
  institution = {{arXiv}},
  urldate = {2022-12-19},
  archiveprefix = {arxiv},
  keywords = {Computer Science - Machine Learning,Statistics - Machine Learning},
  file = {/home/msca8h/Zotero/storage/PSQ7EZKD/Bhatia et al. - 2022 - Statistical and Computational Trade-offs in Variat.pdf;/home/msca8h/Zotero/storage/FLCPJC8U/2207.html}
}

@article{blei_variational_2017,
  title = {Variational Inference: {{A}} Review for Statisticians},
  shorttitle = {Variational {{Inference}}},
  author = {Blei, David M. and Kucukelbir, Alp and McAuliffe, Jon D.},
  year = {2017},
  month = apr,
  journal = {Journal of the American Statistical Association},
  volume = {112},
  number = {518},
  pages = {859--877},
  urldate = {2021-05-09},
  langid = {english},
  file = {/home/msca8h/Zotero/storage/NM6WQLJH/Blei et al. - 2017 - Variational Inference A Review for Statisticians.pdf}
}

@incollection{bottou_online_1999,
  title = {On-Line Learning and Stochastic Approximations},
  booktitle = {On-{{Line Learning}} in {{Neural Networks}}},
  author = {Bottou, L{\'e}on},
  year = {1999},
  month = jan,
  edition = {1},
  pages = {9--42},
  publisher = {{Cambridge University Press}},
  urldate = {2021-05-20},
  file = {/home/msca8h/Zotero/storage/F9CTXQGD/Bottou - 1999 - On-line Learning and Stochastic Approximations.pdf}
}

@book{cameron_regression_2013,
  title = {Regression Analysis of Count Data},
  author = {Cameron, A. Colin and Trivedi, Pravin K.},
  year = {2013},
  series = {Econometric {{Society Monographs}}},
  edition = {2},
  publisher = {{Cambridge University Press}},
  address = {{Cambridge}},
  urldate = {2023-09-24},
  file = {/home/msca8h/Zotero/storage/S928GIKS/2AB83B406C5798030F7C91ECC99B1BE4.html}
}

@article{challis_gaussian_2013,
  title = {Gaussian {{Kullback-Leibler}} Approximate Inference},
  author = {Challis, Edward and Barber, David},
  year = {2013},
  journal = {Journal of Machine Learning Research},
  volume = {14},
  number = {68},
  pages = {2239--2286},
  urldate = {2022-12-19},
  file = {/home/msca8h/Zotero/storage/UDBF2CQ7/Challis and Barber - 2013 - Gaussian Kullback-Leibler Approximate Inference.pdf}
}

@incollection{chib_multivariate_2009,
  title = {Multivariate Stochastic Volatility},
  booktitle = {Handbook of {{Financial Time Series}}},
  author = {Chib, Siddhartha and Omori, Yasuhiro and Asai, Manabu},
  year = {2009},
  pages = {365--400},
  publisher = {{Springer}},
  address = {{Berlin, Heidelberg}},
  urldate = {2023-09-24},
  langid = {english},
  keywords = {Full Conditional Distribution,Markov Chain Monte Carlo,Markov Chain Monte Carlo Algorithm,Stochastic Volatility,Stochastic Volatility Model},
  file = {/home/msca8h/Zotero/storage/GRRTRMMI/Chib et al. - 2009 - Multivariate Stochastic Volatility.pdf}
}

@article{dayan_helmholtz_1995,
  title = {The {{Helmholtz}} Machine},
  author = {Dayan, Peter and Hinton, Geoffrey E. and Neal, Radford M. and Zemel, Richard S.},
  year = {1995},
  month = sep,
  journal = {Neural Computation},
  volume = {7},
  number = {5},
  pages = {889--904},
  urldate = {2023-08-10},
  langid = {english}
}

@inproceedings{diao_forwardbackward_2023,
  title = {Forward-Backward {{Gaussian}} Variational Inference via {{JKO}} in the {{Bures-Wasserstein}} Space},
  booktitle = {Proceedings of the {{International Conference}} on {{Machine Learning}}},
  author = {Diao, Michael Ziyang and Balasubramanian, Krishna and Chewi, Sinho and Salim, Adil},
  year = {2023},
  month = jul,
  series = {{{PMLR}}},
  volume = {202},
  pages = {7960--7991},
  publisher = {{JMLR}},
  urldate = {2023-10-06},
  langid = {english},
  file = {/home/msca8h/Zotero/storage/FGXARKAQ/Diao et al. - 2023 - Forward-Backward Gaussian Variational Inference vi.pdf}
}

@inproceedings{domke_provable_2019,
  title = {Provable Gradient Variance Guarantees for Black-Box Variational Inference},
  booktitle = {Advances in {{Neural Information Processing Systems}}},
  author = {Domke, Justin},
  year = {2019},
  volume = {32},
  pages = {329--338},
  publisher = {{Curran Associates, Inc.}},
  urldate = {2022-11-06},
  file = {/home/msca8h/Zotero/storage/QPGCDUST/Domke - 2019 - Provable Gradient Variance Guarantees for Black-Bo.pdf}
}

@inproceedings{domke_provable_2020,
  title = {Provable Smoothness Guarantees for Black-Box Variational Inference},
  booktitle = {Proceedings of the International Conference on Machine Learning},
  author = {Domke, Justin},
  year = {2020},
  month = jul,
  series = {{{PMLR}}},
  volume = {119},
  pages = {2587--2596},
  publisher = {{JMLR}},
  pdf = {http://proceedings.mlr.press/v119/domke20a/domke20a.pdf}
}

@inproceedings{domke_provable_2023,
  title = {Provable Convergence Guarantees for Black-Box Variational Inference},
  booktitle = {Advances in Neural Information Processing Systems},
  author = {Domke, Justin and Gower, Robert and Garrigos, Guillaume},
  year = {2023},
  volume = {36},
  pages = {66289--66327},
  publisher = {{Curran Associates, Inc.}}
}

@article{duchi_adaptive_2011,
  title = {Adaptive Subgradient Methods for Online Learning and Stochastic Optimization},
  author = {Duchi, John and Hazan, Elad and Singer, Yoram},
  year = {2011},
  journal = {Journal of Machine Learning Research},
  volume = {12},
  number = {Jul},
  pages = {2121--2159}
}

@inproceedings{gao_linear_2016,
  title = {Linear Dynamical Neural Population Models through Nonlinear Embeddings},
  booktitle = {Advances in {{Neural Information Processing Systems}}},
  author = {Gao, Yuanjun and Archer, Evan W and Paninski, Liam and Cunningham, John P},
  year = {2016},
  volume = {29},
  publisher = {{Curran Associates, Inc.}},
  urldate = {2023-10-08},
  file = {/home/msca8h/Zotero/storage/XVBKBLLH/Gao et al. - 2016 - Linear dynamical neural population models through .pdf}
}

@techreport{garrigos_handbook_2023,
  type = {Preprint},
  title = {Handbook of Convergence Theorems for (Stochastic) Gradient Methods},
  author = {Garrigos, Guillaume and Gower, Robert M.},
  year = {2023},
  month = feb,
  number = {arXiv:2301.11235},
  institution = {{arXiv}},
  urldate = {2023-05-08},
  keywords = {{65K05, 68T99},G.1.6,Mathematics - Optimization and Control},
  file = {/home/msca8h/Zotero/storage/X93WERWX/Garrigos and Gower - 2023 - Handbook of Convergence Theorems for (Stochastic) .pdf;/home/msca8h/Zotero/storage/6QA63Z6V/2301.html}
}

@inproceedings{ge_step_2019,
  title = {The Step Decay Schedule: A near Optimal, Geometrically Decaying Learning Rate Procedure for Least Squares},
  shorttitle = {The Step Decay Schedule},
  booktitle = {Advances in {{Neural Information Processing Systems}}},
  author = {Ge, Rong and Kakade, Sham M and Kidambi, Rahul and Netrapalli, Praneeth},
  year = {2019},
  volume = {32},
  publisher = {{Curran Associates, Inc.}},
  urldate = {2023-10-10},
  file = {/home/msca8h/Zotero/storage/8XEJXKHR/Ge et al. - 2019 - The Step Decay Schedule A Near Optimal, Geometric.pdf}
}

@inproceedings{ge_turing_2018,
  title = {Turing: A Language for Flexible Probabilistic Inference},
  booktitle = {Proceedings of the {{International Conference}} on {{Machine Learning}}},
  author = {Ge, Hong and Xu, Kai and Ghahramani, Zoubin},
  year = {2018},
  series = {{{PMLR}}},
  volume = {84},
  pages = {1682--1690},
  publisher = {{JMLR}},
  biburl = {https://dblp.org/rec/bib/conf/aistats/GeXG18}
}

@article{gelman_bayesian_2020,
  title = {Bayesian Workflow},
  author = {Gelman, Andrew and Vehtari, Aki and Simpson, Daniel and Margossian, Charles C. and Carpenter, Bob and Yao, Yuling and Kennedy, Lauren and Gabry, Jonah and B{\"u}rkner, Paul-Christian and Modr{\'a}k, Martin},
  year = {2020},
  month = nov,
  journal = {arXiv:2011.01808 [stat]},
  eprint = {2011.01808},
  primaryclass = {stat},
  urldate = {2022-03-03},
  archiveprefix = {arxiv},
  keywords = {Statistics - Methodology},
  file = {/home/msca8h/Zotero/storage/6PFKHPVW/Gelman et al. - 2020 - Bayesian Workflow.pdf;/home/msca8h/Zotero/storage/BXUK567W/2011.html}
}

@article{ghadimi_minibatch_2016,
  title = {Mini-Batch Stochastic Approximation Methods for Nonconvex Stochastic Composite Optimization},
  author = {Ghadimi, Saeed and Lan, Guanghui and Zhang, Hongchao},
  year = {2016},
  month = jan,
  journal = {Mathematical Programming},
  volume = {155},
  number = {1},
  pages = {267--305},
  urldate = {2023-05-07},
  langid = {english},
  keywords = {49M37,90C06,90C22,90C25,Constrained stochastic programming,First-order method,Mini-batch of samples,Nonconvex optimization,Stochastic approximation,Stochastic programming,Zeroth-order method},
  file = {/home/msca8h/Zotero/storage/JNEQ78CX/Ghadimi et al. - 2016 - Mini-batch stochastic approximation methods for no.pdf}
}

@article{ghahramani_factorial_1997,
  title = {Factorial Hidden {{Markov}} Models},
  author = {Ghahramani, Zoubin and Jordan, Michael I.},
  year = {1997},
  month = nov,
  journal = {Machine Learning},
  volume = {29},
  number = {2},
  pages = {245--273},
  urldate = {2023-10-15},
  langid = {english},
  keywords = {Bayesian networks,EM algorithm,graphical models,Hidden Markov models,mean field theory,time series},
  file = {/home/msca8h/Zotero/storage/C4IBQ8GG/Ghahramani and Jordan - 1997 - Factorial Hidden Markov Models.pdf}
}

@article{giordano_covariances_2018,
  title = {Covariances, Robustness, and Variational {{Bayes}}},
  author = {Giordano, Ryan and Broderick, Tamara and Jordan, Michael I.},
  year = {2018},
  journal = {Journal of Machine Learning Research},
  volume = {19},
  number = {51},
  pages = {1--49},
  urldate = {2023-04-23},
  file = {/home/msca8h/Zotero/storage/YKLNJPNZ/Giordano et al. - 2018 - Covariances, Robustness, and Variational Bayes.pdf}
}

@article{goffin_convergence_1977,
  title = {On Convergence Rates of Subgradient Optimization Methods},
  author = {Goffin, J. L.},
  year = {1977},
  month = dec,
  journal = {Mathematical Programming},
  volume = {13},
  number = {1},
  pages = {329--347},
  urldate = {2023-10-10},
  langid = {english},
  keywords = {Nondifferentiable optimization,Nonsmooth optimization,Rates of convergence},
  file = {/home/msca8h/Zotero/storage/QM7UXGGS/Goffin - 1977 - On convergence rates of subgradient optimization m.pdf}
}

@inproceedings{gorbunov_unified_2020,
  title = {A Unified Theory of {{SGD}}: {{Variance}} Reduction, Sampling, Quantization and Coordinate Descent},
  shorttitle = {A {{Unified Theory}} of {{SGD}}},
  booktitle = {Proceedings of the {{International Conference}} on {{Artificial Intelligence}} and {{Statistics}}},
  author = {Gorbunov, Eduard and Hanzely, Filip and Richtarik, Peter},
  year = {2020},
  month = jun,
  series = {{{PMLR}}},
  volume = {108},
  pages = {680--690},
  publisher = {{JMLR}},
  urldate = {2022-10-08},
  langid = {english},
  file = {/home/msca8h/Zotero/storage/SIJIQB8S/Gorbunov et al. - 2020 - A Unified Theory of SGD Variance Reduction, Sampl.pdf;/home/msca8h/Zotero/storage/X4GE6BXY/Gorbunov et al. - 2020 - A Unified Theory of SGD Variance Reduction, Sampl.pdf}
}

@inproceedings{gower_sgd_2019,
  title = {{{SGD}}: {{General}} Analysis and Improved Rates},
  booktitle = {Proceedings of the International Conference on Machine Learning},
  author = {Gower, Robert Mansel and Loizou, Nicolas and Qian, Xun and Sailanbayev, Alibek and Shulgin, Egor and Richt{\'a}rik, Peter},
  year = {2019},
  month = jun,
  series = {{{PMLR}}},
  volume = {97},
  pages = {5200--5209},
  publisher = {{JMLR}},
  pdf = {http://proceedings.mlr.press/v97/qian19b/qian19b.pdf}
}

@article{gower_stochastic_2021,
  title = {Stochastic Quasi-Gradient Methods: {{Variance}} Reduction via {{Jacobian}} Sketching},
  shorttitle = {Stochastic Quasi-Gradient Methods},
  author = {Gower, Robert M. and Richt{\'a}rik, Peter and Bach, Francis},
  year = {2021},
  month = jul,
  journal = {Mathematical Programming},
  volume = {188},
  number = {1},
  pages = {135--192},
  urldate = {2022-11-08},
  langid = {english},
  keywords = {65Kxx,90C15,90C25,Covariates,Sketching,Stochastic gradient descent,Variance reduction},
  file = {/home/msca8h/Zotero/storage/VZ3K2FCS/Gower et al. - 2021 - Stochastic quasi-gradient methods variance reduct.pdf}
}

@manual{hilbe_count_2016,
  type = {Manual},
  title = {{{COUNT}}: {{Functions}}, Data and Code for Count Data},
  author = {Hilbe, Joseph M},
  year = {2016}
}

@book{hilbe_negative_2011,
  title = {Negative Binomial Regression},
  author = {Hilbe, Joseph M.},
  year = {2011},
  edition = {2},
  publisher = {{Cambridge University Press}},
  address = {{Cambridge}}
}

@inproceedings{hinton_keeping_1993,
  title = {Keeping the Neural Networks Simple by Minimizing the Description Length of the Weights},
  booktitle = {Proceedings of the Annual Conference on {{Computational}} Learning Theory},
  author = {Hinton, Geoffrey E. and {van Camp}, Drew},
  year = {1993},
  pages = {5--13},
  publisher = {{ACM Press}},
  address = {{Santa Cruz, California, United States}},
  urldate = {2022-12-23},
  langid = {english}
}

@inproceedings{hoffman_stochastic_2015,
  title = {Stochastic Structured Variational Inference},
  booktitle = {Proceedings of the {{Eighteenth International Conference}} on {{Artificial Intelligence}} and {{Statistics}}},
  author = {Hoffman, Matthew and Blei, David},
  year = {2015},
  month = feb,
  series = {{{PMLR}}},
  volume = {38},
  pages = {361--369},
  publisher = {{JMLR}},
  urldate = {2023-08-17},
  langid = {english},
  file = {/home/msca8h/Zotero/storage/9AJHE7FA/Hoffman and Blei - 2015 - Stochastic Structured Variational Inference.pdf}
}

@article{innes_don_2018,
  title = {Don't Unroll Adjoint: {{Differentiating SSA-Form}} Programs},
  author = {Innes, Michael},
  year = {2018},
  journal = {CoRR},
  volume = {abs/1810.07951},
  eprint = {1810.07951},
  archiveprefix = {arxiv},
  bibsource = {dblp computer science bibliography, https://dblp.org},
  biburl = {https://dblp.org/rec/bib/journals/corr/abs-1810-07951},
  timestamp = {Tue, 30 Oct 2018 20:39:56 +0100}
}

@inproceedings{johnson_composing_2016,
  title = {Composing Graphical Models with Neural Networks for Structured Representations and Fast Inference},
  booktitle = {Advances in {{Neural Information Processing Systems}}},
  author = {Johnson, Matthew J and Duvenaud, David K and Wiltschko, Alex and Adams, Ryan P and Datta, Sandeep R},
  year = {2016},
  volume = {29},
  publisher = {{Curran Associates, Inc.}},
  urldate = {2023-10-15},
  file = {/home/msca8h/Zotero/storage/E7T79T53/Johnson et al. - 2016 - Composing graphical models with neural networks fo.pdf}
}

@article{jordan_introduction_1999,
  title = {An Introduction to Variational Methods for Graphical Models},
  author = {Jordan, Michael I. and Ghahramani, Zoubin and Jaakkola, Tommi S. and Saul, Lawrence K.},
  year = {1999},
  journal = {Machine Learning},
  volume = {37},
  number = {2},
  pages = {183--233},
  urldate = {2021-05-09},
  file = {/home/msca8h/Zotero/storage/IG37GFMG/Jordan et al. - 1999 - [No title found].pdf}
}

@article{khaled_better_2023,
  title = {Better Theory for {{SGD}} in the Nonconvex World},
  author = {Khaled, Ahmed and Richt{\'a}rik, Peter},
  year = {2023},
  journal = {Transactions of Machine Learning Research},
  eprint = {2002.03329},
  primaryclass = {cs, math, stat},
  urldate = {2022-11-08},
  archiveprefix = {arxiv},
  keywords = {Computer Science - Machine Learning,Mathematics - Optimization and Control,Statistics - Machine Learning},
  file = {/home/msca8h/Zotero/storage/EFX4EMCL/Khaled and Richtárik - 2020 - Better Theory for SGD in the Nonconvex World.pdf;/home/msca8h/Zotero/storage/PR4G2FVB/2002.html}
}

@inproceedings{khan_faster_2016,
  title = {Faster Stochastic Variational Inference Using Proximal-Gradient Methods with General Divergence Functions},
  booktitle = {Proceedings of the Conference on Uncertainty in Artificial Intelligence},
  author = {Khan, Mohammad Emtiyaz and Babanezhad, Reza and Lin, Wu and Schmidt, Mark and Sugiyama, Masashi},
  year = {2016},
  series = {{{UAI}}'16},
  pages = {319--328},
  publisher = {{AUAI Press}},
  address = {{Arlington, Virginia, USA}}
}

@inproceedings{khan_kullbackleibler_2015,
  title = {Kullback-{{Leibler}} Proximal Variational Inference},
  booktitle = {Advances in {{Neural Information Processing Systems}}},
  author = {Khan, Mohammad Emtiyaz E and Baque, Pierre and Fleuret, Fran{\c c}ois and Fua, Pascal},
  year = {2015},
  volume = {28},
  pages = {3402--3410},
  publisher = {{Curran Associates, Inc.}},
  urldate = {2023-05-08},
  file = {/home/msca8h/Zotero/storage/HN6LLMZI/Khan et al. - 2015 - Kullback-Leibler Proximal Variational Inference.pdf}
}

@inproceedings{kim_convergence_2023,
  type = {Techreport},
  title = {On the Convergence of Black-Box Variational Inference},
  booktitle = {Advances in {{Neural Information Processing Systems}}},
  author = {Kim, Kyurae and Oh, Jisu and Wu, Kaiwen and Ma, Yian and Gardner, Jacob R.},
  year = {2023},
  month = dec,
  volume = {36},
  eprint = {2305.15349},
  primaryclass = {cs, eess, math, stat},
  pages = {44615--44657},
  publisher = {{Curran Associates Inc.}},
  address = {{New Orleans, LA, USA}},
  urldate = {2023-06-03},
  archiveprefix = {arxiv},
  copyright = {All rights reserved},
  keywords = {Computer Science - Machine Learning,Electrical Engineering and Systems Science - Signal Processing,Mathematics - Optimization and Control,Statistics - Computation,Statistics - Machine Learning},
  file = {/home/msca8h/Zotero/storage/TV4TKA5H/Kim et al. - 2023 - Black-Box Variational Inference Converges.pdf;/home/msca8h/Zotero/storage/NQAF9B52/2305.html}
}

@inproceedings{kim_linear_2024,
  title = {Linear Convergence of Black-Box Variational Inference: Should We Stick the Landing?},
  booktitle = {Proceedings of the {{International Conference}} on {{Artificial Intelligence}} and {{Statistics}}},
  author = {Kim, Kyurae and Ma, Yian and Gardner, Jacob R.},
  year = {2024},
  series = {{{PMLR}}},
  volume = {238},
  pages = {235--243},
  publisher = {{JMLR}},
  address = {{Valencia, Spain}},
  copyright = {All rights reserved}
}

@inproceedings{kim_practical_2023,
  title = {Practical and Matching Gradient Variance Bounds for Black-Box Variational {{Bayesian}} Inference},
  booktitle = {Proceedings of the {{International Conference}} on {{Machine Learning}}},
  author = {Kim, Kyurae and Wu, Kaiwen and Oh, Jisu and Gardner, Jacob R.},
  year = {2023},
  month = jul,
  series = {{{PMLR}}},
  volume = {202},
  pages = {16853--16876},
  publisher = {{JMLR}},
  address = {{Honolulu, HI, USA}},
  urldate = {2023-08-07},
  langid = {english},
  file = {/home/msca8h/Zotero/storage/69D6HGAG/Kim et al. - 2023 - Practical and Matching Gradient Variance Bounds fo.pdf}
}

@inproceedings{kingma_adam_2015,
  title = {Adam: {{A Method}} for {{Stochastic Optimization}}},
  booktitle = {Proceedings of the {{International}} Conference on Learning Representations},
  author = {Kingma, Diederik P. and Ba, Jimmy},
  year = {2015},
  eprint = {1412.6980},
  address = {{San Diego, California, USA}},
  urldate = {2020-02-20},
  archiveprefix = {arxiv},
  keywords = {Computer Science - Machine Learning},
  file = {/home/msca8h/Zotero/storage/ICB6B4IV/Kingma and Ba - 2017 - Adam A Method for Stochastic Optimization.pdf;/home/msca8h/Zotero/storage/REWW72SC/1412.html}
}

@inproceedings{kingma_autoencoding_2014,
  title = {Auto-Encoding Variational {{Bayes}}},
  booktitle = {Proceedings of the {{International Conference}} on {{Learning Representations}}},
  author = {Kingma, Diederik P. and Welling, Max},
  year = {2014},
  month = apr,
  address = {{Banff, AB, Canada}},
  urldate = {2023-10-07},
  langid = {english}
}

@article{kucukelbir_automatic_2017,
  title = {Automatic Differentiation Variational Inference},
  author = {Kucukelbir, Alp and Tran, Dustin and Ranganath, Rajesh and Gelman, Andrew and Blei, David M.},
  year = {2017},
  journal = {Journal of Machine Learning Research},
  volume = {18},
  number = {14},
  pages = {1--45}
}

@article{kullback_information_1951,
  title = {On Information and Sufficiency},
  author = {Kullback, S. and Leibler, R. A.},
  year = {1951},
  month = mar,
  journal = {The Annals of Mathematical Statistics},
  volume = {22},
  number = {1},
  pages = {79--86},
  publisher = {{Institute of Mathematical Statistics}},
  urldate = {2023-11-12},
  file = {/home/msca8h/Zotero/storage/8Q66967V/Kullback and Leibler - 1951 - On Information and Sufficiency.pdf}
}

@inproceedings{lin_fast_2019,
  title = {Fast and Simple Natural-Gradient Variational Inference with Mixture of Exponential-Family Approximations},
  booktitle = {Proceedings of the {{International Conference}} on {{Machine Learning}}},
  author = {Lin, Wu and Khan, Mohammad Emtiyaz and Schmidt, Mark},
  year = {2019},
  month = may,
  series = {{{PMLR}}},
  volume = {97},
  pages = {3992--4002},
  publisher = {{JMLR}},
  urldate = {2023-10-26},
  langid = {english},
  file = {/home/msca8h/Zotero/storage/BYLZQ5MK/Lin et al. - 2019 - Fast and Simple Natural-Gradient Variational Infer.pdf;/home/msca8h/Zotero/storage/TZVZCCSJ/Lin et al. - 2019 - Fast and Simple Natural-Gradient Variational Infer.pdf}
}

@book{lord_statistical_2008,
  title = {Statistical Theories of Mental Test Scores},
  author = {Lord, Frederic M. and Novick, Melvin R. and Birnbaum, Allan},
  year = {2008},
  series = {The {{Addison-Wesley}} Series in Behavioral Science: Quantitative Methods},
  edition = {Nachdr. der Ausg. Reading, Mass. [u.a.], 1968},
  publisher = {{Information Age Publ}},
  address = {{Charlotte, NC}},
  langid = {english}
}

@inproceedings{maaloe_auxiliary_2016,
  title = {Auxiliary Deep Generative Models},
  booktitle = {Proceedings of {{The International Conference}} on {{Machine Learning}}},
  author = {Maal{\o}e, Lars and S{\o}nderby, Casper Kaae and S{\o}nderby, S{\o}ren Kaae and Winther, Ole},
  year = {2016},
  month = jun,
  series = {{{PMLR}}},
  volume = {48},
  pages = {1445--1453},
  publisher = {{JMLR}},
  urldate = {2023-03-24},
  langid = {english},
  file = {/home/msca8h/Zotero/storage/Q7NH4MP7/Maaløe et al. - 2016 - Auxiliary Deep Generative Models.pdf}
}

@inproceedings{mimno_sparse_2012,
  title = {Sparse Stochastic Inference for Latent {{Dirichlet}} Allocation},
  booktitle = {Proceedings of the International Conference  on Machine Learning},
  author = {Mimno, David and Hoffman, Matthew D. and Blei, David M.},
  year = {2012},
  series = {{{ICML}}'12},
  pages = {1515--1522},
  publisher = {{Omnipress}},
  address = {{Madison, WI, USA}}
}

@article{mohamed_monte_2020,
  title = {Monte {{Carlo}} Gradient Estimation in Machine Learning},
  author = {Mohamed, Shakir and Rosca, Mihaela and Figurnov, Michael and Mnih, Andriy},
  year = {2020},
  journal = {Journal of Machine Learning Research},
  volume = {21},
  number = {132},
  pages = {1--62},
  urldate = {2022-11-15},
  file = {/home/msca8h/Zotero/storage/NF24DQAJ/Mohamed et al. - 2020 - Monte Carlo Gradient Estimation in Machine Learnin.pdf;/home/msca8h/Zotero/storage/TWINXLA9/mc_gradients.html}
}

@inproceedings{moulines_nonasymptotic_2011,
  title = {Non-Asymptotic Analysis of Stochastic Approximation Algorithms for Machine Learning},
  booktitle = {Advances in {{Neural Information Processing Systems}}},
  author = {Moulines, Eric and Bach, Francis},
  year = {2011},
  volume = {24},
  pages = {451--459},
  publisher = {{Curran Associates, Inc.}},
  urldate = {2023-07-16},
  file = {/home/msca8h/Zotero/storage/V34QFPFP/Moulines and Bach - 2011 - Non-Asymptotic Analysis of Stochastic Approximatio.pdf}
}

@inproceedings{naesseth_variational_2018,
  title = {Variational Sequential {{Monte Carlo}}},
  booktitle = {Proceedings of the {{International Conference}} on {{Artificial Intelligence}} and {{Statistics}}},
  author = {Naesseth, Christian and Linderman, Scott and Ranganath, Rajesh and Blei, David},
  year = {2018},
  month = mar,
  series = {{{PMLR}}},
  volume = {84},
  pages = {968--977},
  publisher = {{JMLR}},
  urldate = {2023-09-24},
  langid = {english},
  file = {/home/msca8h/Zotero/storage/5BLAXLFB/Naesseth et al. - 2018 - Variational Sequential Monte Carlo.pdf;/home/msca8h/Zotero/storage/THKLD4GE/Naesseth et al. - 2018 - Variational Sequential Monte Carlo.pdf}
}

@article{nemirovski_robust_2009,
  title = {Robust Stochastic Approximation Approach to Stochastic Programming},
  author = {Nemirovski, A. and Juditsky, A. and Lan, G. and Shapiro, A.},
  year = {2009},
  month = jan,
  journal = {SIAM Journal on Optimization},
  volume = {19},
  number = {4},
  pages = {1574--1609},
  urldate = {2023-05-10},
  langid = {english}
}

@inproceedings{nguyen_sgd_2018,
  title = {{{SGD}} and {{Hogwild}}! {{Convergence}} without the Bounded Gradients Assumption},
  booktitle = {Proceedings of the International Conference on Machine Learning},
  author = {Nguyen, Lam and Nguyen, Phuong Ha and {van Dijk}, Marten and Richtarik, Peter and Scheinberg, Katya and Takac, Martin},
  year = {2018},
  month = jul,
  series = {{{PMLR}}},
  volume = {80},
  pages = {3750--3758},
  publisher = {{JMLR}},
  pdf = {http://proceedings.mlr.press/v80/nguyen18c/nguyen18c.pdf}
}

@article{ong_gaussian_2018,
  title = {Gaussian Variational Approximation with a Factor Covariance Structure},
  author = {Ong, Victor M.-H. and Nott, David J. and Smith, Michael S.},
  year = {2018},
  month = jul,
  journal = {Journal of Computational and Graphical Statistics},
  volume = {27},
  number = {3},
  pages = {465--478},
  publisher = {{Taylor \& Francis}},
  urldate = {2023-10-06},
  keywords = {Gaussian variational approximation,P-spline,Stochastic gradient ascent,variational Bayes},
  file = {/home/msca8h/Zotero/storage/RYC6D9WN/Ong et al. - 2018 - Gaussian Variational Approximation With a Factor C.pdf}
}

@article{patz_straightforward_1999,
  title = {A Straightforward Approach to {{Markov}} Chain Monte Carlo Methods for Item Response Models},
  author = {Patz, Richard J. and Junker, Brian W.},
  year = {1999},
  month = jun,
  journal = {Journal of Educational and Behavioral Statistics},
  volume = {24},
  number = {2},
  pages = {146--178},
  publisher = {{American Educational Research Association}},
  urldate = {2023-10-12},
  langid = {english},
  file = {/home/msca8h/Zotero/storage/XPFVFFF2/Patz and Junker - 1999 - A Straightforward Approach to Markov Chain Monte C.pdf}
}

@article{peterson_mean_1987,
  title = {A Mean Field Theory Learning Algorithm for {{Neural Networks}}},
  author = {Peterson, Carsten and Anderson, James R.},
  year = {1987},
  journal = {Complex Systems},
  volume = {1},
  number = {5},
  pages = {995--1019}
}

@article{quiroz_gaussian_2023,
  title = {Gaussian Variational Approximations for High-Dimensional State Space Models},
  author = {Quiroz, Matias and Nott, David J. and Kohn, Robert},
  year = {2023},
  month = sep,
  journal = {Bayesian Analysis},
  volume = {18},
  number = {3},
  pages = {989--1016},
  publisher = {{International Society for Bayesian Analysis}},
  urldate = {2023-10-06},
  keywords = {dynamic factor,spatio-temporal modelling,stochastic gradient},
  file = {/home/msca8h/Zotero/storage/LE5INTYB/Quiroz et al. - 2023 - Gaussian Variational Approximations for High-dimen.pdf}
}

@misc{rammus_upper_2021,
  type = {Answer},
  title = {Upper Bound for the Trace of the Product of a Matrix and a Positive Matrix},
  author = {Rammus},
  year = {2021},
  month = dec,
  journal = {Mathematics Stack Exchange},
  urldate = {2023-10-19}
}

@inproceedings{ranganath_black_2014,
  title = {Black Box Variational Inference},
  booktitle = {Proceedings of the International Conference on Artificial Intelligence and Statistics},
  author = {Ranganath, Rajesh and Gerrish, Sean and Blei, David},
  year = {2014},
  month = apr,
  series = {{{PMLR}}},
  volume = {33},
  pages = {814--822},
  publisher = {{JMLR}},
  pdf = {http://proceedings.mlr.press/v33/ranganath14.pdf}
}

@inproceedings{ranganath_hierarchical_2016,
  title = {Hierarchical Variational Models},
  booktitle = {Proceedings of the {{International Conference}} on {{Machine Learning}}},
  author = {Ranganath, Rajesh and Tran, Dustin and Blei, David},
  year = {2016},
  month = jun,
  series = {{{PMLR}}},
  volume = {48},
  pages = {324--333},
  publisher = {{JMLR}},
  urldate = {2023-08-17},
  langid = {english},
  file = {/home/msca8h/Zotero/storage/KT7Q489X/Ranganath et al. - 2016 - Hierarchical Variational Models.pdf}
}

@inproceedings{rezende_stochastic_2014,
  title = {Stochastic Backpropagation and Approximate Inference in Deep Generative Models},
  booktitle = {Proceedings of the {{International Conference}} on {{Machine Learning}}},
  author = {Rezende, Danilo Jimenez and Mohamed, Shakir and Wierstra, Daan},
  year = {2014},
  month = jun,
  series = {{{PMLR}}},
  volume = {32},
  pages = {1278--1286},
  publisher = {{JMLR}},
  urldate = {2023-11-08},
  langid = {english},
  file = {/home/msca8h/Zotero/storage/8Z8TRRTB/Rezende et al. - 2014 - Stochastic Backpropagation and Approximate Inferen.pdf}
}

@article{robbins_stochastic_1951,
  title = {A Stochastic Approximation Method},
  author = {Robbins, Herbert and Monro, Sutton},
  year = {1951},
  month = sep,
  journal = {The Annals of Mathematical Statistics},
  volume = {22},
  number = {3},
  pages = {400--407},
  urldate = {2019-09-17},
  langid = {english},
  file = {/home/msca8h/Zotero/storage/KZ43C2HW/Robbins and Monro - 1951 - A Stochastic Approximation Method.pdf}
}

@article{rohde_semiparametric_2016,
  title = {Semiparametric Mean Field Variational {{Bayes}}: {{General}} Principles and Numerical Issues},
  shorttitle = {Semiparametric Mean Field Variational Bayes},
  author = {Rohde, David and W, Matt P.},
  year = {2016},
  journal = {Journal of Machine Learning Research},
  volume = {17},
  number = {172},
  pages = {1--47},
  urldate = {2023-10-15},
  file = {/home/msca8h/Zotero/storage/CRWI3RQG/Rohde and W - 2016 - Semiparametric Mean Field Variational Bayes Gener.pdf}
}

@article{salimans_fixedform_2013,
  title = {Fixed-Form Variational Posterior Approximation through Stochastic Linear Regression},
  author = {Salimans, Tim and Knowles, David A.},
  year = {2013},
  month = dec,
  journal = {Bayesian Analysis},
  volume = {8},
  number = {4},
  pages = {837--882},
  publisher = {{International Society for Bayesian Analysis}},
  urldate = {2023-10-15},
  keywords = {approximate inference,stochastic approximation,variational Bayes},
  file = {/home/msca8h/Zotero/storage/VZWMP8ZF/Salimans and Knowles - 2013 - Fixed-Form Variational Posterior Approximation thr.pdf}
}

@inproceedings{saul_exploiting_1995,
  title = {Exploiting Tractable Substructures in Intractable Networks},
  booktitle = {Advances in {{Neural Information Processing Systems}}},
  author = {Saul, Lawrence and Jordan, Michael},
  year = {1995},
  volume = {8},
  publisher = {{MIT Press}},
  urldate = {2023-10-15},
  file = {/home/msca8h/Zotero/storage/A6NLZ6BH/Saul and Jordan - 1995 - Exploiting Tractable Substructures in Intractable .pdf}
}

@inproceedings{sonderby_ladder_2016,
  title = {Ladder Variational Autoencoders},
  booktitle = {Advances in {{Neural Information Processing Systems}}},
  author = {{S{\o} nderby}, Casper Kaae and Raiko, Tapani and {Maal{\o} e}, Lars and {S{\o} nderby}, S{\o} ren Kaae and Winther, Ole},
  year = {2016},
  volume = {29},
  publisher = {{Curran Associates, Inc.}},
  urldate = {2023-02-27},
  file = {/home/msca8h/Zotero/storage/ZH9ZEJNF/Sø nderby et al. - 2016 - Ladder Variational Autoencoders.pdf}
}

@techreport{stich_unified_2019,
  type = {Preprint},
  title = {Unified Optimal Analysis of the (Stochastic) Gradient Method},
  author = {Stich, Sebastian U.},
  year = {2019},
  month = dec,
  number = {arXiv:1907.04232},
  eprint = {1907.04232},
  primaryclass = {cs, math, stat},
  institution = {{arXiv}},
  urldate = {2022-05-15},
  archiveprefix = {arxiv},
  keywords = {Computer Science - Machine Learning,Mathematics - Numerical Analysis,Mathematics - Optimization and Control,Statistics - Machine Learning},
  file = {/home/msca8h/Zotero/storage/MEUUU2TV/Stich - 2019 - Unified Optimal Analysis of the (Stochastic) Gradi.pdf;/home/msca8h/Zotero/storage/GF2GBADL/1907.html}
}

@article{tan_conditionally_2020,
  title = {Conditionally Structured Variational {{Gaussian}} Approximation with Importance Weights},
  author = {Tan, Linda S. L. and Bhaskaran, Aishwarya and Nott, David J.},
  year = {2020},
  month = sep,
  journal = {Statistics and Computing},
  volume = {30},
  number = {5},
  pages = {1255--1272},
  urldate = {2023-10-06},
  langid = {english},
  keywords = {Gaussian variational approximation,Importance weighted lower bound,R{\'e}nyi's divergence,Sparse precision matrix,Stochastic variational inference},
  file = {/home/msca8h/Zotero/storage/XNAB9RRR/Tan et al. - 2020 - Conditionally structured variational Gaussian appr.pdf}
}

@article{tan_gaussian_2018,
  title = {Gaussian Variational Approximation with Sparse Precision Matrices},
  author = {Tan, Linda S. L. and Nott, David J.},
  year = {2018},
  month = mar,
  journal = {Statistics and Computing},
  volume = {28},
  number = {2},
  pages = {259--275},
  urldate = {2023-10-06},
  langid = {english},
  keywords = {Gaussian variational approximation,Sparse precision matrix,Stochastic gradient algorithms,Variational Bayes},
  file = {/home/msca8h/Zotero/storage/4A748FMC/Tan and Nott - 2018 - Gaussian variational approximation with sparse pre.pdf}
}

@article{tan_use_2021,
  title = {Use of Model Reparametrization to Improve Variational {{Bayes}}},
  author = {Tan, Linda S. L.},
  year = {2021},
  month = feb,
  journal = {Journal of the Royal Statistical Society Series B: Statistical Methodology},
  volume = {83},
  number = {1},
  pages = {30--57},
  urldate = {2023-10-06},
  file = {/home/msca8h/Zotero/storage/6DHJ3PDX/Tan - 2021 - Use of Model Reparametrization to Improve Variatio.pdf}
}

@inproceedings{titsias_doubly_2014,
  title = {Doubly Stochastic Variational {{Bayes}} for Non-Conjugate Inference},
  booktitle = {Proceedings of the {{International Conference}} on {{Machine Learning}}},
  author = {Titsias, Michalis and {L{\'a}zaro-Gredilla}, Miguel},
  year = {2014},
  month = jun,
  series = {{{PMLR}}},
  volume = {32},
  pages = {1971--1979},
  publisher = {{JMLR}},
  urldate = {2022-12-19},
  langid = {english},
  file = {/home/msca8h/Zotero/storage/IQ558HYA/Titsias and Lázaro-Gredilla - 2014 - Doubly Stochastic Variational Bayes for non-Conjug.pdf}
}

@inproceedings{wang_convergence_2021,
  title = {On the Convergence of Step Decay Step-Size for Stochastic Optimization},
  booktitle = {Advances in {{Neural Information Processing Systems}}},
  author = {Wang, Xiaoyu and Magn{\'u}sson, Sindri and Johansson, Mikael},
  year = {2021},
  volume = {34},
  pages = {14226--14238},
  publisher = {{Curran Associates, Inc.}},
  urldate = {2023-10-10},
  file = {/home/msca8h/Zotero/storage/B4YP53T2/Wang et al. - 2021 - On the Convergence of Step Decay Step-Size for Sto.pdf}
}

@article{wang_general_2018,
  title = {A General Method for Robust {{Bayesian}} Modeling},
  author = {Wang, Chong and Blei, David M.},
  year = {2018},
  month = dec,
  journal = {Bayesian Analysis},
  volume = {13},
  number = {4},
  pages = {1163--1191},
  publisher = {{International Society for Bayesian Analysis}},
  urldate = {2023-09-24},
  keywords = {Empirical Bayes,expectation-maximization,generalized linear models,probabilistic models,robust statistics,topic models,variational inference},
  file = {/home/msca8h/Zotero/storage/UI8YSWU5/Wang and Blei - 2018 - A General Method for Robust Bayesian Modeling.pdf}
}

@inproceedings{webb_faithful_2018,
  title = {Faithful Inversion of Generative Models for Effective Amortized Inference},
  booktitle = {Advances in {{Neural Information Processing Systems}}},
  author = {Webb, Stefan and Golinski, Adam and Zinkov, Rob and N, Siddharth and Rainforth, Tom and Teh, Yee Whye and Wood, Frank},
  year = {2018},
  volume = {31},
  publisher = {{Curran Associates, Inc.}},
  urldate = {2023-10-12},
  file = {/home/msca8h/Zotero/storage/BCQL56EF/Webb et al. - 2018 - Faithful Inversion of Generative Models for Effect.pdf}
}

@inproceedings{wu_variational_2020,
  title = {Variational {{Item Response Theory}}: {{Fast}}, {{Accurate}}, and {{Expressive}}},
  booktitle = {Proceedings of the {{International Conference}} on {{Educational Data Mining}}},
  author = {Wu, Mike and Davis, Richard L. and Domingue, Benjamin W. and Piech, Chris and Goodman, Noah},
  year = {2020},
  month = jul,
  pages = {257--268},
  publisher = {{International Educational Data Mining Society}},
  address = {{virtual}}
}

@inproceedings{xu_variance_2019,
  title = {Variance Reduction Properties of the Reparameterization Trick},
  booktitle = {Proceedings of the {{International Conference}} on {{Artificial Intelligence}} and {{Statistics}}},
  author = {Xu, Ming and Quiroz, Matias and Kohn, Robert and Sisson, Scott A.},
  year = {2019},
  month = apr,
  series = {{{PMLR}}},
  volume = {89},
  pages = {2711--2720},
  publisher = {{JMLR}},
  urldate = {2022-12-19},
  langid = {english},
  file = {/home/msca8h/Zotero/storage/AI69ESMZ/Xu et al. - 2019 - Variance reduction properties of the reparameteriz.pdf}
}

@article{yu_structured_2022,
  title = {Structured Recognition for Generative Models with Explaining Away},
  author = {Yu, Changmin and Soulat, Hugo and Burgess, Neil and Sahani, Maneesh},
  year = {2022},
  month = dec,
  journal = {Advances in Neural Information Processing Systems},
  volume = {35},
  pages = {40--53},
  urldate = {2023-08-17},
  langid = {english},
  file = {/home/msca8h/Zotero/storage/CQHRB83V/Yu et al. - 2022 - Structured Recognition for Generative Models with .pdf}
}

@article{zhang_advances_2019,
  title = {Advances in Variational Inference},
  author = {Zhang, Cheng and Butepage, Judith and Kjellstrom, Hedvig and Mandt, Stephan},
  year = {2019},
  month = aug,
  journal = {IEEE Transactions on Pattern Analysis and Machine Intelligence},
  volume = {41},
  number = {8},
  pages = {2008--2026},
  urldate = {2021-05-24},
  file = {/home/msca8h/Zotero/storage/TVNGVPC9/Zhang et al. - 2019 - Advances in Variational Inference.pdf}
}

@article{zhang_pathfinder_2022,
  title = {Pathfinder: {{Parallel}} Quasi-{{Newton}} Variational Inference},
  shorttitle = {Pathfinder},
  author = {Zhang, Lu and Carpenter, Bob and Gelman, Andrew and Vehtari, Aki},
  year = {2022},
  journal = {Journal of Machine Learning Research},
  volume = {23},
  number = {306},
  pages = {1--49},
  urldate = {2023-08-19},
  file = {/home/msca8h/Zotero/storage/2RCNTUJP/Zhang et al. - 2022 - Pathfinder Parallel quasi-Newton variational infe.pdf;/home/msca8h/Zotero/storage/EJ9Y4C2S/Pathfinder.html}
}

\newpage
\appendix

\onecolumn

\newpage
{\hypersetup{linkcolor=black}
\tableofcontents
}


\clearpage
\section{Computational Resources}

\begin{table}[H]
  \centering
  \begin{threeparttable}
  \caption{Computational Resources}\label{table:resources}
  \begin{tabular}{ll}
    \toprule
    \multicolumn{1}{c}{\textbf{Type}}
    & \multicolumn{1}{c}{\textbf{Model and Specifications}}
    \\ \midrule
    System Topology & 1 socket with 8 physical cores \\
    Processor       & 1 Intel i9-11900F, 2.5 GHz (maximum 5.2 GHz) per socket \\
    Cache           & 80 KB L1, 512 KB L2, and 16 MB L3 \\
    Memory          & 64 GiB RAM \\
    Accelerator     & 1 NVIDIA GeForce RTX 3090 per node, 1.7 GHZ, 24GiB RAM 
    \\ \bottomrule
  \end{tabular}
  \end{threeparttable}
\end{table}

All of the experiments took approximately 1 week to run.

\section{Proofs}\label{section:proofs}

\subsection{Definitions}\label{section:definitions}
\vspace{2ex}

\begin{definition*}[\textbf{\(L\)-Smoothness}]
  A function \(f : \mathcal{X} \to \mathbb{R}\) is \(L\)-smooth if it satisfies
  \[
    \norm{ \nabla f\left(\vx\right) - \nabla f\left(\vy\right) }_2 \leq L \norm{ \vx - \vy }_2
  \]
  for all \(\vx, \vy \in \mathcal{X}\) and some \(L > 0\).
\end{definition*}
\vspace{1ex}

\begin{remark}
    Equivalently, we say a function \(f\) is \(L\)-log-smooth if \(\log f\) is \(L\)-smooth.
\end{remark}
\vspace{1ex}

\begin{definition*}[\textbf{\(\mu\)-Strong Convexity}]
  A function \(f : \mathcal{X} \to \mathbb{R}\) is \(\mu\)-strongly convex if it satisfies
  \[
     \frac{\mu}{2} \norm{\vx - \vy}_2^2  \leq f\left(\vy\right) - f\left(\vx\right) - \inner{ \nabla f\left(\vx\right) }{ \vy - \vx }
  \]
  for all \(\vx, \vy \in \mathcal{X}\) and some \(\mu > 0\).
\end{definition*}

\vspace{1ex}
\begin{remark}
    Equivalently, we say a function \(f\) is only convex if it satisfies the strong convexity inequality with \(\mu = 0\).
\end{remark}
\vspace{1ex}

\begin{remark}[\textbf{Log-Concave Measures}]
  For a probability measure \(\Pi\), we say it is \(\mu\)-strongly log-concave if, in a \(d\)-dimensional Euclidean measurable space \((\mathbb{R}^d, \mathcal{B}\left(\mathbb{R}^d\right), \mathbb{P})\), where \(\mathcal{B}\left(\mathbb{R}^d\right)\) is the \(\sigma\)-algebra of Borel-measurable subsets of \(\mathbb{R}^d\) and \(\mathbb{P}\) is the Lebesgue measure, its log probability density function \(x \mapsto -\log \pi\left(x\right)\) is \(\mu\)-strongly convex.
\end{remark}

\begin{definition*}[\textbf{Bregman Divergence}]
  Let \(\mathcal{X}\) be a convex set.
  Then, we define the function 
  \[
    \mathrm{D}_{\phi}\left(\vx, \vx'\right) \triangleq \phi\left(\vx\right) - \phi\left(\vx'\right) - \inner{\nabla \phi\left(\vx'\right)}{\vx - \vx'}
  \]
  to be the Bregman divergence generated by some continuously differentiable function \(\phi : \mathcal{X} \to \mathbb{R}\) convex on \(\mathcal{X}\).
\end{definition*}





\begin{definition}[\textbf{Subspace Identity Matrix of \(\ell_n\)}]\label{def:n_identity}
    We define \(\boldupright{I}_n \in \mathbb{R}^{d \times d}\), where the entries are set as
    \[
      \left[\,\boldupright{I}_n\,\right]_{ij} = \begin{cases}
       \;  1 & \text{if \(i = j\) and the \(i\)th element of \(\vz\) is used by \(\ell_n\)} \\
       \;  0 & \text{otherwise},
      \end{cases}
    \]
    where \([\cdot]_{ij}\) denotes the \(i\)th row and \(j\)th column, which is a variation of the standard identity matrix. 
\end{definition}

\cref{def:n_identity} can also be understood as a ``masking matrix.''
That is, \(\boldupright{I}_n \vz\) effectively ``selects'' the entries of \(\vz\) used by \(\ell_n\) by setting the remaining entries to \(0\).

\clearpage
\vspace{2ex}
\subsection{Auxiliary Lemmas}\label{section:auxiliary}
\vspace{2ex}
\begin{theoremEnd}[all end, category=auxiliary]{lemma}
\label{thm:u_identities}
  Let 
  \(
    \rvvu = \left(\rvu_1, \rvu_2, \ldots, \rvu_d\right)
  \)
  be a \(d\)-dimensional vector-valued random variable satisfying \cref{assumption:symmetric_standard}.
  Then,
  \[
        \text{1. \(\mathbb{E}\rvvu \rvvu^{\top} = \boldupright{I}\)}
        \qquad\text{and}\qquad
        \text{2. \(\mathbb{E} \rvu_i^2 \, \rvvu = \boldupright{0}\)}
  \]
  for any \(i = 1, \ldots, d\).
\end{theoremEnd}
\begin{proofEnd}
    The first identity is derived by \citet[Lemma 9]{domke_provable_2019}.
    \begin{align*}
      \mathbb{E} \rvu_i^2 \, \rvvu
      =
      \mathbb{E} {\begin{bmatrix}
          \rvu_1 & \ldots & \rvu_{i-1} & \rvu_i^3 & \rvu_{i+1} & \ldots & \rvu_d
      \end{bmatrix}}^{\top}
      =
       {\begin{bmatrix}
          \mathbb{E}\rvu_1 & \ldots & \mathbb{E} \rvu_{i-1} & \mathbb{E} \rvu_i^3 & \mathbb{E} \rvu_{i+1} & \ldots & \mathbb{E}\rvu_d
      \end{bmatrix}}^{\top}
      =
      \boldupright{0},
    \end{align*}
    where the last equality follows from \cref{assumption:symmetric_standard}.
\end{proofEnd}

\begin{theoremEnd}[all end, category=auxiliary]{lemma}\label{thm:trace_estimator}
Let \(\mA \in \mathbb{R}^{d \times d}\) be some matrix and \(\rvvu\) be a \(d\)-dimensional vector-valued random variable satisfying \cref{assumption:symmetric_standard}.
Then, 
\begin{align*}
  \mathbb{E}\norm{\mA \rvvu}_{2}^2
  =
  \norm{\mA}_{\mathrm{F}}^2.
\end{align*}
\end{theoremEnd}
\begin{proofEnd}
  \begin{alignat*}{2}
    \mathbb{E}\norm{\mA \rvvu}_{2}^2
    &=
    \mathbb{E} \, \mathrm{tr} \, \rvvu^{\top} \mA^{\top} \mA \rvvu
    &&\qquad\text{(quadratic form is equal to its trace)}
    \\
    &=
    \mathrm{tr} \, \mA^{\top} \mA \mathbb{E} \rvvu \rvvu^{\top} 
    &&\qquad\text{(cyclic property of trace)}
    \\
    &=
    \mathrm{tr} \, \mA^{\top} \mA
    &&\qquad\text{(\cref{assumption:symmetric_standard})}
    \\
    &=
    \norm{\mA}_{\mathrm{F}}^2     
    &&\qquad\text{(definition of Frobenius norm)}.
  \end{alignat*}
\end{proofEnd}

\begin{theoremEnd}[all end, category=auxiliary]{lemma}
\label{thm:variance_of_sum_bound}
Let \(\rvvx_1, \ldots, \rvvx_N\) be vector-variate random variables.
Then, the variance of the sum is upper-bounded as
\[
   \mathrm{tr}\V{ \sum_{i=1}^N \rvvx_i }
   \leq
   N \sum_{i=1}^N  \mathrm{tr}\mathbb{V} \, \rvvx_i.
\]
\end{theoremEnd}
\begin{proofEnd}
\begin{alignat*}{2}
    \mathrm{tr}\V{ \sum_{i=1}^N \rvvx_i }
    &=
    \mathbb{E} \norm{ \sum_{i=1}^N \left( \rvvx_i - \mathbb{E} \rvvx_i \right) }_2^2
    =
    \mathbb{E} \sum_{i=1}^N \sum_{j=1}^N {\left( \rvvx_i - \mathbb{E} \rvvx_i \right)}^{\top} \left( \rvvx_j - \mathbb{E} \rvvx_j \right)
    \\
    &\leq
    \sum_{i=1}^N \sum_{j=1}^N 
    \frac{1}{2}
    \left(
    \mathbb{E} \norm{ \rvvx_i - \mathbb{E} \rvvx_i}_2^2
    +
    \mathbb{E} {\lVert \rvvx_j - \mathbb{E} \rvvx_j \rVert}_2^2
    \right)
    &&\quad\text{(Young's inequality)}
    \\
    &=
    \frac{1}{2}
    \sum_{j=1}^N \sum_{i=1}^N
    \mathbb{E} \norm{ \rvvx_i - \mathbb{E} \rvvx_i}_2^2
    +
    \frac{1}{2}
    \sum_{i=1}^N \sum_{j=1}^N 
    \mathbb{E} {\lVert \rvvx_j - \mathbb{E} \rvvx_j \rVert}_2^2
    &&\quad\text{(Distributing the sums)}
    \\
    &=
    \frac{N}{2}
    \sum_{i=1}^N
    \mathbb{E} \norm{ \rvvx_i - \mathbb{E} \rvvx_i}_2^2
    +
    \frac{N}{2}
    \sum_{j=1}^N 
    \mathbb{E} {\lVert \rvvx_j - \mathbb{E} \rvvx_j \rVert}_2^2
    &&\quad\text{(Solving the sums)}
    \\
    &=
    N
    \sum_{i=1}^N
    \mathbb{E} \norm{ \rvvx_i - \mathbb{E} \rvvx_i}_2^2
    \\
    &=
    N
    \sum_{i=1}^N
    \mathrm{tr}\mathbb{V} \, \rvvx_i.
\end{alignat*}
\end{proofEnd}

\begin{theoremEnd}[all end, category=auxiliary]{lemma}
\label{thm:variance_bound_squared_difference}
Let \(\rvvx\) be some \(d\)-dimensional vector-variate random variables.
Then, the variance is upper-bounded as
\[
    \mathrm{tr}\mathbb{V}{\rvvx}
    \leq
    \mathbb{E} \norm{\rvvx - \vy}_2^2
\]
for any vector \(\vy \in \mathbb{R}^d\).
\end{theoremEnd}
\begin{proofEnd}
\begin{alignat*}{2}
    \mathrm{tr}\mathbb{V}{\rvvx}
    &=
    \mathbb{E}\norm{\rvvx - \mathbb{E}\rvvx}_2^2
    \\
    &=
    \mathbb{E}\norm{\rvvx - \vy + \vy - \mathbb{E}\rvvx}_2^2
    \\    
    &=
    \mathbb{E}\norm{\rvvx - \vy}_2^2
    +2 
    \mathbb{E}\inner{\rvvx - \vy}{\vy - \mathbb{E}\rvvx}
    + 
    \norm{\vy - \mathbb{E}\rvvx}_2^2
    &&\quad\text{(expanding the quadratic)}
    \\    
    &=
    \mathbb{E}\norm{\rvvx - \vy}_2^2
    -2 
    \inner{\vy - \mathbb{E}\rvvx}{\vy - \mathbb{E}\rvvx}
    + 
    \norm{\vy - \mathbb{E}\rvvx}_2^2
    &&\quad\text{(linearity of expectation)}
    \\    
    &=
    \mathbb{E}\norm{\rvvx - \vy}_2^2
    -2 
    \norm{\vy - \mathbb{E}\rvvx}_2^2
    + 
    \norm{\vy - \mathbb{E}\rvvx}2^2
    \\    
    &=
    \mathbb{E}\norm{\rvvx - \vy}_2^2
    - \norm{\vy - \mathbb{E}\rvvx}2^2
    \\    
    &\leq
    \mathbb{E}\norm{\rvvx - \vy}_2^2.
\end{alignat*}
\end{proofEnd}
\printProofs[auxiliary]

\begin{lemma}[\citealp{rammus_upper_2021}]\label{thm:matrix_trace_pd_bound}
    Consider \(\mA,\mB \in \mathbb{R}^{d \times d}\), where \(\mB\) is positive semidefinite such that \(\mB \succ 0\).
    Then,
    \[
      \mathrm{tr}\left(\mA \mB\right)
      \leq
      \norm{\mA}_{2,2} \mathrm{tr}\left(\mB\right).
    \]
\end{lemma}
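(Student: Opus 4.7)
The plan is to reduce the inequality to a weighted sum of quadratic forms via the spectral decomposition of \(\mB\), and then bound each quadratic form by the operator norm of \(\mA\). Since \(\mB\) is real and positive semidefinite (treating the ``\(\succ 0\)'' in the statement as allowing the PSD case, which is the one we actually need in \cref{thm:location_scale_reparam}), the spectral theorem yields an orthonormal eigenbasis \(\vv_1, \ldots, \vv_d\) with corresponding eigenvalues \(\lambda_1, \ldots, \lambda_d \geq 0\) such that \(\mB = \sum_{i=1}^{d} \lambda_i \vv_i \vv_i^{\top}\).

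Applying linearity and the cyclic property of the trace, I would write
\[
  \mathrm{tr}\left(\mA \mB\right)
  = \sum_{i=1}^{d} \lambda_i \, \mathrm{tr}\left(\mA \vv_i \vv_i^{\top}\right)
  = \sum_{i=1}^{d} \lambda_i \, \vv_i^{\top} \mA \vv_i .
\]
Next I would bound each quadratic form by Cauchy--Schwarz and the definition of the \(\ell_2\)-operator norm: \(\vv_i^{\top} \mA \vv_i \leq \norm{\vv_i}_2 \norm{\mA \vv_i}_2 \leq \norm{\mA}_{2,2} \norm{\vv_i}_2^2 = \norm{\mA}_{2,2}\) since the eigenbasis is orthonormal. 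Crucially, the eigenvalues \(\lambda_i \geq 0\) preserve the direction of the inequality when multiplied through, so
\[
  \mathrm{tr}\left(\mA \mB\right)
  \leq \norm{\mA}_{2,2} \sum_{i=1}^{d} \lambda_i
  = \norm{\mA}_{2,2} \, \mathrm{tr}\left(\mB\right),
\]
where the final equality uses that the trace equals the sum of eigenvalues.

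There is essentially no hard step here: the only subtlety is that the inequality \(\vv_i^{\top} \mA \vv_i \leq \norm{\mA}_{2,2}\) need not hold with absolute values if \(\mA\) is not symmetric, but we only need the signed bound, which follows directly from Cauchy--Schwarz. The PSD assumption on \(\mB\) is used in exactly one place, namely \(\lambda_i \geq 0\), which is what allows us to pass the inequality under the weighted sum. If one wanted to weaken the hypothesis further (e.g. dropping the PSD assumption on \(\mB\)), the statement would fail, since some \(\lambda_i < 0\) would flip the sign of the corresponding contribution.
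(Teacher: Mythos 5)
Your proof is correct, but it takes a genuinely different route from the paper's. The paper restates the argument of \citet{rammus_upper_2021}, which views \(\mathrm{tr}\left(\mA\mB\right)\) as an inner product on matrices and applies H\"older's inequality for Schatten norms with the exponent pair \(p \to \infty\), \(q \to 1\), so that \(\abs{\mathrm{tr}\left(\mA\mB\right)} \leq \norm{\mA}_{2,2}\norm{\mB}_{1}\); positive semidefiniteness of \(\mB\) is then used only to identify the trace norm \(\norm{\mB}_1\) (the sum of singular values) with \(\mathrm{tr}\left(\mB\right)\). You instead diagonalize \(\mB\), reduce the trace to a nonnegatively weighted sum of quadratic forms \(\vv_i^{\top}\mA\vv_i\), and bound each by \(\norm{\mA}_{2,2}\) via Cauchy--Schwarz, using \(\lambda_i \geq 0\) to pass the inequality through the sum. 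Your version is more elementary and self-contained (it needs only the spectral theorem, not the duality of Schatten norms), and it cleanly isolates the single place where positive semidefiniteness is used; the paper's version is shorter once the Schatten--H\"older machinery is granted, delivers the slightly stronger two-sided bound \(\abs{\mathrm{tr}\left(\mA\mB\right)} \leq \norm{\mA}_{2,2}\mathrm{tr}\left(\mB\right)\), and generalizes immediately to other \((p,q)\) pairs. Your observation that the hypothesis should really be read as positive semidefiniteness (despite the ``\(\succ 0\)'' in the statement) is also the right reading, since that is the case actually needed where the lemma is invoked.
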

\begin{proof}
  We restate the proof of \citet{rammus_upper_2021} for completeness.

  Consider an inner product space of matrices, where, for \(\mX,\mY \in \mathbb{C}^{d \times d}\), its inner product is defined as \(\inner{\mX}{\mY} = \mathrm{tr}\left(\mX^* \mY\right)\).
  Here, \(\mX^*\) is the conjugate transpose of \(\mX\).
  This generates a \(p\)-norm as
  \[ 
    \norm{\mX}_p \triangleq {\left(\sum_{i} {\sigma_{i}\left(\mX\right)}^{p}\right)}^{1/p},
  \] 
  where \(\sigma_{i}\left(\mX\right)\) is the \(i\)th singular value of \(\mX\).
  This norm is known as the Schatten norm, where its limiting case \(p \to \infty\) is, in fact, the \(\ell_2\) operator norm.
  
  We can now apply H\"older's inequality 
  \[
    \abs{\inner{\mA}{\mB}} \leq \norm{\mA}_{p} \norm{\mB}_{q},
  \]
  which is valid for \(\frac{1}{p} + \frac{1}{q} = 1\).
  By choosing \(p \to \infty\) and \(q \to 1\), we have
  \[
    \mathrm{tr}\left(\mA \mB\right)
    =
    \inner{\mA^{\top}}{\mB}
    \leq
    \norm{\mA}_{\infty} 
    \norm{\mB}_1
    =
    \norm{\mA}_{2,2} 
    \norm{\mB}_1
    =
    \norm{\mA}_{\infty} 
    \mathrm{tr}\left(\mB\right),
  \]
  where the last equality follows from the fact that, for positive semidefinite matrices such as \(\mB\), it follows that  
  \[
    \mathrm{tr}\left(\mB\right) = \sum_{i} \lambda_{i}\left(\mB\right) = \sum_{i} \sigma_{i}\left(\mB\right),
  \]
  where \(\lambda_{i}\left(\mB\right)\) is the \(i\)th eigenvalue of \(\mB\).
    
\end{proof}
\clearpage
\printProofs[scaleblockderivative]

\clearpage
\subsection{Convergence of Stochastic Proximal Gradient Descent}\label{section:proximal_sgd_proofs}
\vspace{1ex}

\subsubsection{Overview}
Consider a general class of optimization problems of  the form
\[
  \minimize_{\vlambda \in \Lambda}\quad F\left(\vlambda\right) \triangleq f\left(\vlambda\right) + h\left(\vlambda\right),
\]
where \(f\) is convex and smooth while \(h\) is a possibly non-smooth but convex regularizer.
Stochastic proximal gradient descent is a family of methods aimed to solve these types of problems by employing a \textit{proximal operator} denoted as
\[
\mathrm{prox}_{\gamma, h}\left(\vlambda\right)
=
\argmin_{\vlambda' \in \Lambda} \left[  h\left(\vlambda\right) + \frac{1}{2 \gamma} \norm{\vlambda - \vlambda'}_2^2
\right].
\]
Naturally, we expect the proximal operator to have closed form expression that can easily be computed.

Stochastic proximal gradient descent performs a gradient descent step on \(f\), followed by a proximal step on \(h\) such that
\[
  \vlambda_{t+1} = \mathrm{prox}_{\gamma_t, h}\left( \vlambda_t - \gamma_t \rvvg\left( \vlambda_t \right) \right),
\]
where \(\rvvg\) is some unbiased stochastic estimate of \(\nabla f\).
This type of algorithm has first been studied by \citet{duchi_adaptive_2011,ghadimi_minibatch_2016} followed by many more.

\subsubsection{Technical Assumptions}
Any proximal operator needs to satisfy the following basic property:
\begin{assumption}[\textbf{Non-Expansiveness}]\label{assumption:proximal_non_expansive}
The proximal operator is non-expansive such that 
\[
  {
  \lVert\mathrm{prox}_{\gamma, h}\left(\vlambda\right) - \mathrm{prox}_{\gamma, h}\left(\vlambda'\right)
  \rVert}
  \leq
  \norm{\vlambda - \vlambda'}
\]
for all \(\vlambda, \vlambda' \in \Lambda\).
\end{assumption}
This assumption can be satisfied as long as \(h\) is lower semi-continuous and convex.

For the convergence guarantee of proximal SGD, we follow the ``variance transfer'' strategy.
That is, instead of directly bounding the gradient variance on an arbitrary point \(\vlambda\), \citet{moulines_nonasymptotic_2011,nguyen_sgd_2018} ``transfer'' the gradient variance to the global optimum \(\vlambda^*\), proving convergence for gradient estimators with variance that grows with \(f(\vlambda)\).
The key assumptions are as follows:
\begin{assumption}[\textbf{Convex Expected Smoothness}]\label{assumption:convex_expected_smoothness}
The gradient estimator \(\rvvg\) is an unbiased estimator of \(f\) and is convex-smooth in expectation such that
\[
  \mathbb{E}\norm{ \rvvg\left(\vlambda\right) - \rvvg\left(\vlambda'\right) }_2^2
  \leq
  2 \mathcal{L} \, \mathrm{D}_f\left(\vlambda, \vlambda'\right),
\]
for any \(\vlambda, \vlambda' \in \Lambda\) and some constant \(0 < \mathcal{L} < \infty\), where \(\mathrm{D}_f\) is the Bregman divergence generated by \(f\).
\end{assumption}
This assumption has first been used by \citet{gower_stochastic_2021} and since has been popularly used throughout stochastic optimization.
(See \citet{khaled_better_2023} for an overview of conditions on the gradient variance.)

\begin{assumption}[\textbf{Bounded Gradient Variance}]\label{assumption:bounded_gradient_variance}
The variance of the gradient estimator \(\rvvg\) is bounded as
\[
  \mathrm{tr} \mathbb{V}\rvvg\left(\vlambda^*\right)
  \leq
  \sigma^2
\]
for some constant \(0 \leq \sigma^2 < \infty\), where  \(\vlambda^* = \argmin F\left(\vlambda\right)\) is the global optimum.
\end{assumption}
This assumption only requires the gradient variance on the global optimum to be finite.

\clearpage
\subsubsection{Convergence (\cref{thm:proxsgd_convergence_fixed})}
For completeness, we restate the proof by \citet[Theorem 11.9]{garrigos_handbook_2023}, which is based on the more general results of \citet{gorbunov_unified_2020}.

\begin{theoremEnd}[all end, category=proxsgdconvergencefixed]{lemma}[\citealp{gorbunov_unified_2020}]\label{thm:proxsgd_convergence_fixed}
    Let \(F = f + h\) be a composite objective on a convex domain \(\Lambda\), where \(f\) is \(\mu\)-strongly convex and \(h\) satisfies \cref{assumption:proximal_non_expansive}.
    Also, the gradient estimator \(\rvvg\) satisfies \cref{assumption:convex_expected_smoothness,assumption:bounded_gradient_variance}. 
    The last iterate \(\vlambda_T\) after \(T\) iterations of the stochastic proximal gradient descent with a constant step size \(\gamma\) satisfying \(\gamma \in \left(0, \min \left\{\frac{1}{2 \mathcal{L}}, \frac{1}{\mu} \right\}\right]\) achieves the bound on the distance to the global optimum \(\vlambda^* = \argmin_{\vlambda \in \Lambda} F(\vlambda)\) as 
    \[
      \mathbb{E} 
      \norm{\vlambda_{T+1} - \vlambda^*}_2^2 
      \leq 
      \left(1 - \gamma \mu\right)^T 
      \norm{\vlambda_0 - \vlambda^*}_2^2 
      + \frac{2\gamma \sigma^2}{\mu}.
    \]
\end{theoremEnd}
\begin{proofEnd}
    First, the iterate at time \(t\) satisfies
    \begin{align}
        \norm{\vlambda_{t+1} - \vlambda^*}_2^2 
        &=
        {\lVert
        \textrm{prox}_{\gamma h}\left( \vlambda_t - \gamma \rvvg \left(\vlambda_t\right)\right)
        - 
        \textrm{prox}_{\gamma h}\left( \vlambda^* - \gamma \nabla f \left(\vlambda^*\right)\right)
        \rVert}_2^2,
        \nonumber
\shortintertext{and from \cref{assumption:proximal_non_expansive},}
        &\leq
        \norm{\left(\vlambda_t - \gamma \rvvg \left(\vlambda_t\right) \right) - \left(\vlambda^* - \gamma \nabla f \left(\vlambda^*\right) \right)}_2^2 \nonumber
        \\
        &=
        \norm{\vlambda_t - \vlambda^*}_2^2 
        + \gamma^2 \norm{\rvvg \left(\vlambda_t\right) - \nabla f \left( \vlambda^*\right)}_2^2
        - 2 \gamma \langle \rvvg \left( \vlambda_t \right) - \nabla f \left( \vlambda^*\right), \vlambda_t - \vlambda^*\rangle.
        \nonumber
    \end{align}
    Therefore, denoting the filtration as \(\mathcal{F}_t\), which is the \(\sigma\)-field generated by iterates up to \(t\), the conditional expectation is bounded as
    \begin{align}
        &\E{
          \norm{\vlambda_{t+1} - \vlambda^*}_2^2 \mid \mathcal{F}_t
        }
        \nonumber
        \\
        &\qquad\leq
        \E{
          \norm{\vlambda_t - \vlambda^*}_2^2 \mid \mathcal{F}_t
        }
        + 
        \gamma^2 
        \E{
        \norm{
          \rvvg \left(\vlambda_t\right) - \nabla f \left( \vlambda^*\right) \mid \mathcal{F}_t
        }
        }_2^2
        - 2 \gamma 
        \E{
        \inner{
          \rvvg \left( \vlambda_t \right) - \nabla f \left( \vlambda^*\right)
        }{
          \vlambda_t - \vlambda^*
        }
        \mid \mathcal{F}_t
        }.
        \label{eq:convergence_bound_1}
    \end{align}

    The second term, or the gradient variance term of \cref{eq:convergence_bound_1}, can be dealt with the ``variance transfer'' strategy pioneered by~\cite{nguyen_sgd_2018}:
    \begin{alignat}{2}
        \gamma^2 \E{ \norm{\rvvg \left(\vlambda_t\right) - \nabla f \left( \vlambda^*\right)}_2^2 \mid \mathcal{F}_t }
        &\leq
        2 \gamma^2 
        \E{ \norm{ \rvvg \left(\vlambda_t\right) - \rvvg \left( \vlambda^*\right)}_2^2 \mid \mathcal{F}_t }
        +
        2 \gamma^2 \E{\norm{ \rvvg \left(\vlambda^*\right) - \nabla f \left( \vlambda^*\right)}_2^2 \mid \mathcal{F}_t },
        \nonumber
\shortintertext{and applying \cref{assumption:convex_expected_smoothness,assumption:bounded_gradient_variance},}
        &\leq
        2 \gamma^2
        \left(
        2 \mathcal{L} \,
        \textrm{D}_f\left(\vlambda_t, \vlambda^*\right)
        \right)
        +
        2 \gamma^2 \sigma^2.
        \label{eq:convergence_bound_2}
    \end{alignat}
    For the third term of \cref{eq:convergence_bound_1}, we have
    \begin{alignat}{2}
        - 2 \gamma \E{
        \langle \rvvg \left( \vlambda_t \right) - \nabla f \left( \vlambda^*\right), \vlambda_t - \vlambda^*\rangle
        \mid \mathcal{F}_t 
        }
        &=
        - 2 \gamma 
        \inner{
          \E{ \rvvg \left( \vlambda_t \right) \mid \mathcal{F}_t } - \nabla f \left( \vlambda^*\right)
        }{ \vlambda_t - \vlambda^* }
        \nonumber
        \\
        &=
        - 2 \gamma \inner{ \nabla f \left( \vlambda_t \right) - \nabla f \left( \vlambda^*\right)}{ \vlambda_t - \vlambda^*}
        \nonumber
        \\
        &=
        2 \gamma \inner{ \nabla f\left(\vlambda_t\right) }{ \vlambda^* - \vlambda_t }
        +
        2 \gamma \inner{ \nabla f\left( \vlambda^* \right) } { \vlambda_t - \vlambda^* },
        \nonumber
\shortintertext{and applying the \(\mu\)-strong convexity of \(f\),}
        &\leq
        - \gamma \mu \norm{\vlambda_t - \vlambda^*}_2^2 - 2\gamma \, \mathrm{D}_f \left(\vlambda_t, \vlambda^*\right).
        \label{eq:convergence_bound_3}
    \end{alignat}
    Applying \cref{eq:convergence_bound_2,eq:convergence_bound_3} to \cref{eq:convergence_bound_1} and taking the full expectation, we get 
    \begin{align*}
        \mathbb{E} \norm{\vlambda_{t+1} - \vlambda^*}_2^2 
        &\leq
        \mathbb{E} \norm{\vlambda_t - \vlambda^*}_2^2
        +
        4 \gamma^2 \mathcal{L} \mathbb{E} \left[\mathrm{D}_f \left( \vlambda_t, \vlambda^* \right)\right] + 2 \gamma^2 \sigma^2
        - \gamma \mu \mathbb{E} \norm{\vlambda_t - \vlambda^*}_2^2
        -
        2 \gamma \mathbb{E}\left[ \mathrm{D}_f \left(\vlambda_t, \vlambda^*\right)\right]
        \\
        &=
        \left(1-\gamma \mu\right) \mathbb{E} \norm{\vlambda_t - \vlambda^*}_2^2
        +
        2 \gamma \left( 2\gamma \mathcal{L} -1 \right) \mathbb{E} \left[\mathrm{D}_f \left(\vlambda_t, \vlambda^*\right) \right]
        +
        2\gamma^2 \sigma^2,
\shortintertext{by ensuring that \(2 \gamma \mathcal{L} \leq 1\), we have}
        &\leq
        \left(1-\gamma \mu\right) \mathbb{E} \norm{\vlambda_t - \vlambda^*}_2^2
        +
        2\gamma^2 \sigma^2.
    \end{align*}
    Provided that \(1-\gamma \mu \geq 0\), we can solve the recursion above and get
    \begin{align*}
        \mathbb{E} \norm{\vlambda_{t+1} - \vlambda^*}_2^2 
        &\leq
        \left(1 - \gamma \mu \right)^t \norm{\vlambda_0 - \vlambda^*}_2^2
        +
        2 \gamma^2 \sigma^2 \sum_{k=1}^{t-1}\left(1-\gamma \mu \right)^k. 
    \end{align*}
    Since the geometric sum can be upper-bounded by
    \begin{align*}
        \sum_{k=1}^{t-1}\left(1-\gamma \mu \right)^k 
        =
        \frac{1-\left(1- \gamma \mu \right)^t}{\gamma \mu}
        \leq
        \frac{1}{\gamma \mu},
    \end{align*}
    which yields
    \begin{align*}
        \mathbb{E} \norm{\vlambda_{t+1} - \vlambda^*}_2^2 
        &\leq
        \left(1 - \gamma \mu \right)^t \norm{\vlambda_0 - \vlambda^*}_2^2
        +
        \frac{2 \gamma \sigma^2}{\mu}.
    \end{align*}
\end{proofEnd}

\begin{theoremEnd}[all end, category=proxsgdcomplexitydecreased]{lemma}[\citealp{gower_sgd_2019}]
    Let \(\vlambda_t\) be the iterate with a step size of \(\gamma_t \leq \frac{1}{2\mathcal{L}}\). Given constants \(\mu>0\) and \(\sigma^2 >0\), suppose we have 
    \[
    \mathbb{E} \lVert \vlambda_{t+1} - \vlambda^* \rVert_2^2 
    \leq
    \left(1-\gamma_t\right) \mathbb{E} \lVert \vlambda_t - \vlambda^*\rVert_2^2 + 2\gamma_t^2 \sigma^2.
    \]
    Then, by switching to a stepsize that decays over time, as shown,
    \[
    \gamma_t = \begin{cases}
        \, \frac{1}{2\mathcal{L}} &\quad \text{for } t< t^*\\
        \vspace{0.5ex}\\
        \, \frac{1}{\mu}\frac{2t+1}{\left(t+1\right)^2} &\quad \text{for } t\geq t^*
    \end{cases}
    \]
    where \(t^*=4\lfloor \mathcal{L} / \mu\rfloor\), we have that 
    \[
    \mathbb{E} \lVert \vlambda_{T+1} - \vlambda^* \rVert_2^2
    \leq
    \frac{16\lfloor\mathcal{L} / \mu\rfloor^2}{(T+1)^2}\lVert \vlambda^0-\vlambda^*\rVert_2^2
    +
    \frac{\sigma^2}{\mu^2} \frac{8}{T+1}.
    \]
\end{theoremEnd}
\begin{proofEnd}
    The proof divides its analysis into two phases, based on whether the step number \(t\) is before or after a specific threshold value, \(t^*\). First, for \(t \leq t^*\), with a fixed step size of \(\gamma_t = \gamma\), we have
    \[
    \mathbb{E} \lVert \vlambda_{t+1} - \vlambda^* \rVert_2^2 
    \leq
    \left(1-\gamma\right) \mathbb{E} \lVert \vlambda_t - \vlambda^*\rVert_2^2 + 2\gamma^2 \sigma^2,
    \]
    and by recursively solving the above, we get
    \begin{align}
    \mathbb{E} \norm{\vlambda_{t^*} - \vlambda^*}_2^2 
    \leq
    \left(1-\gamma \mu\right)^{t^*} \norm{\vlambda_0 - \vlambda^*}_2^2
    +
    \frac{2 \sigma^2 \gamma}{\mu}.
    \label{eq:complexity_decay_stepsize_1}
    \end{align}
    Next, for \(t \geq t^*\), we switch to decaying step size of \(\gamma_t = \frac{1}{\mu}\frac{2t+1}{\left(t+1\right)^2}\). Then, we get
    \[
    \mathbb{E} \norm{\vlambda_{t+1} - \vlambda^*}_2^2
    \leq
    \frac{t^2}{\left(t+1\right)^2}\mathbb{E}\norm{\vlambda_t - \vlambda^*}_2^2 
    + \sigma^2 \frac{2}{\mu^2} \frac{(2 t+1)^2}{(t+1)^4},
    \]
    and multiplying \((t+1)^2\) to the both side, we get
    \begin{align*}
        \left(t+1\right)^2 \mathbb{E} \norm{\vlambda_{t+1}-\vlambda^*}_2^2
        &\leq
        t^2 \mathbb{E}\norm{\vlambda_t - \vlambda^*}_2^2 
        + \sigma^2 \frac{2}{\mu^2} \frac{(2 t+1)^2}{(t+1)^2}
\shortintertext{as \(\frac{2t+1}{t+1}\leq \frac{2t+2}{t+1}=2\),}
        &\leq
        t^2 \mathbb{E}\norm{\vlambda_t - \vlambda^*}_2^2 
        +  \frac{8\sigma^2}{\mu^2}.
    \end{align*}
    By summing up from \(t=t^*, \dots, T\), we have
    \begin{align*}
        \sum_{t=t^*}^T 
        \left(t+1\right)^2 \mathbb{E} \norm{\vlambda_{t+1}-\vlambda^*}_2^2
        &\leq
        \sum_{t=t^*}^T \left(
        t^2 \mathbb{E}\norm{\vlambda_t - \vlambda^*}_2^2 
        +  \frac{8\sigma^2}{\mu^2}\right),
\shortintertext{with telescopic cancellation, we have}
        \left(T+1\right)^2 \mathbb{E} \norm{\vlambda_{T+1} - \vlambda^*}_2^2
        &\leq
        \left(t^*\right)^2
        \mathbb{E} \norm{\vlambda_{t^*}- \vlambda^*}_2^2 + \left(T- t^*\right) \frac{8\sigma^2}{\mu^2}.
    \end{align*}
    Given the result for \(t<t^*\) in the \cref{eq:complexity_decay_stepsize_1}, we use \(\gamma=\frac{1}{\mathcal{L}}\) and can replace the \(\mathbb{E} \norm{\vlambda_{t^*}- \vlambda^*}_2^2\) term as
    \begin{align*}
        \left(T+1\right)^2 \mathbb{E} \norm{\vlambda_{T+1} - \vlambda^*}_2^2
        &\leq
        \left(t^*\right)^2
        \left((1-\gamma \mu)^{t^*}\norm{\vlambda^0-\vlambda^*}_2^2+ \frac{2 \sigma^2\gamma}{\mu}\right)+\left(T-t^*\right) \frac{8 \sigma^2}{\mu^2},
\shortintertext{since \(\left(1-\gamma \mu\right) \leq 1\),}
        &\leq
        \left(t^*\right)^2
        \left(
        \norm{\vlambda^0-\vlambda^*}_2^2+ \frac{2 \sigma^2 \gamma}{\mu}\right)+\left(T-t^*\right) \frac{8 \sigma^2}{\mu^2}.
    \end{align*}
    By substituting \(\gamma = \frac{1}{2\mathcal{L}}, t^* = 4\lfloor\mathcal{L} / \mu\rfloor\) and dividing by \((T+1_^2\) on both sides, we get
    \begin{align*}
        \mathbb{E} \norm{\vlambda_{T+1} - \vlambda^*}_2^2
        &\leq
        \frac{16 \lfloor\mathcal{L} / \mu\rfloor^2}{(T+1)^2}
        \left(
        \norm{\vlambda^0-\vlambda^*}_2^2+ \frac{\sigma^2 }{\mu \mathcal{L}}\right)+ \frac{\left(T-t^*\right)}{(T+1)^2} \frac{8 \sigma^2}{\mu^2}
        \\
        &\leq
        \frac{16 \lfloor\mathcal{L} / \mu\rfloor^2}{(T+1)^2}
        \norm{\vlambda^0-\vlambda^*}_2^2 
        + \frac{\sigma^2}{\mu^2(T+1)^2}\left(8\left(T-t^*\right)+\left(t^*\right)^2 \frac{\mu}{\mathcal{L}}\right)
        \\
        &\leq
        \frac{16 \lfloor\mathcal{L} / \mu\rfloor^2}{(T+1)^2}
        \norm{\vlambda^0-\vlambda^*}_2^2 
        + \frac{\sigma^2}{\mu^2} \frac{8(T-2t^*)}{(T+1)^2},
\shortintertext{since \((T-2t^*) \leq (T+1)\),}
        &\leq
        \frac{16 \lfloor\mathcal{L} / \mu\rfloor^2}{(T+1)^2}
        \norm{\vlambda^0-\vlambda^*}_2^2 
        + \frac{\sigma^2}{\mu^2} \frac{8}{T+1}
    \end{align*}
\end{proofEnd}
\printProofs[proxsgdconvergencefixed]

\subsubsection{Complexity (\cref{thm:proxsgd_complexity_fixed})}
\vspace{2ex}

\begin{theoremEnd}[all end, category=complexityproxsgdfixed]{corollary}\label{thm:proxsgd_complexity_fixed}
  Let the assumptions of \cref{thm:proxsgd_convergence_fixed} be satisfied.
  Then, the last iterate \(\vlambda_{T+1}\) of proximal SGD with a fixed stepsize satisfies \( \mathbb{E}{\lVert \vlambda_{T+1} - \vlambda^* \rVert}_2^2 \leq \epsilon \), where \(\vlambda^* = \argmax_{\vlambda \in \Lambda} F\left(\vlambda\right)\) is the global optimum, if
{%
\setlength{\abovedisplayskip}{1.ex} \setlength{\abovedisplayshortskip}{1.ex}
\setlength{\belowdisplayskip}{1.ex} \setlength{\belowdisplayshortskip}{1.ex}
  \begin{align*}
        \gamma &= 
        \min \left\{
            \frac{\epsilon \mu}{4 \sigma^2},
            \frac{2 \mathcal{L}}{\mu}, \mu
        \right\} \quad\text{and}\\
        T
        &\geq
        \max \left\{
            \frac{4 \sigma^2}{\mu^2}
            \frac{1}{\epsilon},
            \frac{2 \mathcal{L}}{\mu},
            1
        \right\}
        \log \left( 2 \lVert\vlambda^* - \vlambda_0\rVert_2^2 \, \frac{1}{\epsilon} \right).
  \end{align*}
}
\end{theoremEnd}
\begin{proofEnd}
    We can convert \cref{thm:proxsgd_convergence_fixed} to an iteration complexity guarantee for achieving an \(\epsilon\)-accurate solution using Lemma A.2 of \citet{garrigos_handbook_2023} where the constants are:
    \[
    \alpha_0 = \norm{ \vlambda_0 - \vlambda^* }_2^2, \qquad
    A = \frac{2 \sigma^2}{\mu}, 
    \;\text{and}\qquad
    C = \max\left\{2 \mathcal{L}, \mu \right\}.
    \]
    That is, we can satisfy \( \mathbb{E}{\lVert \vlambda_{T+1} - \vlambda^* \rVert}_2^2 \leq \epsilon \) with
    \begin{align*}
        \gamma 
        &= 
        \min\left\{ \frac{\epsilon}{2 A}, \frac{1}{C} \right\}
        =
        \min\left\{ \frac{\epsilon \mu}{4 \sigma^2},\, 2 \mathcal{L},\, \mu \right\}
    \end{align*}
    and
    \begin{align*}
        T
        &\geq
        \max\left\{
          \frac{1}{\epsilon} \frac{2 A}{\mu}, \frac{C}{\mu}
        \right\}
        \log \left(2 \norm{\vlambda_0 - \vlambda^*}_2^2 \frac{1}{\epsilon}\right)
        \\
        &=
        \max\left\{
          \frac{1}{\epsilon} \frac{4 \sigma^2}{\mu^2}, \frac{2 \mathcal{L}}{\mu}, 1
        \right\}
        \log \left(2 \norm{\vlambda_0 - \vlambda^*}_2^2 \frac{1}{\epsilon}\right).
    \end{align*}
\end{proofEnd}




\printProofs[complexityproxsgdfixed]

\clearpage
\subsection{Convergence of Black-Box Variational Inference of Finite-Sum Likelihoods}

\subsubsection{Overview}\label{section:proxsgd_bbvi_details}
\vspace{1ex}
Now, we establish the complexity of BBVI applied to finite-sum likelihoods of the form of \cref{eq:finitesum}.

\paragraph{Domain}
As discussed in \cref{section:family}, we restrict our interest to location-scale variational families with a Cholesky scale parameterization.
This effectively means we restrict \(\mC\) to be lower triangular and have only positive diagonal entries.
The domain of the variational parameters is then
\[
  \Lambda \triangleq \{ (\vm, \mC) \mid (\vm, \mC) \in \mathbb{R}^d \times \mathbb{L}_{++}^d\left(\mathbb{R}\right) \text{ and \(C_{ii} > 0\) for all \(i = 1, \ldots, d\)} \},
\]
which is convex.
Furthermore, the negative entropy 
\[
  h\left(\vlambda\right)
  =
  -\mathbb{H}\left(q_{\vlambda}\right)
  =
  - \log \mathrm{det} \, \mC + \mathbb{1}_{\Lambda}\left(\vlambda\right)
  =
  -\sum_{i=1}^d \log C_{ii}
  +
  \mathbb{1}_{(0, +\infty)}\left(C_{ii}\right),
\]
where \(\mathbb{1}_{A}\left(x\right)\) is 0 if \(x \in A\) and \(\infty\) otherwise, is closed and convex \citep[Lemma 19]{domke_provable_2023}.

\paragraph{Proximal Operator}
Also, as noted in \cref{section:proximal_sgd}, we leverage proximal SGD, for which we provide detailed analysis in \cref{section:proximal_sgd_proofs} for completeness.
We use the proximal operator proposed by \citet{domke_provable_2020}, which, at each application updates the diagonal of \(\mC\) such that
\begin{equation}
  \mathrm{prox}_{\gamma, h}\left(\vlambda\right)
  =
  \left(\vm, \mC + \Delta \mC\right),
  \label{eq:domke_proximal}
\end{equation}
where \(\vlambda = (\vm, \mC)\) and
\[
  \Delta C_{ii} = \frac{1}{2} \left( \sqrt{C_{ii}^2 + 4 \gamma} - C_{ii} \right),
\]
\(\Delta C_{ij} = 0\) for \(i \neq j\).
Conveniently, the complexity of applying this proximal operator is \(\Theta\left(d\right)\).
Furthermore, this proximal operator satisfies \cref{assumption:proximal_non_expansive}.
(See the recent work by \citet{domke_provable_2023} for more details.)

\paragraph{Proof Sketch}
The proof is based on the general analysis of proximal SGD in \cref{section:proximal_sgd_proofs}.
Specifically, we establish 
\begin{enumerate}
    \item \cref{assumption:convex_expected_smoothness} in \cref{section:convex_expected_smoothness} and
    \item \cref{assumption:bounded_gradient_variance} in \cref{section:quadratic_variance}.
\end{enumerate}
The key ingredients are in \cref{section:key_lemmas}.
\begin{enumerate}
    \item \cref{thm:jacobian_reparam} establishes the precise expression of the squared Jacobian of \(\mathcal{T}_{\vlambda}^n\).
    \item \cref{thm:gradnorm} connects the properties of the gradient variance with \(\mathcal{T}_{\vlambda}^n\) through its Jacobian.
    \item \cref{thm:location_scale_reparam} resolves the randomness by directly solving the expectations.
\end{enumerate}

\clearpage
\subsubsection{Key Lemmas}\label{section:key_lemmas}
\vspace{2ex}
\printProofs[jacobianreparam]
\clearpage
\printProofs[gradnorm]
\clearpage
\printProofs[locationscalereparam]
\begin{corollary}\label{thm:reparam_gradnorm}
Let the assumptions of \cref{thm:location_scale_reparam} hold. Then, 
\[
        \mathbb{E} 
        \left( 1 + {\textstyle{\textstyle\sum_j}} \delta_{n,j} \rvu_j^2 \right) {\lVert \mathcal{T}^n_{\vlambda}\left(\rvvu\right) 
        - \mathcal{T}^n_{\vlambda'}\left(\rvvu\right) \rVert}_2^2
        \leq
        \left(
      {\textstyle \sum_{j}}
      \delta_{n,j}
      +
      k_{\varphi}
      \right)
      {\lVert
        \vlambda - \vlambda^{'}
      \rVert}_2^2.
\]
\end{corollary}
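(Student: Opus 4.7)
The key observation is that the difference of two location-scale reparameterization functions is itself a location-scale reparameterization. Concretely, with \(\vlambda = (\vm, \mC)\) and \(\vlambda' = (\vm', \mC')\),
\[
    \mathcal{T}^{n}_{\vlambda}(\vu) - \mathcal{T}^{n}_{\vlambda'}(\vu)
    \;=\;
    (\mC_n - \mC'_n)\,\vu + (\vm_n - \vm'_n),
\]
which has exactly the same sparsity pattern (same indicators \(\delta_{n,j}\)) as \(\mathcal{T}^n_\vlambda\) since zero minus zero remains zero. The plan is therefore to apply \cref{thm:location_scale_reparam} to this ``difference'' reparameterization, evaluated at the reference point \(\bar{\vz}_n = \boldupright{0}\).

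Doing so, and identifying the new location as \(\vm_n - \vm'_n\) and the new scale as \(\mC_n - \mC'_n\), yields
\[
    \mathbb{E} \left( 1 + {\textstyle\sum_j} \delta_{n,j} \rvu_j^2 \right)
    \norm{\mathcal{T}^n_{\vlambda}(\rvvu) - \mathcal{T}^n_{\vlambda'}(\rvvu)}_2^2
    \;\leq\;
    \left({\textstyle\sum_j} \delta_{n,j} + 1\right) \norm{\vm_n - \vm'_n}_2^2
    +
    \left({\textstyle\sum_j} \delta_{n,j} + k_\varphi\right) \norm{\mC_n - \mC'_n}_\mathrm{F}^2.
\]
Next, I would absorb the two coefficients into a single one. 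By Jensen's inequality, \(k_\varphi = \mathbb{E}\rvu_i^4 \geq (\mathbb{E}\rvu_i^2)^2 = 1\) under \cref{assumption:symmetric_standard}, so \(\sum_j \delta_{n,j} + 1 \leq \sum_j \delta_{n,j} + k_\varphi\), giving
\[
    \leq
    \left({\textstyle\sum_j} \delta_{n,j} + k_\varphi\right)
    \left(\norm{\vm_n - \vm'_n}_2^2 + \norm{\mC_n - \mC'_n}_\mathrm{F}^2\right).
\]
Finally, since \(\vm_n\) is a subvector of \(\vm\) and \(\mC_n\) is a submatrix of \(\mC\), with \(\vlambda\) packaging \((\vm,\mC)\) into a Euclidean vector so that \(\norm{\vlambda - \vlambda'}_2^2 = \norm{\vm - \vm'}_2^2 + \norm{\mC - \mC'}_\mathrm{F}^2\), we have
\(
    \norm{\vm_n - \vm'_n}_2^2 + \norm{\mC_n - \mC'_n}_\mathrm{F}^2 \leq \norm{\vlambda - \vlambda'}_2^2,
\)
which closes the proof.

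There are no substantial obstacles: the only subtlety is the reuse of \cref{thm:location_scale_reparam} with a difference-of-parameters trick, which is essentially free since the structure indicators \(\delta_{n,j}\) depend only on which columns of \(\mC_n\) are zero — a property preserved under subtraction of two matrices with identical sparsity patterns. Verifying \(k_\varphi \geq 1\) is immediate from \cref{assumption:symmetric_standard}, and the submatrix-to-full-parameter bound is just dropping non-negative squared terms.
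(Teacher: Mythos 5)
Your proposal is correct and follows essentially the same route as the paper's own proof: exploit the linearity of \(\mathcal{T}^n\) to view the difference as a location-scale map with the same sparsity pattern, apply \cref{thm:location_scale_reparam} at \(\bar{\vz}_n = \boldupright{0}\), absorb the two coefficients using \(k_{\varphi} \geq 1\), and bound the sub-parameter norms by \(\norm{\vlambda - \vlambda'}_2^2\). No gaps.
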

\begin{proof}
    From the linearity of \(\mathcal{T}^n\), it follows that
    \[
       \mathcal{T}_{\vlambda - \vlambda'}^n\left(\rvvu\right)
       =
       \left( \mC_n - \mC_n' \right) \rvvu + (\vm_n - \vm_n'),
    \]
    where \(\vm_n, \mC_n\) are part of \(\vlambda\) and \(\vm_n', \mC_n'\) are part of \(\vlambda'\).
    Then, the result immediately follow from applying  \cref{thm:location_scale_reparam} with \(\vz = \boldupright{0}\) as
    \begin{align}
    \mathbb{E}
        \left(
        1 +  {\textstyle\sum_j} \delta_{n,j} \rvu_j^2
        \right)
        {\lVert
        \mathcal{T}_{\vlambda-\vlambda'}^{n}\left(\rvvu\right)
       \rVert}_2^2
       \nonumber
      &\leq
      \left(
      {\textstyle\sum_j}
        \delta_{n,j}
        +1
      \right)
      \norm{ 
        \vm_n - \vm_n'
      }_2^2
      +
      \left(
      {\textstyle\sum_{j}}
      \delta_{n,j}
      +
      k_{\varphi}
      \right)
      \norm{ 
        \mC_n - \mC_n'
      }_{\mathrm{F}}^2
      \nonumber
      \\
      &=
      \left(
        {\textstyle\sum_j} \delta_{n,j}
       + 1
      \right)
      \norm{ \vm_n - \vm_n'}_2^2
      +
      \left(
      {\textstyle\sum_{j}}
      \delta_{n,j}
      +
      k_{\varphi}
      \right)
      \norm{ \mC_n - \mC_n' }_{\mathrm{F}}^2,
      \nonumber
\shortintertext{since the kurtosis always satisfies \(k_{\varphi} \geq 1\),}
      &\leq
    \left(
      {\textstyle\sum_{j}}
      \delta_{n,j}
      +
      k_{\varphi}
      \right)
      \left(
      \norm{ \vm_n - \vm_n' }_2^2
      +
      \norm{ \mC_n - \mC_n' }_{\mathrm{F}}^2
      \right)
      \nonumber
\shortintertext{and since adding more components into a squared \(\ell_2\)-norm always results in an upper bound,}
      &\leq
      \left(
      {\textstyle\sum_{j}}
      \delta_{n,j}
      +
      k_{\varphi}
      \right)
      \left(
      \norm{ \vm - \vm' }_2^2
      +
      \norm{ \mC - \mC' }_{\mathrm{F}}^2
      \right)
      \nonumber
      \\
      &=
      \left(
      {\textstyle\sum_{j}}
      \delta_{n,j}
      +
      k_{\varphi}
      \right)
      \norm{
        \vlambda - \vlambda'
      }_2^2.
      \nonumber
\end{align}    
\end{proof}

\clearpage
\subsubsection{Gradient Variance Bound (\cref{thm:quadratic_gradient_variance})}\label{section:quadratic_variance}
\vspace{1ex}
\printProofs[optimalgradientvariance]

\clearpage
\subsubsection{Convex Expected Smoothness (\cref{thm:expected_smoothness})}\label{section:convex_expected_smoothness}
\vspace{1ex}
\printProofs[convexexpectedsmooth]


\clearpage
\subsubsection{Complexity with General Location-Scale Families (\cref{thm:proxsgd_bbvi_complexity})}\label{section:bbvi_complexity_proof}
\vspace{1ex}
\printProofs[complexitybbvifixed]



\clearpage
\subsection{Properties of the Non-Standardized Parameterization (\cref{thm:noncentered_param_convexity})}
\vspace{1ex}
\printProofs[nonstandardconvexity]

\newpage

\section{Probabilistic Models and Datasets}\label{section:models}
\subsection{Robust Poisson Regression}
We use a robustified version of Poisson regression for modeling count data.
This model is known as the Poisson-log-normal model~\citep[\S 4.2.4]{cameron_regression_2013}, which is a ``localized'' version of regular Poisson regression~\citep[\S 3.2]{wang_general_2018}. 
That is, an additional hierarchy is added to model the local variations of each datapoint.
A closely related model is negative binomial regression, which is obtained by setting a conjugate prior to the local noise.
Here, the noise is modeled to be log-normal, resulting in a non-conjugate likelihood:
{%
\begin{alignat*}{2}
  \eta_i &\sim \mathcal{N}\left(\vx_i \vbeta + \alpha, \sigma_{\eta}\right) &&\quad i = 1, \ldots, n \\
  y_i    &\sim \text{\textsf{Poisson}}\left( \operatorname{exp}\left(\eta_i\right) \right) && \quad i = 1, \ldots, n,
\end{alignat*}
}%
where \((\eta_i)_{i=1}^n\) are the local variables.
The global variables are given by the priors 
{%
\begin{alignat*}{2}
  \sigma_{\alpha} &\sim \text{\textsf{Student-t}}_+\left(4, 0, 1\right) \\
  \sigma_{\beta}  &\sim \text{\textsf{Student-t}}_+\left(4, 0, 1\right) \\
  \sigma_{\eta}   &\sim \text{\textsf{Student-t}}_+\left(4, 0, 1\right) \\
  \alpha &\sim \mathcal{N}\left(0, \sigma_{\alpha}\right) \\
  \vbeta &\sim \mathcal{N}\left(\boldupright{0}, \sigma_{\beta}\right).
\end{alignat*}
}%
We use the \textsf{rwm5yr} German health registry doctor visit dataset~\citep{hilbe_negative_2011} from the \texttt{COUNT} package in R~\citep{hilbe_count_2016}.

\subsection{Item Response Theory}
Item response theory (IRT) is a family of models for estimating the response of humans to a set of items, often in the form of exams or questionnaires~\citep{lord_statistical_2008}.
BBVI has recently been shown to be very successful in estimating human ability from large-scale educational examination datasets~\citep{wu_variational_2020}.
We employ the so-called two-parameter logistic model, or ``2PL'' model, for which the likelihood is given as
{%
\begin{alignat*}{2}
  \mathrm{logit}_i &= \gamma_{\text{item}_i} \alpha_{\text{student}_i} +  \beta_{\text{item}_i} + \mu_{\beta}  \\
  y_i &\sim \text{\textsf{Bernoulli-logit}}\left(\mathrm{logit}_i\right) &&\quad i = 1, \ldots, n.
\end{alignat*}
}%
The global variables are given as
{%
\begin{alignat*}{2}
  \mu_{\beta}     &\sim \text{\textsf{Student-t}}\left(4, 0, 1\right) \\
  \sigma_{\beta}  &\sim \text{\textsf{Student-t}}_+\left(4, 0, 1\right) \\
  \sigma_{\gamma} &\sim \text{\textsf{Student-t}}_+\left(4, 0, 1\right) \\
  \gamma_k &\sim \text{\textsf{log-normal}}\left(0, \sigma_{\gamma}\right) &&\quad k = 1, \ldots, K,
\end{alignat*}
}%
while the local variables are
{%
\begin{alignat*}{2}
  \alpha_j &\sim \mathcal{N}\left(0, 1\right) && \quad j = 1, \ldots, J.
\end{alignat*}
}%
The log-normal prior on \(\gamma\) is inspired by \citet{patz_straightforward_1999}.

While here we only consider scaling with respect to the students, we can also consider scaling with respect to the number of items by also making \(q\left(\vbeta\right)\) and \(q\left(\vgamma\right)\) factor out.
While this is less important for our dataset of choice, which has a small \(K\), this is certainly attractive to other datasets where both \(K\) and \(J\) are large.

For the dataset, we take \textsf{CritLangAcq} from~\citet{wu_variational_2020}.
Unfortunately, this dataset is too large to fit in memory even for the mean-field approximation.
Therefore, we only use a 5\% random subset of the full dataset.
Scaling to the full dataset will require additional strategies amortization~\citep{kingma_autoencoding_2014,dayan_helmholtz_1995} as done in the original work by \citeauthor{wu_variational_2020}.

\newpage
\subsection{Multivariate Stochastic Volatility}
Multivariate stochastic volatility~\citep{chib_multivariate_2009}
The likelihood is given as
{%
\begin{align*}
  \vy_1 &\sim \mathcal{N}\left(\vmu, \mQ\right) \\
  \vy_t &\sim \mathcal{N}\left(\vmu + \vphi \left(\vy_{t-1} - \vmu\right), \mQ\right) \\
  \vx_t &\sim \mathcal{N}\left(0, \exp\left( \vy_t/2\right)\right),
\end{align*}
}%
where \({(\vy_t)}_{t=1}^T\), the latent stochastic volatilities, are the local variables.
For the hyperpriors, we develop a fully Bayesian variant of the model used by \citet[\S 5]{naesseth_variational_2018} who perform empirical Bayes inference on the hyperparameters.
Notably, following modern Bayesian modeling practice~\citep{gelman_bayesian_2020}, we assign a Cauchy-LKJ prior to the covariance \(\mQ\).
The global variables are given as
\begin{align*}
  \mL_{\mSigma} &\sim \text{\textsf{LKJ-Cholesky}}\left(d_{\vy}, 1\right) \\
  \vtau &\sim \text{\textsf{Cauchy}}_+\left(0, 5\right) \\
  \mL_{\mQ} &= \mathrm{diag}\left(\vtau\right) \mL_{\mSigma} \\
  \mQ       &= \mL_{\mQ}\mL_{\mQ}^{\top} \\
  \vmu  &\sim \text{\textsf{Cauchy}}\left(0, 10\right) \\
  \vphi &\sim \text{\textsf{uniform}}\left(-1, 1\right),
\end{align*}
where all vector operations are elementwise.

For the datasets, we use the exchange rate (``FX'') between 6 international currencies and the U.S. dollar.
In particular, we use the daily closing exchange rate of \textsf{EUR, JPY, GBP, AUD, CAD}, and \textsf{KRW} over the period from 2006-05-16 to 2023-08-30.
For the subsets, we deterministically slice a continuous period starting from 2006-05-16.

\newpage

\section{Additional Experimental Results}\label{section:additional_results}
\vspace{-1ex}
This section shows additional plots for the experimental results displayed in \cref{section:realworld}.
In particular, we show convergence plots with respect to the number of iterations for a fixed stepsize.
Note that all methods use the same number of gradient evaluations per iteration.
Therefore, ``iteration'' is synonymous with ``number of gradient queries.'' 

\vspace{-1ex}
\subsection{Results on \textsf{rpoisson}}
\vspace{-1ex}
    
\begin{figure}[H]
    \centering
    \subfloat[\(T = 50\mathrm{k}\)]{
      \hspace{-1em}
      \includegraphics[scale=0.9]{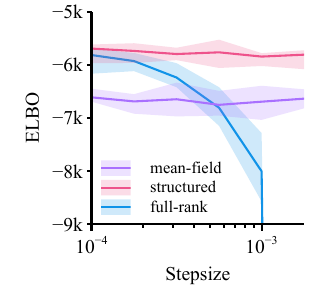}
    }
    \subfloat[\(\gamma = 10^{-4}\)]{
      \includegraphics[scale=0.9]{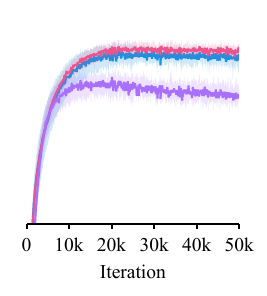}
    }
    \subfloat[\(\gamma = 10^{-3.5}\)]{
      \includegraphics[scale=0.9]{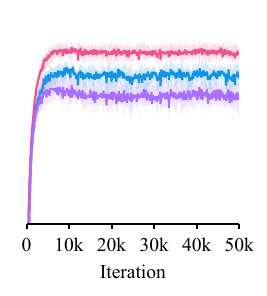}
    }
    \subfloat[\(\gamma = 10^{-3}\)]{
      \includegraphics[scale=0.9]{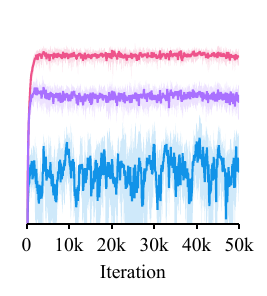}
    }
    \vspace{-1ex}
    \caption{
    \textbf{ELBO versus stepsize on \textsf{rpoisson-small}}
    The solid lines are the median, while the shaded regions are the 80\% quantiles computed from 4 independent replications.
    Notice that the performance gap between \textbf{full-rank} and \textbf{structured} becomes narrower as we reduce the stepsize.
    }
    \vspace{-2ex}
\end{figure}

\begin{figure}[H]
    \centering
    \subfloat[\(T = 50\mathrm{k}\)]{
      \hspace{-1em}
      \includegraphics[scale=0.90]{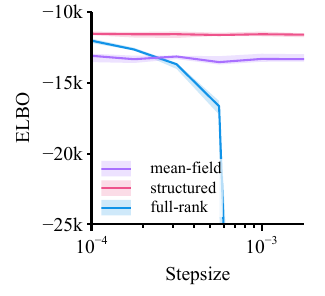}
    }
    \subfloat[\(\gamma = 10^{-4}\)]{
      \includegraphics[scale=0.90]{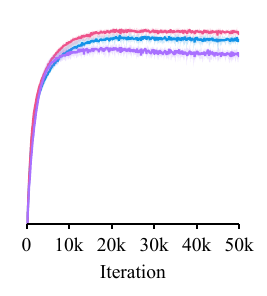}
    }
    \subfloat[\(\gamma = 10^{-3.5}\)]{
      \includegraphics[scale=0.90]{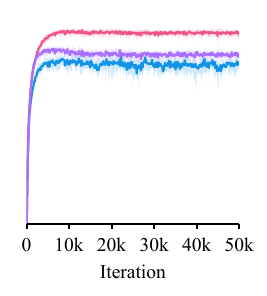}
    }
    \subfloat[\(\gamma = 10^{-3}\)]{
      \includegraphics[scale=0.90]{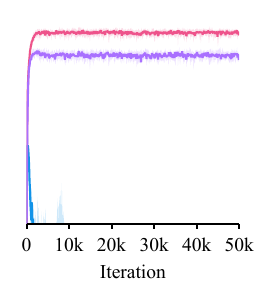}
    }
    \vspace{-1ex}
    \caption{
    \textbf{ELBO versus stepsize on \textsf{rpoisson-middle}}
    The solid lines are the median, while the shaded regions are the 80\% quantiles computed from 4 independent replications.
    Notice that the performance gap between \textbf{full-rank} and \textbf{structured} becomes narrower as we reduce the stepsize.
    }
    \vspace{-2ex}
\end{figure}

\begin{figure}[H]
    \centering
    \subfloat[\(T = 50\mathrm{k}\)]{
      \hspace{-1em}
      \includegraphics[scale=0.90]{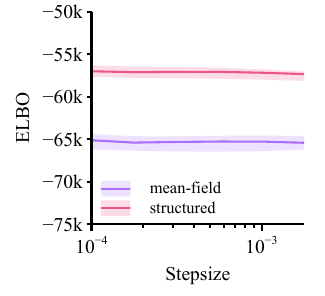}
    }
    \subfloat[\(\gamma = 10^{-4}\)]{
      \includegraphics[scale=0.90]{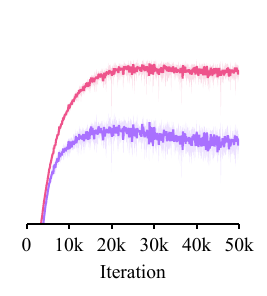}
    }
    \subfloat[\(\gamma = 10^{-3.5}\)]{
      \includegraphics[scale=0.90]{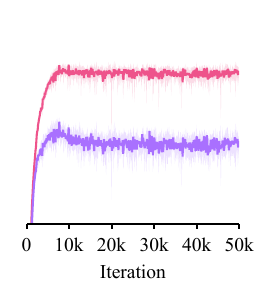}
    }
    \subfloat[\(\gamma = 10^{-3}\)]{
      \includegraphics[scale=0.90]{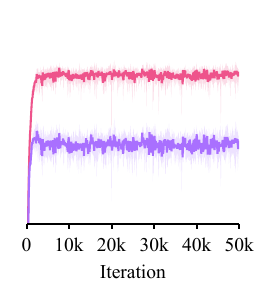}
    }
    \vspace{-1ex}
    \caption{
    \textbf{ELBO versus stepsize on \textsf{rpoisson-large)}}
    \textbf{Full-rank} is omitted as it didn't fit in memory.
    The solid lines are the median, while the shaded regions are the 80\% quantiles computed from 4 independent replications.
    }
\end{figure}

\subsection{Results on \textsf{volatility}}

\begin{figure}[H]
    \centering
    \subfloat[\(T = 50\mathrm{k}\)]{
      \hspace{-1em}
      \includegraphics[scale=0.90]{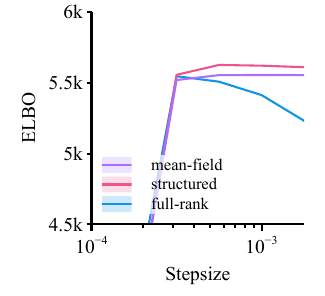}
    }
    \subfloat[\(\gamma = 10^{-3.5}\)]{
      \includegraphics[scale=0.90]{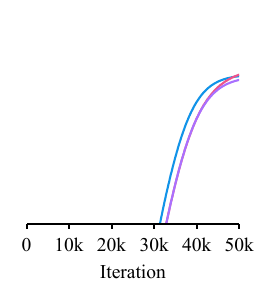}
    }
    \subfloat[\(\gamma = 10^{-3}\)]{
      \includegraphics[scale=0.90]{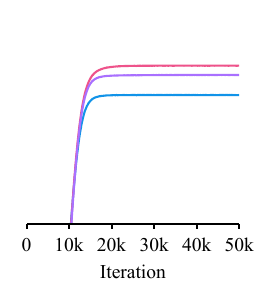}
    }
    \subfloat[\(\gamma = 10^{-2.75}\)]{
      \includegraphics[scale=0.90]{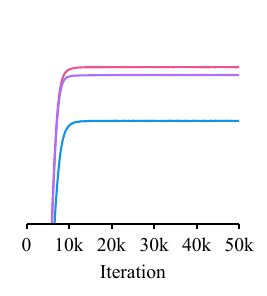}
    }
    \caption{
    \textbf{ELBO versus stepsize on \textsf{volatility-small}}
    The solid lines are the median, while the shaded regions are the 80\% quantiles computed from 4 independent replications.
    }
    \vspace{-2ex}
\end{figure}

\begin{figure}[H]
    \centering
    \subfloat[\(T = 50\mathrm{k}\)]{
      \hspace{-1em}
      \includegraphics[scale=0.90]{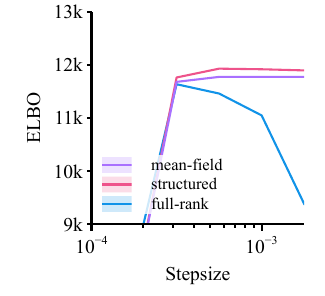}
    }
    \subfloat[\(\gamma = 10^{-3.5}\)]{
      \includegraphics[scale=0.90]{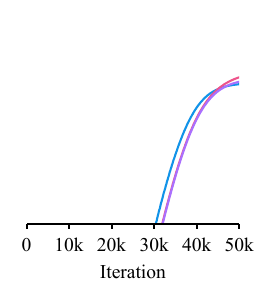}
    }
    \subfloat[\(\gamma = 10^{-3}\)]{
      \includegraphics[scale=0.90]{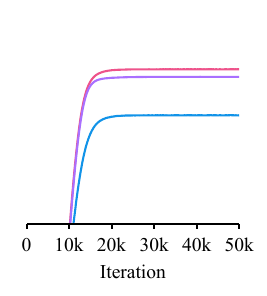}
    }
    \subfloat[\(\gamma = 10^{-2.75}\)]{
      \includegraphics[scale=0.90]{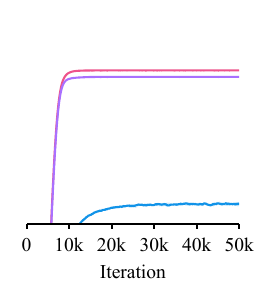}
    }
    \caption{
    \textbf{ELBO versus stepsize on \textsf{volatility-middle}}
    The solid lines are the median, while the shaded regions are the 80\% quantiles computed from 4 independent replications.
    }
    \vspace{-2ex}
\end{figure}

\begin{figure}[H]
    \centering
    \subfloat[\(T = 50\mathrm{k}\)]{
      \hspace{-1em}
      \includegraphics[scale=0.90]{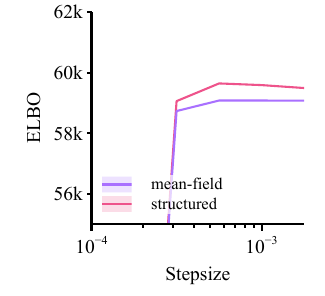}
    }
    \subfloat[\(\gamma = 10^{-3.5}\)]{
      \includegraphics[scale=0.90]{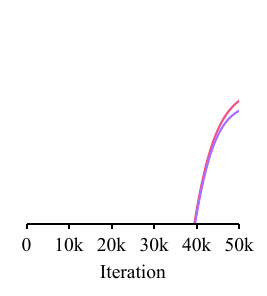}
    }
    \subfloat[\(\gamma = 10^{-3}\)]{
      \includegraphics[scale=0.90]{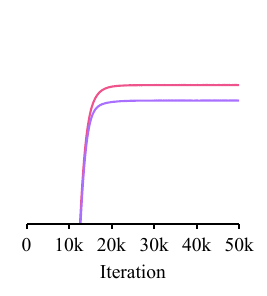}
    }
    \subfloat[\(\gamma = 10^{-2.75}\)]{
      \includegraphics[scale=0.90]{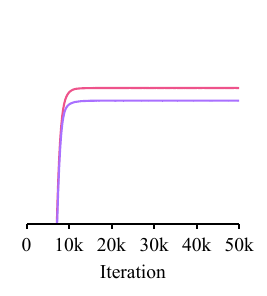}
    }
    \caption{
    \textbf{ELBO versus stepsize on \textsf{volatility-large}.}
    Full-rank is omitted as it didn't fit in memory.
    The solid lines are the median, while the shaded regions are the 80\% quantiles computed from 4 independent replications.
    }
\end{figure}

\subsection{Results on \textsf{irt}}

\begin{figure}[H]
    \centering
    \subfloat[\(T = 50\mathrm{k}\)]{
      \hspace{-1em}
      \includegraphics[scale=0.90]{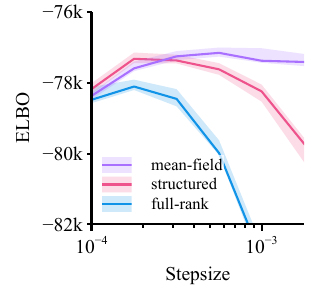}
    }
    \subfloat[\(\gamma = 10^{-4}\)]{
      \includegraphics[scale=0.90]{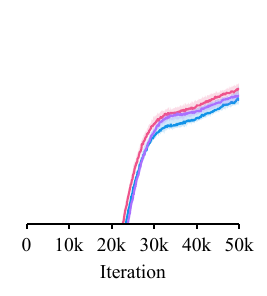}
    }
    \subfloat[\(\gamma = 10^{-3.5}\)]{
      \includegraphics[scale=0.90]{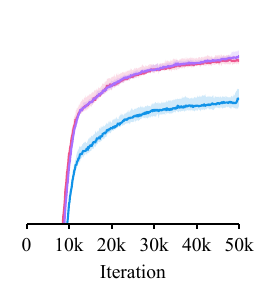}
    }
    \subfloat[\(\gamma = 10^{-3}\)]{
      \includegraphics[scale=0.90]{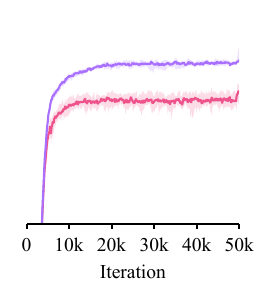}
    }
    \caption{
    \textbf{ELBO versus stepsize on \textsf{irt-small}.}
    The solid lines are the median, while the shaded regions are the 80\% quantiles computed from 4 independent replications.
    Notice that \textbf{structured} performs worse than mean-field at the largest stepsize (\(\gamma = 10^{-3}\)), but becomes comparable as we reduce the stepsize.
    }
    \vspace{-2ex}
\end{figure}

\begin{figure}[H]
    \centering
    \subfloat[\(T = 50\mathrm{k}\)]{
      \hspace{-1em}
      \includegraphics[scale=0.90]{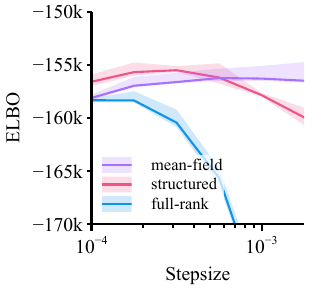}
    }
    \subfloat[\(\gamma = 10^{-4}\)]{
      \includegraphics[scale=0.90]{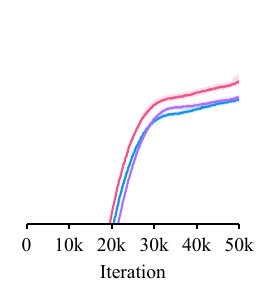}
    }
    \subfloat[\(\gamma = 10^{-3.5}\)]{
      \includegraphics[scale=0.90]{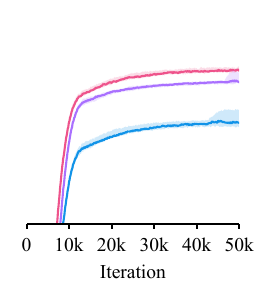}
    }
    \subfloat[\(\gamma = 10^{-3}\)]{
      \includegraphics[scale=0.90]{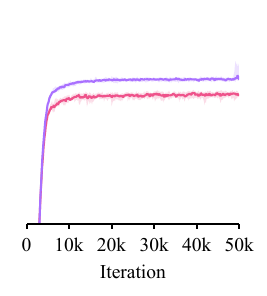}
    }
    \caption{
    \textbf{ELBO versus stepsize on \textsf{irt-middle}.}
    The solid lines are the median, while the shaded regions are the 80\% quantiles computed from 4 independent replications.
    Notice that \textbf{structured} performs slightly worse than \textbf{mean-field} at the largest stepsize (\(\gamma = 10^{-3}\)), but becomes superior as we reduce the stepsize.
    }
    \vspace{-2ex}
\end{figure}

\begin{figure}[H]
    \centering
    \subfloat[\(T = 50\mathrm{k}\)]{
      \hspace{-1em}
      \includegraphics[scale=0.90]{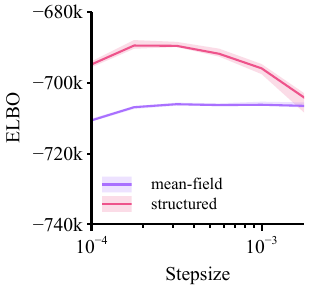}
    }
    \subfloat[\(\gamma = 10^{-4}\)]{
      \includegraphics[scale=0.90]{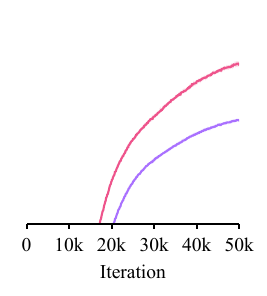}
    }
    \subfloat[\(\gamma = 10^{-3.5}\)]{
      \includegraphics[scale=0.90]{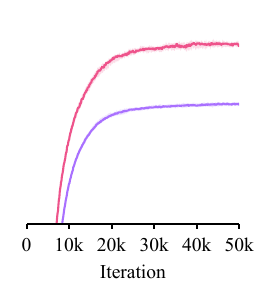}
    }
    \subfloat[\(\gamma = 10^{-3}\)]{
      \includegraphics[scale=0.90]{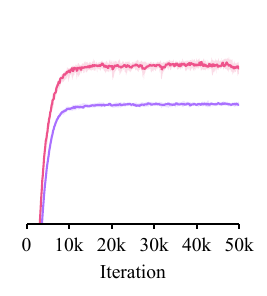}
    }
    \caption{
    \textbf{ELBO versus stepsize on \textsf{irt-large}}.
    Full-rank is omitted as it didn't fit in memory.
    The solid lines are the median, while the shaded regions are the 80\% quantiles computed from 4 independent replications.
    }
\end{figure}

\end{document}